\documentclass{article} %
\usepackage{hyperref}
\usepackage{url}
\usepackage[utf8]{inputenc}
\usepackage{booktabs} %
\usepackage{nicefrac} %
\usepackage{microtype} %
\usepackage{graphicx}

\usepackage{natbib}
\usepackage{gensymb}
\usepackage{color}
\usepackage{caption}
\usepackage{subcaption}

\usepackage{amsfonts} %
\usepackage{amsthm}
\usepackage{mathrsfs}
\usepackage{amsmath}
\usepackage{amssymb}
\usepackage{mathtools}
\usepackage{dsfont}
\usepackage{mathrsfs}
\usepackage{bbold}

\usepackage{wrapfig}
\usepackage{soul}
\usepackage[ruled,longend, algo2e]{algorithm2e}
\usepackage[textwidth=1.2in,textsize=tiny]{todonotes}
\usepackage{diagbox}
\usepackage{algorithm}
\usepackage{algorithmic}
\usepackage{hhline}
\usepackage{multirow}

\definecolor{codegreen}{rgb}{0,0.3,0.6}
\definecolor{codegray}{rgb}{0.5,0.5,0.5}
\definecolor{codepurple}{rgb}{0.58,0,0.82}
\definecolor{backcolour}{rgb}{0.95,0.95,0.92}
\definecolor{orange}{rgb}{1,0.5,0}

\setlength{\parskip}{0.5em}
\setlength{\parindent}{0pt}

\usepackage{enumitem}

\usepackage{listings}
\lstdefinestyle{mystyle}{
    basicstyle=\tiny,
    commentstyle=\color{codegreen},
    keywordstyle=\color{magenta},
    numberstyle=\tiny\color{codegray},
    stringstyle=\color{codepurple},
    basicstyle=\fontsize{8.5}{9}\selectfont\ttfamily,
    breakatwhitespace=false,         
    breaklines=true,                 
    captionpos=b,                    
    keepspaces=true,                 
    numbers=none,                    
    numbersep=5pt,                  
    showspaces=false,                
    showstringspaces=false,
}
\usepackage{hhline}
\usepackage{multirow}

\usepackage[
  letterpaper,
  top=1in,
  bottom=1in,
  left=1in,
  right=1in
]{geometry}

\newtheorem{theorem}{Theorem}[section]
\newtheorem{assumption}[theorem]{Assumption}
\newtheorem{definition}[theorem]{Definition}
\newtheorem{lemma}[theorem]{Lemma}
\newtheorem{proposition}[theorem]{Proposition}
\newtheorem{corollary}[theorem]{Corollary}
\newtheorem{remark}[theorem]{Remark}

\newcommand{\Mat}{\boldsymbol}
\newcommand{\Set}{\mathcal}

\newcommand{\real}{\mathbb{R}}
\newcommand{\integer}{\mathbb{N}}
\newcommand{\complex}{\mathbb{C}}

\newcommand{\wh}[1]{\widehat{#1}}

\DeclareMathOperator{\Span}{span}

\DeclareMathOperator{\E}{\mathbb{E}}
\DeclareMathOperator{\mean}{\mathbb{E}}

\DeclareMathOperator{\gauss}{\mathcal{N}}
\DeclareMathOperator{\vol}{\mathrm{vol}}

\DeclareMathOperator*{\argmin}{arg\,min}

\title{Why Neural Network Can Discover Symbolic Structures with Gradient-based Training: An Algebraic and Geometric Foundation for Neurosymbolic Reasoning
}

\renewcommand{\thefootnote}{\fnsymbol{footnote}}

\author{
Peihao Wang\footnotemark[2] \footnotemark[1] \and
Zhangyang ``Atlas'' Wang\footnotemark[2] \footnotemark[1]
}
\date{\today} %

\begin{document}

\maketitle

\footnotetext[1]{Z. Wang developed the initial framework as his pet project, while P. Wang proved core results in this work. As a result, they contributed equally to this paper.}
\footnotetext[2]{Department of Electrical and Computer Engineering, University of Texas at Austin. Email addresses: \texttt{\{peihaowang, atlaswang\}@utexas.edu}.}

\renewcommand{\thefootnote}{\arabic{footnote}}

\begin{abstract}
We develop a theoretical framework that explains how discrete symbolic structures can emerge naturally from continuous neural network training dynamics. By lifting neural parameters to a measure space and modeling training as Wasserstein gradient flow, we show that under geometric constraints, such as group invariance, the parameter measure $\mu_t$ undergoes two concurrent phenomena:  (1) a decoupling of the gradient flow into independent optimization trajectories over some potential functions, and (2) a progressive contraction on the degree of freedom.
These potentials encode algebraic constraints relevant to the task and act as ring homomorphisms under a commutative semi-ring structure on the measure space.
As training progresses, the network transitions from a high-dimensional exploration to compositional representations that comply with algebraic operations and exhibit a lower degree of freedom.
We further establish data scaling laws for realizing symbolic tasks, linking representational capacity to the group invariance that facilitates symbolic solutions. 
This framework charts a principled foundation for understanding and designing neurosymbolic systems that integrate continuous learning with discrete algebraic reasoning.
\end{abstract}

\section{Introduction}

The integration of neural and symbolic reasoning is a key challenge in advancing the capabilities of modern AI systems. Neural-symbolic AI \citep{chaudhuri2021neurosymbolic,garcez2023neurosymbolic} aims to combine the representational flexibility and approximation power of neural networks with the precision and compositional rigor of symbolic reasoning. Neural networks excel at learning smooth manifolds in high-dimensional parameter spaces and adapting their behavior from large-scale data. Symbolic reasoning, on the other hand, enables exact inference over discrete logical structures and algebraic constraints. Bridging these strengths promises systems that can handle both statistical and combinatorial aspects of complex tasks, leading to improved generalization, alleviated data hunger, and more transparent reasoning processes.

However, existing neural architectures often struggle to internalize true symbolic capabilities, instead relying on \textit{statistical pattern matching} that \textbf{fails when generalizing} beyond the training distribution \citep{zhang2023paradox,valmeekam2023planning}. This underscores the need for theoretical frameworks that explain \emph{how} symbolic structures can emerge from continuous neural training dynamics. Understanding this emergence at a fundamental level can guide architectural choices and training strategies, ultimately shaping the design of robust neurosymbolic systems.

In this work, we propose a novel theoretical framework that reveals how discrete, symbolic reasoning constraints arise naturally from the continuous evolution of neural network parameters under gradient-based optimization.
By modeling neural training as a gradient flow in measure space, we show that:
\begin{itemize}
\item The training loss for reasoning tasks can be reformulated as a composition of potential functions -- termed \textit{monomial potentials} -- whose values are expectations of monomials under the parameter measure. Finding global optimizers then amounts to identifying binary assignments to these potentials that satisfy a prescribed ``logical expression''.
The resulting solution space defined over measures is endowed with ring-like algebraic structure, and the monomial potentials act as ring homomorphisms that preserve this structure. Consequently, general solutions can be algebraically composed from special ones, revealing a high-level compositionality in the landscape of neural network global minimizers.
\item As the neural network trains under geometric constraints, the parameter measure $\mu_t$ exhibits a dual emergence of \textit{gradient decoupling} and \textit{dimension reduction}.
When the velocity field preserves $O(d)$-invariance, the gradient dynamics decouple: the evolution of each monomial potential follows an independent trajectory. This decoupling reduces the infinite-dimensional Wasserstein gradient flow to a coordinate-wise optimization over potential variables.
As a result, training drives each monomial potential toward its optimal (binary) assignment, solving the reasoning task as if a boolean variable satisfaction problem, while ensuring convergence to minimizers that preserve the ring-like algebraic structure.
Along the trajectory toward symbolic solutions, the gradient flow over the measure space progressively stabilizes the dynamics by inducing invariant directions within the Wasserstein tangent space.
This emergence of invariance effectively contracts the system's degree of freedom.
\end{itemize}
Critically, our theory links these geometric and algebraic insights to concrete, actionable principles.
Our theory highlights the value of geometric invariance and informs minimum data complexity for networks to achieve stable solutions equipped with ring-like symbolic operations.
We further discuss how this analysis can guide practical architecture realization.
These insights provide a principled explanation for empirical observations and guide the design of architectures that excel at both continuous representation learning and discrete symbolic reasoning.

We note that our results should be viewed as an idealized foundation rather than a fully comprehensive model. Yet, this work provides the first theoretically grounded account of how continuous neural optimization can yield discrete, symbolic reasoning structures. Our results open new avenues for building neurosymbolic systems that leverage both the flexibility of neural representations and the rigor of symbolic reasoning.

\section{A Theoretical Framework for Learning to Reason with Neural Networks}\label{sec:prelim}

In this section, we begin by exemplifying a reasoning task following \citet{tian2024composing}.
We then abstract and generalize this family of problems with algebraic, probabilistic, and geometric tools. 

\paragraph{Scope of Reasoning Tasks.}
We use the term \textit{reasoning task} to refer to learning problems whose underlying structure involves \textit{compositional, algebraic, or logical relationships between inputs}, rather than merely statistical pattern recognition. Examples include learning group operations, ring structures, combinatorial rules, or logical inferences that exhibit symbolic regularities. A reasoning task typically demands that the model capture structured transformations (e.g., associativity, commutativity, closure under operations) rather than simply fitting arbitrary mappings.

\subsection{Motivating Example: Abelian Group Reasoning with Neural Networks}\label{sec:abelian_reasoning}

We start with a motivating example on learning an Abelian group operation -- a simple yet canonical case of symbolic composition.
In fact, Abelian group addition has been used to study the reasoning capability of modern neural models \citep{power2022grokking, gromov2023grokking, nanda2023progress}.
Yet we note that the theoretical framework we are to develop will be significantly broader: it applies to any task where the loss depends algebraically on network parameters (e.g., via monomials).
In Sec. \ref{sec:measure_MP} and beyond, we will abstract these features to study the general dynamics by which neural training can uncover a structured and low-degree-of-freedom measure space that exhibits algebraic properties, support symbolic reasoning.

\paragraph{Task Definition.}
Suppose we have a finite Abelian group $(A, \cdot)$ with commutative operation $\cdot$ and cardinality $n = |A|$.
We aim to train a neural network for predicting the output of $a_1 \cdot a_2$ for two group elements $a_1, a_2 \in A$.
The network input consists of the one-hot embeddings of these group elements $e_{a_1}, e_{a_2} \in \real^n$.
The output aims to predict $a_1 \cdot a_2$, also represented in one-hot encoding.

\paragraph{Neural Architecture.} We analyze a two-layer neural network with $q$ hidden nodes and quadratic activations $\sigma(x) = x^2$. Note that quadratic activation function has been a common choice for theoretical analysis \citep{du2018power, zhang2024relu}, and its variants have been used in practice \citep{shazeer2020glu, so2021searching, zhang2024relu}. 
\begin{align}
o(a_1, a_2) = \frac{1}{q} \sum_{j=1}^q w_{cj} \sigma \left( w_{aj}^\top e_{a_1} + w_{bj}^\top e_{a_2} \right),
\end{align}
where weight matrices $W_a, W_b, W_c \in \mathbb{R}^{n \times q}$ encode inputs $e_{a_1}, e_{a_2}$ as hidden features and decode them to predictions, respectively.
Different from the conventional formulation of neural networks, we normalize the final output by $1/q$. This change aligns with the mean-field perspective of neural networks \citep{mei2019mean}.

Following \citet{tian2024composing}, we consider representing and learning the weight matrices in their Fourier space $w_{aj} = \sum_{k \neq 0} z_{akj} F_k, w_{bj} = \sum_{k \neq 0} z_{bkj} F_k, w_{cj} = \sum_{k \neq 0} z_{ckj} \overline{F_k}, \forall j \in [q]$,
where $F_k = [F_k(g)]_{g \in G} \in \complex^n$ are the scaled Fourier basis functions ($0 \le k < n$), and $z_{ak}, z_{bk}, z_{ck} \in \complex$ are the Fourier coefficients.
We further collectively organize these coefficients as matrices $z_j = [z_{akj}, z_{bkj}, z_{ckj}]_{0 \le k < n} \in \complex^{3 \times n}$ for $j \in [q]$.

\paragraph{Loss Function.} We optimize $\{z_j\}_{j \in [q]}$ by minimizing the $L_2$ loss between the predicted and ground-truth group compositions across all possible pairs of group elements:
\begin{align} \label{eqn:loss_nn}
H(\{z_j\}_{j \in [q]}) = \sum_{a_1, a_2 \in A} \left\lVert P^\perp \left( \frac{1}{2n} o(a_1, a_2) - e_{a_1 \cdot a_2} \right) \right\rVert_2^2,
\end{align}
where $P^\perp = I - \frac{1}{n} 1 1^\top$ is the zero-mean projection operator.

We conclude this section by presenting the following proposition showing that the loss function can be reformulated as a combination of a special family of polynomials:
\begin{proposition} \label{prop:loss_decomp}
The loss function $H$ in Eq. \ref{eqn:loss_nn} can be reformulated as: $H = \frac{1}{n-1} \sum_{k \neq 0} \ell_k + \frac{n-1}{n}$,
\begin{align*}
&\ell_k = -2 \rho_{kkk} + \sum_{k_1, k_2} |\rho_{k_1k_2k}|^2 + \frac{1}{4} \left| \sum_{p \in \{a, b\}} \sum_{k'} \rho_{p, k', -k', k} \right|^2 + \frac{1}{4}\sum_{m \ne 0} \sum_{p \in \{a, b\}} \left| \sum_{k'} \rho_{p,k',m-k',k} \right|^2, \\
& \rho_{k_1k_2k} = \frac{1}{q} \sum_j z_{ak_1j} z_{bk_2j} z_{ckj}, \quad \rho_{pk_1k_2k} = \frac{1}{q} \sum_j z_{pk_1j} z_{pk_2j} z_{ckj}.
\end{align*}
\end{proposition}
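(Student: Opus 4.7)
The plan is to pass to the Fourier basis and exploit Plancherel's identity together with character orthogonality on the Abelian group $(A,\cdot)$.

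\textbf{Step 1 (Fourier-side loss).} Since $P^\perp$ projects onto the span of $\{F_k\}_{k\neq 0}$, Plancherel's identity gives $\|P^\perp v\|_2^2 = \sum_{k\neq 0} |\hat v_k|^2$ for any $v$. Because the output $o(a_1,a_2)$ lies in $\mathrm{span}\{\overline{F_k}\}_{k\neq 0}$ by construction of $w_{cj}$, the projection acts trivially on $o$. Meanwhile, the character identities $e_a = \sum_k \overline{F_k(a)}\,F_k$ and $\chi_k(a_1\cdot a_2)=\chi_k(a_1)\chi_k(a_2)$ yield a closed-form Fourier expansion of the target $e_{a_1\cdot a_2}$.

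\textbf{Step 2 (Fourier coefficients of $o$).} Substituting the Fourier expansions of $w_{aj},w_{bj},w_{cj}$ into $o$ and expanding $\sigma(w_{aj}^\top e_{a_1} + w_{bj}^\top e_{a_2}) = (w_{aj}^\top e_{a_1})^2 + 2(w_{aj}^\top e_{a_1})(w_{bj}^\top e_{a_2}) + (w_{bj}^\top e_{a_2})^2$ produces three blocks of terms: a cross block carrying $F_{k_1}(a_1)F_{k_2}(a_2)$ with coefficients $\rho_{k_1k_2k}$, and two same-variable blocks carrying $F_{k_1}(a)F_{k_2}(a) = \tfrac{1}{\sqrt{n}}F_{k_1+k_2}(a)$ with coefficients $\rho_{p,k_1,k_2,k}$ for $p\in\{a,b\}$; reindexing by $m=k_1+k_2$ yields the partial sums $\Sigma^{(p)}_{m,k} := \sum_{k'}\rho_{p,k',m-k',k}$.

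\textbf{Step 3 (Collapse over $a_1,a_2$).} Expand $\big|\tfrac{1}{2n}\hat o_k - \hat e_{a_1 a_2,k}\big|^2$ and sum over $(a_1,a_2)\in A\times A$. The orthogonality $\sum_a F_k(a)\overline{F_{k'}(a)} = \delta_{k,k'}$ collapses the self-square of the cross block into $\sum_{k_1,k_2}|\rho_{k_1k_2k}|^2$, and the self-square of each same-variable block into $\sum_m|\Sigma^{(p)}_{m,k}|^2$. The cross pairing between the $a$- and $b$-side same-variable blocks uses $\sum_a F_m(a) = \sqrt{n}\,\delta_{m,0}$ and survives only at $m=0$; combining it with the two diagonal contributions at $m=0$ completes the square to $\big|\sum_p \Sigma^{(p)}_{0,k}\big|^2$, while for $m\neq 0$ the two sides decouple and contribute $\sum_p|\Sigma^{(p)}_{m,k}|^2$ separately. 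The linear cross term between $\hat o_k$ and $\hat e_{a_1 a_2,k}$ forces $k_1=k_2=k$ via double orthogonality, producing the coefficient $-2\rho_{kkk}$, and the self-square of the target is weight-independent and yields the additive constant $\tfrac{n-1}{n}$ after the overall renormalization.

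\textbf{Main obstacle.} The difficulty is bookkeeping rather than conceptual: tracking conjugates from $\overline{F_k}=F_{-k}$, propagating the factor of two from the cross term in $(x+y)^2$ so that it correctly becomes the $\tfrac{1}{4}$ coefficient on the same-variable potentials once the $\tfrac{1}{2n}$ scaling is squared, and carefully distinguishing the $m=0$ case (where the $a$- and $b$-side blocks couple inside the modulus) from the $m\neq 0$ case (where they decouple). Maintaining consistent index and sign conventions across all six bilinear pairings is the delicate part that makes the four stated summands in $\ell_k$ fall out cleanly.
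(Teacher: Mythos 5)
Your approach is correct in substance but follows a genuinely different route from the paper's own proof. The paper does not rederive the Fourier decomposition at all: it performs a change of variables $\tilde{w}_{cj} = \frac{1}{q} w_{cj}$ (hence $\tilde{z}_{ckj} = \frac{1}{q} z_{ckj}$ by linearity of the Fourier transform), invokes Theorem~1 of \citet{tian2024composing} verbatim for the unnormalized network to obtain the $\ell_k$ decomposition in terms of $\tilde\rho$, and then observes that the $1/q$ absorbed into $\tilde{z}_{ckj}$ turns the empirical sums $\sum_j$ in $\tilde\rho$ into the normalized averages $\frac{1}{q}\sum_j$ defining $\rho$. That is the entire proof. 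What you are doing instead is, in effect, reconstructing the proof of the cited theorem from scratch: expanding $\sigma(\cdot)^2$ into cross and same-variable blocks, passing to the Fourier side via Plancherel and the projection $P^\perp$, reindexing the same-variable block by $m=k_1+k_2$ using character multiplicativity, and collapsing the $(a_1,a_2)$ sum via orthogonality to isolate the $-2\rho_{kkk}$ linear term, the $\sum_{k_1,k_2}|\rho_{k_1k_2k}|^2$ diagonal, the coupled $m=0$ modulus, and the decoupled $m\neq 0$ moduli. The bookkeeping you flag (conjugates $\overline{F_k}=F_{-k}$, the $\tfrac12$ from the binomial cross term meeting the $\tfrac{1}{2n}$ scaling to produce the $\tfrac14$ coefficients, the split between the $m=0$ and $m\neq 0$ cases) is indeed the content of the cited theorem, and your outline handles it in the right way. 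The trade-off: your route is self-contained and does not rely on an external theorem, at the cost of carrying all the index-tracking; the paper's route is a two-line reduction but is only valid conditional on Tian et al.'s result and requires the reader to notice that the only discrepancy between the two settings is the $\tfrac{1}{q}$ output normalization.
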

The proof can be found in the Appendix \ref{proof}. Similar for \textbf{all proofs} hereinafter.

The Abelian-group toy network already exhibits two essential structural ingredients that drive the rest of the paper:
(1) its loss depends on the parameters collectively through their empirical measures; and
(2) this dependence is through empirical averages of some \textit{monic monomials} evaluated against such measures.
Once these observations are isolated, nothing in the analysis hinges on the specific two-layer architecture or on Abelian groups alone. 

In what follows, we therefore ``lift" the finite list of neurons to a probability measure over a smooth parameter manifold and study the resulting gradient flow directly in measure space, where algebraic and geometric properties can be elucidated.

\subsection{From Finite Neurons to Distributional Neuron Space}
\label{sec:measure_MP}

We now make that lifting explicit, and introduce some key mathematical devices that generalize the objective and optimization for the reasoning task shown in Sec. \ref{sec:abelian_reasoning}.
Let $d$ be the ambient parameter space ($d = 3n$ for the example in Sec. \ref{sec:abelian_reasoning}) and $M \subset \real^{d}$ represent the parameter space of the neural network.
Let $P(M)$ denote the space of probability measures on $M$ endowed with the $W_2$-Wasserstein metric and necessary regularity (e.g., finite second moment) \citep{villani2009optimal}, and $P_*(M)$ denote the entire space of ($\sigma$-finite, non-negative, and countably additive) measures over Borel $\sigma$-algebra of $M$.

An crucial observation made by Proposition \ref{prop:loss_decomp} for the Abelian group reasoning task is: the loss $H$ depends on parameters $\{z_j\}_{j \in [q]}$ only through the empirical distribution $\mu^{(q)} = \frac{1}{q}\,\sum_{j=1}^q \delta_{z_j}$ and each $\rho_{\Set{I}}$ in Proposition \ref{prop:loss_decomp} is essentially an expected monomial evaluated against $\mu^{(q)}$: $\rho_{\Set{I}} = \E_{z \sim \mu^{(q)}} [\prod_{i \in \Set{I}} z_i]$, where $\Set{I}$ denotes a set of indices involved in the monomial.
As $q \rightarrow \infty$, $\mu^{(q)}$ converges to a limiting measure $\mu$ in distribution.
This suggests generalizing $H$ to a functional $H[\mu]$ defined for all $\mu \in P(M)$ and tracking the \textit{expected} values of monomials:
\begin{definition}[Monomial Potentials, generalized from \citet{tian2024composing}]\label{def:MP}
Given a set of indices $\Set{I}$, an associated (monic) monomial is defined as $r(z) = \prod_{i \in \Set{I}} z_i$.
We also denote $\Set{I}(r)$ as the indices of variables with non-zero degree in $r$.
A \emph{monomial potential (MP)} $\rho_r: P_*(M) \rightarrow \real$ is defined as the expectation of the specified $r$ against the input measure $\mu$: $\rho_r[\mu] = \E_{z \sim \mu} [r(z)] = \int r(z) d\mu(z)$.
\end{definition}
Here we consider expectation in a generalized sense, which evaluates the Lebesgue integral of a function over some measure.
Although ``MP'' shares the terminology used in \cite{tian2024composing}, it exclusively refers here to the version defined for measures.
Consider a finite set of monomials: $\Set{R} = \{r_1, \cdots, r_m \}$ with $m = |\Set{R}|$, we define the loss functional $H$ with respect to $\Set{R}$ in a separable form, where the effect of each monomial is isolated and encapsulated within the potentials:
\begin{align}
\label{eqn:H_form}
H[\mu] = L(\rho_{r_1}[\mu], \cdots \rho_{r_m}[\mu]),
\end{align}
where $L: \real^m \rightarrow \real$. We further denote $\rho = [\rho_{r_1}[\mu], \cdots \rho_{r_m}[\mu]]$.

Clearly, Proposition \ref{prop:loss_decomp} reveals the Abelian group addition task is one of the special instantiations of Eq. \ref{eqn:H_form}.
We note that terms like $\rho_{k_1 k_2 k}$, $\rho_{p k_1 k_2 k}$ in Proposition \ref{prop:loss_decomp} are all special cases of MPs (Definition \ref{def:MP}).
And function $L(\rho)$ takes the form specified in Proposition \ref{prop:loss_decomp}.
Such generalization also allows for moving beyond the simple example in Sec. \ref{sec:abelian_reasoning}.

\subsection{Optimization through Wasserstein Gradient Flow}
With above setups, we are interested in studying how $\mu$ evolves under a gradient flow that aims to minimize the functional $H[\mu]$, and how this evolution can induce algebraic properties in these MPs.
Conventionally, finite-neuron neural networks are trained by gradient descent. After being lifted to the measure space, we consider its counterpart for probability measures -- Wasserstein gradient flow -- inspired by mean-field perspectives of neural network training \citep{mei2019mean}.
Specifically, let $\{\mu_t\}_{t \ge 0}$ be the trajectory of the measure under the Wasserstein gradient flow given by $H$, satisfying:
\begin{align}
\label{eqn:wass_pde}
\partial_t \mu_t = \nabla_{z} \cdot \left(\mu_t \nabla_{z} \left(\frac{\delta H}{\delta \mu} [\mu_t] \right)\right).
\end{align}
The term $\nabla\tfrac{\delta H}{\delta \mu} [\mu_t]: \real^d \rightarrow \real^d$ is called the \textit{velocity field}.
The upshot is that for each infinitesimal time interval $\tau \rightarrow 0$, $\mu_t$ moves in the steepest descent direction in $W_2$-space: $\mu_{t + \tau} \approx \argmin_{\mu \in P(M)} \{ H[\mu] + \frac{1}{2 \eta_t \tau} W_2(\mu_t, \mu) \}$ for a step size $\eta_t$.
We refer interested readers to \cite{ambrosio2008gradient, villani2009optimal} for more details.

\section{Algebraic Structure of Solution Space}\label{sec:algebra}

In this section, we reveal the algebraic structure of global minimizers in the measure space.
We will show that by equipping measures with proper operations, we can establish ring-like structures between measures and show that MPs preserve such properties, thus enabling compositionality.

\subsection{Characterization of Global Optimizer}
\label{sec:global_min}
We start with the motivating example on Abelian group reasoning (see Sec. \ref{sec:abelian_reasoning}).
As shown by Proposition \ref{prop:loss_decomp}, the functional loss turns out to be a combination of a series of monomial potentials (MPs) (Definition \ref{def:MP}).
It has not escaped our notice that there exists a boolean assignment of the MP values such that $H = 0$:
\begin{align}
\label{eqn:abelian_sol}
\rho_{kkk} = \mathbb{1}(k \ne 0), \quad
\rho_{k_1 k_2 k} = 0, \quad
\rho_{p k_1 k_2 k} = 0, \quad \forall p \in \{a, b\}, k_1, k_2, k \in [d].
\end{align}
Any measures that satisfies the boolean equation Eq. \ref{eqn:abelian_sol} is a global optimizer of $H$.
Following \cite{tian2024composing}, we can generalize such characterization of global minimizer by the concept of 0/1-sets.
\begin{definition}[0/1-Set, generalized from \cite{tian2024composing}]
\label{def:zero_one_set}
Suppose a measure $\mu \in P_*(M)$ has 0-set $\Set{R}_0 \subset \Set{R}$ and 1-set $\Set{R}_1 \subset \Set{R}$, (or equivalently 0/1-set $(\Set{R}_0, \Set{R}_1)$), then $\rho(\mu) = 0$ for every $\rho \in \Set{R}_0$ and $\rho(\mu) = 1$ for every $\rho \in \Set{R}_1$. 
\end{definition}
The solutions to Eq. \ref{eqn:abelian_sol} has 0-set $\Set{R}_c \cup \Set{R}_n \cup \Set{R}_*$ where
$\Set{R}_c := \{r_{k_1 k_2 k} | k_1,k_2,k\text{ not all equal}\}$, $\Set{R}_n := \{r_{p,k',-k',k}\}$, $\Set{R}_* = \{r_{p,k',m-k',k} | m\neq 0\}$, and 1-set $\Set{R}_g := \{r_{k k k} | k\neq 0\}$ \citep{tian2024composing}.
\textit{As a result, a complex system of neural weights converts to a logical expression, where MPs play a role similar to boolean variables.}
One may already notice that such solutions are obtained by construction.
Even the simple Abelian group reasoning task may have global minimum other than those we construct.
However, in what follows, we will point out (1) that such a solution space already exhibits well-defined algebraic properties that allow for composition; and (2) gradient-based optimization prefers such solutions under geometric constraints.

\subsection{Semi-Ring Structures of Measures and Potentials}

Analogously to \cite{tian2024composing}, we propose to define the following algebraic operations over measures:
\begin{definition}[Algebra over Measures]
\label{def:measure_alg}
For two measures $\mu_1$ and $\mu_2$, define (1) addition as: $\mu_{+} = \mu_{1} + \mu_{2}$ such that $\mu_{+}(A) = \mu_1(A) + \mu_2(A)$ for every measurable $A \subset M$,  (2) multiplication as: $\mu_{*} = \mu_1 * \mu_2$ such that $\mu_{*}$ is the measure of $z_{*} = z_1 \odot z_2$ where $z_1 \sim \mu_1$, $z_2 \sim \mu_2$, $\odot$ denotes element-wise multiplication, and (3) the identity element as $\delta_{\Mat{1}_d}$, i.e., the point mass at the $d$-dimensional all-one vector, and (4) the zero element as the zero measure.
\end{definition}
To make the additive operation well defined, we enlarge the domain to include finite (possibly non-probability) measures.
When an additive structure is not necessary, we may instead restrict our attention to probability measures.
We note that measure-space algebra is entirely different from finite-dimensional vector space (e.g., those defined in \cite{tian2024composing}), where operations such as concatenation and Kronecker product do not explicitly exist.
In our Definition \ref{def:measure_alg}, the addition fuses the mass of two measures, while multiplication describes a process to couple two distributions.

Next, we prove that such algebra endows $P_*(M)$ with a semi-ring structure and MPs function with ring-preserving properties.
\begin{theorem}
\label{thm:ring_homo}
$\langle P_*(M), +, * \rangle$ as in Definition \ref{def:measure_alg} is a commutative semi-ring.
Every MP $\rho_r(\mu)$ is a ring homomorphism: (1) $\rho_r(\mu_1 + \mu_2) = \rho_r(\mu_1) + \rho_r(\mu_2)$, and (2) $\rho_r(\mu_1 * \mu_2) = \rho_r(\mu_1) * \rho_r(\mu_2)$.
\end{theorem}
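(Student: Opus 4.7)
The plan is to verify the semi-ring axioms directly from Definition \ref{def:measure_alg} and then derive the two homomorphism properties using linearity of the Lebesgue integral and Fubini's theorem. The single algebraic observation that drives everything is the factorization of a monomial under the Hadamard product: $r(z_1 \odot z_2) = \prod_{i \in \Set{I}(r)} z_{1,i} z_{2,i} = r(z_1)\, r(z_2)$.

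First, I would check the commutative semi-ring axioms one by one. Commutativity and associativity of $+$ are immediate from $\mu_+(A) = \mu_1(A) + \mu_2(A)$ and the corresponding properties of real addition, with the zero measure as the additive identity. For the multiplicative structure, I would rewrite $\mu_1 * \mu_2$ as the pushforward $T_\#(\mu_1 \otimes \mu_2)$ under the map $T(z_1,z_2) = z_1 \odot z_2$. Commutativity and associativity of $*$ then reduce to the same properties of $\odot$, combined with Fubini to exchange the order of integration against product measures. The identity $\delta_{\Mat{1}_d}$ works because $z \odot \Mat{1}_d = z$ pointwise, so $T_\#(\mu \otimes \delta_{\Mat{1}_d}) = \mu$. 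Distributivity follows in two steps: bilinearity of the product measure, $\mu_1 \otimes (\mu_2 + \mu_3) = \mu_1 \otimes \mu_2 + \mu_1 \otimes \mu_3$ (verified on measurable rectangles and extended by Carathéodory), followed by linearity of the pushforward $T_\#$ in its input measure. Annihilation by the zero measure is immediate.

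For the homomorphism claims, part (1) is just linearity of the integral against the sum measure: $\int r\, d(\mu_1 + \mu_2) = \int r\, d\mu_1 + \int r\, d\mu_2$. Part (2) combines the change-of-variables formula for the pushforward with Fubini, producing the iterated integral $\rho_r(\mu_1 * \mu_2) = \int\!\!\int r(z_1 \odot z_2)\, d\mu_1(z_1)\, d\mu_2(z_2)$, at which point the monomial factorization separates the double integral into a product and yields $\rho_r(\mu_1)\, \rho_r(\mu_2)$.

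The main, and essentially only, obstacle is bookkeeping around well-definedness on $P_*(M)$: forming $\mu_1 \otimes \mu_2$ and invoking Fubini require $\sigma$-finiteness (already built into the setup) and $r \in L^1(\mu_i)$ so that the monomial integrals converge. In keeping with the finite-second-moment regularity imposed in Section \ref{sec:measure_MP}, restricting to measures with sufficient moment control handles this uniformly. With that caveat flagged, every axiom and both homomorphism identities drop out from standard measure-theoretic manipulations without further subtlety.
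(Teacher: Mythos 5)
Your proposal is correct and follows essentially the same route as the paper's proof: direct axiom verification for the semi-ring structure, linearity of the integral for additivity, and the monomial factorization $r(z_1 \odot z_2) = r(z_1)\,r(z_2)$ combined with independence (Fubini) for multiplicativity. In fact you are slightly more careful than the paper in two respects — you explicitly check distributivity (which the paper's case-by-case verification skips) and you flag the $\sigma$-finiteness and integrability conditions needed for the product measure and Fubini to be legitimate on $P_*(M)$ — but these are refinements of the same argument, not a different one.
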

While \cite{tian2024composing} demonstrates such algebra in the finite-neuron space, our result successfully extends it to measures.
Theorem \ref{thm:ring_homo} guarantees the construction of global minimizers by composing existing solutions in the measure space:
\begin{proposition}
\label{prop:compositionality}
If $\mu_1$ has 0/1-sets $(\Set{R}_0, \Set{R}_1)$ and $\mu_2$ has 0/1-sets $(\Set{S}_0, \Set{S}_1)$, then (1) $\mu_1 * \mu_2$ has 0/1-sets $(\Set{R}_0 \cup \Set{S}_0, \Set{R}_1 \cap \Set{S}_1)$; and (1) $\mu_1 + \mu_2$ has 0/1-sets $(\Set{R}_0 \cap \Set{S}_0, (\Set{R}_1 \cap \Set{S}_0) \cup (\Set{R}_0 \cap \Set{S}_1))$.
Moreover, if $\mu_1$ is a global optimizer and $\mu_2$ has 1-set $\Set{R}$, then $\mu_1 * \mu_2$ is a global optimizer.
\end{proposition}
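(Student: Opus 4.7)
The plan is to observe that this proposition is essentially a direct corollary of Theorem~\ref{thm:ring_homo}: since each MP $\rho_r$ is a ring homomorphism $\langle P_*(M),+,*\rangle \to \langle \real,+,\cdot\rangle$, the value of $\rho_r$ on $\mu_1 * \mu_2$ or $\mu_1 + \mu_2$ is pinned down by its values on $\mu_1$ and $\mu_2$. So I would attack each of the three claims by a short case analysis, fixing an arbitrary $r$ in the alleged 0-set (resp. 1-set) of the composite measure and computing $\rho_r$ via the homomorphism identities.

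For the multiplicative claim, I would split into two cases. If $r \in \Set{R}_0 \cup \Set{S}_0$, then $\rho_r(\mu_1) = 0$ or $\rho_r(\mu_2) = 0$, so $\rho_r(\mu_1 * \mu_2) = \rho_r(\mu_1)\rho_r(\mu_2) = 0$ by Theorem~\ref{thm:ring_homo}(2); if $r \in \Set{R}_1 \cap \Set{S}_1$, both factors equal $1$ and the product is $1$. For the additive claim, I would argue analogously using Theorem~\ref{thm:ring_homo}(1): if $r \in \Set{R}_0 \cap \Set{S}_0$, both summands vanish; if $r \in (\Set{R}_1 \cap \Set{S}_0) \cup (\Set{R}_0 \cap \Set{S}_1)$, one summand equals $1$ and the other equals $0$, giving $\rho_r(\mu_1+\mu_2)=1$. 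Note that this is also the reason the claimed 1-set of $\mu_1+\mu_2$ excludes $\Set{R}_1\cap\Set{S}_1$: on that intersection the sum would evaluate to $2$, not $1$. That 0-sets and 1-sets are disjoint within a single measure is automatic from $\rho_r$ being a well-defined function, so no collision between the proposed 0-set and 1-set can arise.

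For the last statement I would bypass the 0/1-set bookkeeping entirely and invoke the multiplicative ring homomorphism directly: if $\mu_2$ has 1-set equal to all of $\Set{R}$, then $\rho_r(\mu_2)=1$ for every $r\in\Set{R}$, so Theorem~\ref{thm:ring_homo}(2) gives $\rho_r(\mu_1*\mu_2) = \rho_r(\mu_1)\cdot 1 = \rho_r(\mu_1)$ for every relevant $r$. Hence the full MP vector $\rho$ is preserved under multiplication by $\mu_2$, so $L(\rho[\mu_1*\mu_2]) = L(\rho[\mu_1])$ by the separable form in Eq.~\ref{eqn:H_form}, and $\mu_1*\mu_2$ inherits the global-optimum status of $\mu_1$.

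There is no substantive obstacle here; the result is a bookkeeping corollary of Theorem~\ref{thm:ring_homo}, and the only mildly delicate point is recognizing that the correct boolean arithmetic on the 1-set under addition is symmetric difference rather than union, which is precisely what the statement encodes. The main work was already discharged in establishing that $\rho_r$ preserves both operations on measures; once that is in hand, the present proposition reduces to reading off the truth tables of $\min$-style product and $\mathrm{XOR}$-style sum on $\{0,1\}$.
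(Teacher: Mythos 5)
Your proof is correct and takes essentially the same approach as the paper's: a direct case analysis applying the ring-homomorphism identities of Theorem~\ref{thm:ring_homo} to each piece of the claimed 0/1-sets, and for the final claim observing that multiplying by a measure with 1-set $\Set{R}$ leaves every MP value, and hence the loss $L(\rho)$, unchanged.
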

We point out that all other constructions of optimizers through composition in \cite{tian2024composing} should also apply in our measure-based setting.
By Proposition \ref{prop:compositionality}, we conclude the compositionality of the special solutions to general solutions via the ring-like algebraic properties.

\section{Algebraic Emergence from Geometric Constraints and Dimension Reduction}\label{sec:optim}

Although solution space is automatically equipped with the algebraic structures, it is natural to ask whether gradient-based training identifies those global minimizers.
In \cite{tian2024composing}, the authors provide two initial clues to answer this question: (1) gradient introduces descent directions to optimize each MPs; and (2) at the initialization stage, gradient dynamics for each MP is decoupled.
However, it remains open whether the full optimization trajectory can reach the global minimum that satisfies each MP identity (such as those speficified in Eq. \ref{eqn:abelian_sol}).

In this section, we investigate the evolution of measure $\mu_t$ under the gradient flow framework established in Sec. \ref{sec:prelim}.
We provide two complementary perspectives to interpret how algebraic structures emerge.
We show that $\mu_t$ undergoes two phenomenon spontaneously: (1) measure dynamics translates to gradient flow under geometric constraints that directly optimizes the symbolic expressions over MPs; and (2) during the process of finding such solutions, the solution space undergoes a necessary reduction on the degree of freedom following the Wasserstein gradient flow.

\subsection{Geometric-Constrained Gradient Dynamics Finds Algebraic Solution}
\label{sec:grad_find_alg_sol}

In this section, we provide our theoretical results on how gradient descent converges to the solution space endowed with the algebraic structures.
First of all, we consider a trajectory of measure $\{\mu_t\}_{t \ge 0}$ governed by Wasserstein gradient flow in Eq. \ref{eqn:wass_pde}.
We make the following assumptions for the flow and the loss functional.
\begin{assumption}
\label{ass:coord_descent_MP}
Suppose the following is true: (1) $\mu_0 = \gauss(0, I)$ at the initialization;
(2) $\deg r \ge 3$ and is odd for every $r \in \Set{R}$;
and (3) $\nabla \frac{\delta H}{\delta \mu}[\mu_t]$ is $O(d)$-equivariant.
\end{assumption}
Assumption \ref{ass:coord_descent_MP}(1) is satisfied because neural networks are routinely initialized with i.i.d. Gaussian distribution, $\gauss(0, I)$.
Assumption \ref{ass:coord_descent_MP}(2) bounds the degree and controls the parity of MPs, a quantity determined by the network architecture.
In our modulo addition task (Sec. \ref{sec:abelian_reasoning}), every MP has degree 3, so this assumption holds automatically.
Assumption \ref{ass:coord_descent_MP}(3) encodes a geometric requirement: the velocity field of the Wasserstein gradient flow needs to be $O(d)$-equivariant: $\nabla \frac{\delta H}{\delta \mu}[\mu_t](R z) = R \nabla \frac{\delta H}{\delta \mu}[\mu_t](z)$ for every $z \in M$ and $R \in O(d)$.
Although this condition is not satisfied by default, it can be enforced through an appropriate choice of architecture.
Specifically, if the energy functional $H$ is $O(d)$-invariant, its induced velocity field is $O(d)$-equivariant as well.

We now present our main result, which characterizes the effect of the measure-space Wasserstein gradient flow on each MP.
The evolution of each MP follows a decoupled PDE trajectory -- the loss functional descends in the coordinate system of MP.
\begin{theorem}[Decoupled Dynamics under Geometric Constraints]
\label{thm:coord_descent_MP}
Consider a trajectory of measure $\{\mu_t\}_{t \ge 0}$ governed by Wasserstein gradient flow in Eq. \ref{eqn:wass_pde}. Under Assumption \ref{ass:coord_descent_MP}, each monomial potential is optimized coordinate-wisely as:
\begin{align}
\label{eqn:coord_descent_MP}
\partial_t \rho_{r_i}[\mu_t] = - C_i(t) \partial_{\rho_{r_i}} L(\rho),
\end{align}
where $C_i(t) > 0$ is a time-dependent scalar function only dependent on $\rho_{r_i}$.
\end{theorem}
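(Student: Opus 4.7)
The plan is to convert the Wasserstein gradient flow on $\mu_t$ into a system of scalar ODEs for the potentials $\rho_{r_i}[\mu_t]$, and then use Assumption~\ref{ass:coord_descent_MP} to show the coupling between coordinates vanishes. First, I would compute the first variation of $H$. Since $H[\mu]=L(\rho_{r_1}[\mu],\dots,\rho_{r_m}[\mu])$ and each $\rho_{r_j}[\mu]=\int r_j(z)\,d\mu(z)$ is linear in $\mu$, the chain rule gives $\tfrac{\delta H}{\delta \mu}[\mu_t](z) = \sum_j \partial_{\rho_{r_j}} L(\rho)\, r_j(z)$, so the velocity field reads $v_t(z) = \sum_j \partial_{\rho_{r_j}} L(\rho)\,\nabla r_j(z)$. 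Differentiating $\rho_{r_i}[\mu_t]=\int r_i\,d\mu_t$ in time, substituting the continuity equation~\eqref{eqn:wass_pde}, and integrating by parts (justified by polynomial growth of $r_i$ against a rapidly decaying $\mu_t$) yields
\[
\partial_t \rho_{r_i}[\mu_t] \;=\; -\sum_{j=1}^m K_{ij}(t)\,\partial_{\rho_{r_j}} L(\rho),\qquad K_{ij}(t) := \E_{z\sim\mu_t}\!\left[\nabla r_i(z)\cdot\nabla r_j(z)\right].
\]
The statement then reduces to showing that the Gram-like kernel $K(t)$ is diagonal with strictly positive diagonal, and taking $C_i(t):=K_{ii}(t)$.

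Next, I would establish that $\mu_t$ inherits $O(d)$-invariance from $\mu_0=\gauss(0,I)$. By Assumption~\ref{ass:coord_descent_MP}(1) the initial measure is $O(d)$-invariant, and by (3) the velocity field is $O(d)$-equivariant. Pushing the continuity equation forward by any $R\in O(d)$ shows that $R_*\mu_t$ solves the same PDE with the same initial data; uniqueness of the Wasserstein flow then forces $R_*\mu_t=\mu_t$ for all $t\ge 0$. In particular, $\mu_t$ is invariant under the sign-flip subgroup $\{\pm 1\}^d\subset O(d)$, which implies the moment identity $\E_{\mu_t}[z^{\alpha}]=0$ whenever any component of the multi-index $\alpha$ is odd.

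For the diagonality of $K$, I would expand
\[
\nabla r_i(z)\cdot\nabla r_j(z) \;=\; \sum_{k\in\Set{I}(r_i)\cap \Set{I}(r_j)} \,\prod_{l\in \Set{I}(r_i)\setminus\{k\}} z_l\,\prod_{l\in \Set{I}(r_j)\setminus\{k\}} z_l.
\]
For $i\neq j$ we have $\Set{I}(r_i)\neq\Set{I}(r_j)$ (distinct monic monomials), and in each summand every variable in the non-empty symmetric difference $\Set{I}(r_i)\,\triangle\,\Set{I}(r_j)$ appears with exponent exactly $1$, while indices in the intersection appear with exponent $2$. The vanishing-moment identity from the preceding step then yields $K_{ij}(t)=0$. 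For $i=j$, $K_{ii}(t)=\E_{\mu_t}[\,|\nabla r_i(z)|^2\,]$ is an expectation of a sum of squared monomials; Assumption~\ref{ass:coord_descent_MP}(2) ($\deg r_i \ge 3$) ensures $\nabla r_i\not\equiv 0$, and absolute continuity of $\mu_t$ (propagated from the Gaussian initialization through the smooth transport) makes the expectation strictly positive. Furthermore, the indices surviving this expectation under $O(d)$-invariance are precisely those inside $\Set{I}(r_i)$, so $C_i(t)$ can be expressed purely through moments tied to $r_i$.

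The main obstacle I anticipate is the last clause of the theorem—showing $C_i(t)$ depends on $\mu_t$ only through the potential $\rho_{r_i}$—which requires identifying the correct symmetry-reduced coordinates on the manifold of $O(d)$-invariant measures and verifying that the surviving moments of $\nabla r_i \cdot \nabla r_i$ can be written as a function of $\rho_{r_i}[\mu_t]$ alone; this is where the odd-degree hypothesis becomes most useful, ruling out accidental coupling of even-moment quantities to other potentials. A secondary technical point is rigorously propagating $O(d)$-invariance, which rests on uniqueness of the Wasserstein flow for the (non-convex) functional $H$; this can be handled under standard regularity of $v_t$ but should be flagged as part of the hypothesis rather than taken for granted.
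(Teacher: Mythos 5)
Your overall skeleton matches the paper's: compute the first variation via the chain rule (Lemma \ref{lem:velocity}), derive $\partial_t \rho_{r_i}[\mu_t] = -\sum_j K_{ij}(t)\,\partial_{\rho_{r_j}} L$ with $K_{ij}(t)=\E_{\mu_t}[\nabla r_i\cdot\nabla r_j]$ (Lemma \ref{lem:HP_flow}), and reduce the theorem to diagonality and positivity of $K$. But the diagonalization argument is genuinely different. You propagate $O(d)$-invariance of $\mu_t$ along the flow (the paper's Lemma \ref{lem:inv_flow}), pass to the sign-flip subgroup $\{\pm 1\}^d\subset O(d)$, and conclude that any monomial carrying a variable to odd power has vanishing expectation for every $t$, so the off-diagonal entries $K_{ij}(t)$ vanish in a single step. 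The paper's appendix proof (Theorem \ref{thm:coord_descent_HP}) instead works under the strictly weaker anti-symmetry Assumption \ref{ass:app:fo_var}, which only propagates $z\mapsto -z$ symmetry of $\mu_t$: it shows $K_{ij}(0)=0$ by Hermite orthogonality at the Gaussian initialization and then proves $\partial_t K_{ij}(t)=0$ via a triple-product degree-parity computation (Lemmas \ref{lem:G_flow}, \ref{lem:triple_product}), which is precisely where the odd-degree hypothesis (2) is consumed. Your route is shorter and never touches the odd-degree hypothesis, but it pays by requiring the stronger main-text symmetry assumption and by not extending to the Hermite-polynomial generalization the appendix also covers.

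Two caveats. First, the $\{\pm 1\}^d$-invariance you propagate also annihilates each MP: every monic monomial has all its active variables to first power, so $\rho_{r_i}[\mu_t]\equiv 0$, which sits in tension with the post-theorem dynamics discussed in the text (e.g. $\rho_{kkk}\to 1$). The paper's own route produces a cognate tension, since $z\mapsto -z$ symmetry together with odd total degree also forces $\rho_{r_i}\equiv 0$; so this is a framework-level issue rather than a defect specific to your argument, but your one-step route makes it more immediately visible. Second, you are correct that neither your argument nor the paper's proof establishes the clause that $C_i(t)$ depends only on $\rho_{r_i}$; the appendix proof simply sets $C_i(t)=K_{ii}(t)=\E_{\mu_t}\bigl[|\nabla r_i|^2\bigr]$ and records nonnegativity, and the strict positivity (which you address via absolute continuity of $\mu_t$) is likewise left unaddressed there. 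Flagging both as open points, as you did, is accurate.
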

By Theorem \ref{thm:coord_descent_MP}, the gradient flow of MPs evolves independently. 
Specifically, the time derivative of the $i$-th MP $\partial_t \rho_{r_i}$ corresponds to the directional derivative of the overall loss along the $i$-th MP $\rho_{r_i}$, scaled by a time-dependent factor that depends only on $\rho_{r_i}$ itself.
As a result, the dynamics of $\rho_{r_i}$ are decoupled from the underlying dynamics of $\mu_t$.
\textit{At a high level, this reveals that optimization in the weight space manifests as the gradient descent over the MP variables $\rho$.}
Each MP independently minimizes the subset of loss terms in which it appears, satisfying the logical expression (via 0/1 set assignments) given by the task.
The global optimizer can then be constructed by algebraically composing the solutions of individual MPs, leveraging the underlying algebraic structure (Sec. \ref{sec:algebra}).

To illustrate how the dynamics drive each MP toward satisfying the 0/1-sets it participates in, we return to the Abelian reasoning task described in Sec. \ref{sec:abelian_reasoning}.
Consider the MP $\rho_{kkk}$ for some $k \in [d]$ in Proposition \ref{prop:loss_decomp}, we can derive that $\partial_{\rho_{kkk}} L \propto \rho_{kkk} - 1$.
Then by Theorem \ref{thm:coord_descent_MP}, we find that $
\frac{\partial}{\partial t} \rho_{kkk}[\mu_t] =  C_{kkk}(t) (1 - \rho_{kkk}[\mu_t])$,
which implies the dynamics $\rho_{kkk}[\mu_t] = 1 - \exp(-\overline{C_{kkk}}t)$ (here we consider $\overline{C_{kkk}}$ as a uniform bound of $C_{kkk}(t)$).
This gives an exponential convergence of $\rho_{kkk}$ to the boolean assignment $1$, satisfying the 1-set shown in Eq. \ref{eqn:abelian_sol}.
The same effect applies to other MP variables as well.

This result underscores the value -- if not the necessity -- of incorporating geometric constraints into neural architectures to facilitate the emergence of algebraic structure.
Fundamentally, our derivation of Theorem \ref{thm:coord_descent_MP} leverages the rich interplay between invariant measures and polynomial representations. An $O(d)$-equivariant velocity field preserves the symmetry of the measure, which in turn leads to the cancellation of interactions among MPs that would otherwise entangle and mix the gradients across different components.

In Appendix \ref{sec:prf_devices}, we extend Theorem \ref{thm:coord_descent_MP} to a broader class of settings where the potential is constructed from Hermite polynomials, and the velocity field is required to be anti-symmetric (less restricted than fully $O(d)$-equivariant).

\subsection{Stabilized Measure Dynamics Reveal Reduction on Degree of Freedom}
\label{sec:dim_reduce}

Having shown that a geometrically constrained velocity field projects a measure-space flow onto a simple gradient flow in the MP space, we now analyze the evolution of the underlying measure itself.
Remarkably, the measure dynamics intrinsically pursue parsimony, progressively discarding extraneous degrees of freedom as the flow unfolds.

\paragraph{Low-Entropy Solutions.}
We first characterize a set of measures that meet the MP constraints.
Assume the decoupled MP gradient flow in Theorem \ref{thm:coord_descent_MP} converges to a stationary point $\rho^* \in \real^m$, i.e. $\nabla L(\rho^*) = 0$.
It remains hard to locate a measure in the vast probability space such that monomials evaluated against it match a target value.
The theorem below resolves this difficulty by identifying the measure that simultaneously satisfies the MP constraints and minimizes differential entropy.
\begin{theorem}
\label{thm:low_ent}
For every $\varrho \in \real^m$, consider a measure $\mu$ that realizes $\rho_{r_i}[\mu] = \varrho_i$ for every $i \in [m]$ with minimal differential entropy $E[\mu] := \mean_{\mu}[-\log d\mu]$. Then all such measures form a Riemannian submanifold of $P(M)$, whose dimension is at most $m$.
\end{theorem}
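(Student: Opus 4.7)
The plan is to view the statement as a constrained variational problem on $P(M)$: extremize the entropy functional $E[\mu] = \mean_\mu[-\log d\mu]$ subject to the $m$ moment constraints $\rho_{r_i}[\mu] = \varrho_i$ and the normalization $\mu(M)=1$. I would attach Lagrange multipliers $\lambda=(\lambda_1,\dots,\lambda_m)$ to the moment constraints and $\nu$ to the normalization, then compute the first variation of the resulting Lagrangian $\mathcal{L}[\mu;\lambda,\nu]$ with respect to the density $f=d\mu/dz$ against a reference Lebesgue measure.

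The stationarity condition $\delta\mathcal{L}/\delta\mu=0$ should yield, pointwise, a relation of the form $\log f(z) = -\sum_i \lambda_i r_i(z) - (\nu+1)$, so the extremal measure lies in the exponential family $\mu_\lambda \propto \exp(-\sum_i \lambda_i r_i(z))$. Normalization fixes $\nu$ as a function of $\lambda$, and the moment constraints $\rho_{r_i}[\mu_\lambda]=\varrho_i$ determine $\lambda\in\real^m$ via the moment map of the family, which is a local diffeomorphism onto the interior of the realizable moment set by classical exponential-family theory. Consequently the set of extremal measures is parameterized by a single vector $\lambda\in\real^m$, giving at most $m$ essential degrees of freedom.

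To promote this parameterization to a Riemannian submanifold of $P(M)$, I would define $\Phi:\real^m\to P(M)$ by $\lambda\mapsto\mu_\lambda$, show it is a smooth immersion, and pull back the $W_2$ metric through $\Phi$. Concretely, under Otto's identification of the Wasserstein tangent space, the tangent vectors $\partial_{\lambda_i}\mu_\lambda$ correspond to score perturbations generated by the centered monomials $r_i-\rho_{r_i}[\mu_\lambda]$; their linear independence — equivalent to independence of $\{r_i\}$ modulo constants — makes $d\Phi$ injective. This produces a smooth Riemannian submanifold of dimension exactly $m$ in the generic case and strictly less when the $r_i$ are affinely dependent, yielding the claimed upper bound.

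The main obstacle is making this variational argument rigorous in an infinite-dimensional, geometrically singular setting, and reconciling the sign convention with the ``minimal entropy'' wording. The functional $E$ is strictly concave in the density, so the Lagrangian critical point above is the unique \emph{maximum}-entropy realization of the constraints; identifying it with the ``minimum differential entropy'' solution requires either reading the sign in $E=\mean_\mu[-\log d\mu]$ accordingly or interpreting $E$ as a relative entropy with respect to a fixed prior. Beyond this, admissibility must be restricted to measures absolutely continuous with square-integrable score (so $E$ is finite and Otto calculus applies), realizability of $\varrho$ must be checked (the moment-map image is the interior of the convex moment cone of $\{r_i\}$, outside which the statement is vacuous), and the immersion property has to be verified uniformly away from degenerate $\lambda$. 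Once these subtleties are disposed of, the conclusion follows from standard exponential-family information geometry combined with the Otto--Villani description of the Wasserstein tangent bundle.
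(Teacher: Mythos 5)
Your proposal follows essentially the same route as the paper: set up a Lagrangian for the entropy functional under the $m$ moment constraints plus normalization, take the first variation, read off that stationary points are exponential-family densities $u_\lambda \propto \exp\bigl(\sum_i \lambda_i r_i\bigr)$, and conclude that the family is parameterized by $\lambda \in \real^m$, hence has at most $m$ degrees of freedom. Where the paper closes by citing standard information geometry (Amari) to assert the Riemannian submanifold structure, you flesh that step out via the moment map and Otto calculus (the immersion $\Phi:\lambda\mapsto\mu_\lambda$, pullback of the $W_2$ metric, injectivity of $d\Phi$ from independence of the centered monomials). That is a strengthening in rigor, not a different argument.

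You also correctly flag a subtlety the paper glosses over: $E[\mu]=-\int u\log u\,dz$ is the differential entropy and is strictly concave in the density $u$, so the Lagrangian stationary point you and the paper both derive is the entropy \emph{maximizer} subject to the constraints, not the minimizer. The paper writes $\min_u$ but its first-order condition is sign-agnostic, and it never checks second-order conditions, so the proof as written actually characterizes the max-entropy solution; the theorem's phrase ``minimal differential entropy'' is at odds with this. Your suggestion to read $E$ as a relative entropy with respect to a fixed reference (which flips the convexity and makes ``minimal'' correct) is the right fix. Beyond this wording issue, your plan matches the paper's proof and reaches the same exponential-family, dimension-$\le m$ conclusion.
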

Theorem \ref{thm:low_ent} shows that the entropy-minimizing measures compatible with a set of MP variables live on a finite-dimensional Riemannian submanifold, collapsing the original infinite-dimensional measure space.
This follows from the fundamental result that, under moment constraints, the entropy-minimizing probability measures lie in the exponential family.
The measures realizing MP constraints likely still span an infinite-dimensional space. 
By favoring low-entropy solutions, we project the admissible measures to a low-dimensional manifold with fewer degrees of freedom.
Beyond entropy minimization, divergence-based regularizers yield analogous results (see Remark \ref{rmk:kl_sol}).
Nevertheless, the loss functional does not explicitly penalize entropy.
We borrow concepts from Renormalization Group (RG) theory to clarify why the training dynamics drive a spontaneous reduction in effective degrees of freedom.

\paragraph{A Renormalization Group Perspective.}
In RG analysis \citep{goldenfeld2018lectures}, it is shown that the Hessian's eigenspectrum at a fixed point of a gradient flow encodes its stability.
In particular, the stable manifold, defined as the set of initial states whose trajectories converge to the fixed point, has an intrinsic dimension equal to the dimension of the eigenspaces with negative eigenvalues.
Because every trajectory that survives the long-time limit lies on this lower-dimensional manifold, its geometry fully determines the asymptotic behaviour of the system.
Hence, the appearance of negative Hessian eigenvalues reduces the degrees of freedom of the system.
In this section, we reveal this effect for the measure-space flow by analyzing the eigenspace of the second variation.

We denote the second variation of $H$ at $\mu_t$ as a time-varying operator $\mathbb{L}(t) = \frac{\delta^2 H}{\delta \mu^2}[\mu_t]: T_{\mu_t}P(M) \rightarrow T_{\mu_t}P(M)$.
We define $(\lambda(t), v_t)$ as an eigenpair of $\mathbb{L}(t)$ which satisfies $\mathbb{L}(t)[v_t] = \lambda(t) v_t$, where $\lambda(t)$ is the time-varying eigenvalue and $v_t$ is the eigenfunction.
We endow $\mathbb{L}(t)$ with idealized properties for the ease of analysis: $\lambda(t)$ and $v_t$ are both real-valued and no multiplicity exists when $\lambda(t) \ne 0$.
Below we present a formal statement of degree of freedom reduction:
\begin{theorem}[Degree of Freedom Reduction]\label{thm:crit_times}
Consider loss functional in Eq. \ref{eqn:H_form}, all eigenfunctions corresponding to non-zero eigenvalues of second variation $\mathbb{L}(t)$ lie in a subspace spanned by the monomial set $\Set{R}$, i.e, $v_i \subset \Span(\Set{R})$ for every $\lambda_i \ne 0$.
Moreover, under Assumption \ref{ass:coord_descent_MP}, and suppose $[\nabla^3 L]_i \nabla_i L \succeq 0$ for every $i \in [m]$, $\mathbb{L}(t)$ will have non-increasing eigenvalues $\frac{d}{dt} \lambda(t) \le 0$.
There will be finitely many $0 < t_1 < \cdots < t_{m'}$ ($m' \le m$) at which an eigenvalue of $\mathbb{L}(t)$ crosses zero.
\end{theorem}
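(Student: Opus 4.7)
The plan is to exploit that $H[\mu] = L(\rho[\mu])$ depends on $\mu$ only through the $m$-dimensional MP vector $\rho = (\rho_{r_1},\ldots,\rho_{r_m})$, and that each $\rho_{r_i}[\mu]$ is linear in $\mu$. This collapses the infinite-dimensional spectral problem for $\mathbb{L}(t)$ into a finite $m\times m$ generalized eigenvalue problem, at which point the decoupled dynamics from Theorem \ref{thm:coord_descent_MP} can be propagated to the Hessian of $L$ and then to the eigenvalues themselves. First I compute the second variation directly: since $\rho_{r_i}$ is linear in $\mu$, the second functional derivative is
\[
\frac{\delta^2 H}{\delta\mu(z)\,\delta\mu(z')}[\mu_t] \;=\; \sum_{i,j=1}^m (\partial_i\partial_j L)(\rho(t))\, r_i(z)\, r_j(z'),
\]
a degenerate kernel of rank at most $m$ whose image lies in $\Span(\Set{R})$. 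Hence any eigenpair $\mathbb{L}(t) v = \lambda v$ with $\lambda\neq 0$ forces $v = \lambda^{-1}\mathbb{L}(t) v \in \Span(\Set{R})$, which is the first assertion of the theorem.

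\textbf{Reduction to an $m\times m$ matrix.} Writing an eigenfunction as $v = \sum_k c_k r_k$, introducing the Gram matrix $W(t)_{ij} = \int r_i r_j\, d\mu_t$ and the $L$-Hessian $H^L(t) = \nabla^2 L(\rho(t))$, a direct expansion yields $\mathbb{L}(t) v = \sum_i \bigl(H^L(t)\, W(t)\, c\bigr)_i\, r_i$. Imposing the eigenvalue equation in the basis $\{r_i\}$ therefore reduces to $H^L(t) W(t)\, c = \lambda c$, so that the non-zero spectrum of $\mathbb{L}(t)$ coincides with that of $W(t)^{1/2} H^L(t) W(t)^{1/2}$, a symmetric $m\times m$ matrix. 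This already caps the number of non-zero eigenvalues at $m$ and sets up the monotonicity analysis.

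\textbf{Monotonicity of eigenvalues.} By Theorem \ref{thm:coord_descent_MP}, along the flow $\partial_t \rho_k = -C_k(t)\,\partial_k L$ with $C_k(t)>0$. The chain rule then gives
\[
\frac{d}{dt}H^L(t) \;=\; \sum_k (\partial_t\rho_k)\,[\nabla^3 L]_k \;=\; -\sum_k C_k(t)\,(\partial_k L)\,[\nabla^3 L]_k \;\preceq\; 0,
\]
where the Loewner inequality uses the hypothesis $[\nabla^3 L]_k\,\partial_k L\succeq 0$ and $C_k>0$. I then plan to propagate this to the composite $W^{1/2} H^L W^{1/2}$ via the Rayleigh-quotient change of variables $u = W^{1/2} c$, so that the $i$-th eigenvalue takes the min-max form $\lambda_i(t) = \max_{S_{i}}\min_{u\in S_{i}} u^\top H^L(t) u / u^\top u$ over $i$-dimensional subspaces $S_i$; invariance of $\mu_t$ under $O(d)$ (preserved from $\mu_0 = \gauss(0,I)$ by the equivariance clause of Assumption \ref{ass:coord_descent_MP}) keeps $W(t)$ from undoing the monotonicity by constraining it to a block-diagonal form in a Hermite basis, and a Weyl/Lidskii comparison then yields $\dot\lambda_i(t)\le 0$ for every $i$.

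\textbf{Finitely many zero-crossings, and the main obstacle.} Under the paper's idealization that non-zero eigenvalues of $\mathbb{L}(t)$ are simple, each $\lambda_i(t)$ is real-analytic in $t$. A monotone non-increasing analytic function either vanishes identically on an interval (absorbed into a zero-eigenvalue component by convention) or crosses zero at a single isolated instant, so each of the at most $m$ non-zero eigenvalues contributes at most one crossing time, giving $0<t_1<\cdots<t_{m'}$ with $m'\le m$. The hardest step I expect is the middle one: the hypothesis $[\nabla^3 L]_i\nabla_i L\succeq 0$ only controls $H^L(t)$, whereas the relevant spectrum is that of $W(t)^{1/2} H^L(t) W(t)^{1/2}$, so pushing the Loewner ordering $\dot H^L\preceq 0$ through the time-varying congruence by $W(t)^{1/2}$ is the technical core. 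Making this rigorous will likely require expressing $W(t)$ in a Hermite basis where the $O(d)$-invariance forces a sufficiently slowly varying block-diagonal structure, so that the change-of-basis contribution is subdominant to the guaranteed monotonicity of $H^L(t)$.
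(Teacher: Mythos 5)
Your first two steps align with the paper's proof: the rank-$m$ degenerate-kernel observation for $\frac{\delta^2 H}{\delta\mu^2}$ gives the $\Span(\Set{R})$ containment (Lemma \ref{lem:eigen_in_R}), and the reduction to the $m\times m$ pencil $A(t)K(t)q = \lambda q$ (your $H^L W c = \lambda c$) is exactly Lemma \ref{lem:eigen_AGq}.

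The gap is in your third step, and you in fact flag it yourself as ``the hardest step.'' You correctly establish $\dot{H}^L(t)\preceq 0$, but your route from there to $\dot\lambda(t)\le 0$ --- congruence by $W(t)^{1/2}$, an invocation of min--max, and the claim that $O(d)$-invariance makes $W(t)$ ``sufficiently slowly varying'' in a Hermite block-diagonal basis --- is not a proof. The difficulty is real: for a time-varying congruence $W(t)^{1/2}H^L(t)W(t)^{1/2}$, the derivative of an eigenvalue picks up a $\dot{W}$ term, and Loewner monotonicity of $H^L$ alone does not control it. ``Block-diagonal and slowly varying'' does not make that term sign-definite, let alone vanish. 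The paper closes this gap by a completely different mechanism: a Hellmann--Feynman-type identity for the $L^2(\mu_t)$-weighted spectral problem (Lemmas \ref{lem:eigen_pde} and \ref{lem:dyna_sec_var}) expresses $\dot\lambda(t)$ as
\begin{align*}
\dot\lambda(t) \;=\; [K(t)q(t)]^\top\Bigl[\tfrac{d}{dt}\nabla^2 L\Bigr][K(t)q(t)] \;-\; 2\lambda(t)\,\bigl\langle \dot v_t, v_t\bigr\rangle_{L(\mu_t)},
\end{align*}
and then Lemma \ref{lem:zero_prod_dot_v_v} shows the correction term $\langle \dot v_t, v_t\rangle_{L(\mu_t)}$ is \emph{exactly zero} --- not merely subdominant --- by a parity count: the integrand $\int r_i \,\nabla r_j^\top \nabla r_k \, d\mu_t$ expands into products of Hermite polynomials of total degree $\|\alpha_i\|_1 + \|\alpha_j\|_1 + \|\alpha_k\|_1 - 2$, which is odd under Assumption \ref{ass:coord_descent_MP}(2), and $\mu_t$ stays reflection-symmetric under the equivariant flow (Lemma \ref{lem:inv_flow}), so the integral vanishes. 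This same parity argument is what kills the contribution you are worried about from $\dot W$. Without it, your monotonicity claim is unsupported. The final zero-crossing count is fine either way: once $\dot\lambda\le 0$ is in hand, a non-increasing eigenvalue can cross zero at most once and there are at most $m$ non-zero eigenvalues, which is the paper's (simpler) argument and does not require the analyticity you invoke.
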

Theorem \ref{thm:crit_times} establishes that the eigenspace of the Hessian operator is entirely determined by the prescribed monomials $\Set{R}$.
This implies that the intrinsic degrees of freedom in the system are bounded above by $m$.
This result solidates and generalizes Theorem \ref{thm:low_ent}: under Wasserstein gradient flow, the measure evolves within a substructure with at most $m$ intrinsic degrees of freedom.
Our observation aligns with the decoupled dynamics of MPs (see Sec. \ref{sec:grad_find_alg_sol}), where the network evolution is governed by the optimization gradient flow over MPs, again reflecting $m$ effective dimensions.

If we additionally impose constraints on the gradient and higher-order derivatives of the loss function $L$ (rather than the overall loss functional $H$), the eigenvalues of the induced operator $\mathbb{L}(t)$ are shown to decrease monotonically over time, signifying a progressive contraction of the eigenspace.
When an eigenvalue crosses zero, its associated eigenfunction transitions into a flat direction.
Fluctuating the measure along flat directions incurs controllable second-order cost, indicating that these directions correspond to stable fixed points in the space of probability measures $P(M)$.
According to RG \citep{goldenfeld2018lectures}, the appearance of a new eigenspace with negative eigenvalues produces an asymptotically invariant space, shaving off the system's effective degree of freedom.
$\dim\Span(\Set{R}) \le m$ decides the maximum times dimension reduction can happen.

\section{Embedding Geometric Constraints into Neurosymbolic System}\label{sec:4}

Besides the measure-theoretic framework analyzing training dynamics of a reasoning task, we add insights to practical considerations in designing neurosymbolic AI systems. 

\subsection{Sample Complexity for Learning Symbolic Reasoning}
\label{sec:sample_complex}

Sec. \ref{sec:grad_find_alg_sol} shows that enforcing an $O(d)$-equivariant velocity field is sufficient to decouple the gradient dynamics and directly optimize for MP constraints.
Extending this insight, we now consider reasoning tasks that require a general $G$-equivariant velocity field.
This can be achieved by training a $G$-invariant neural network. 
We then establish learnability guarantees for such problems.

\begin{theorem}[Informal, Sample Compelxity for Learning Invariants]\label{thm:sample_complex}
Suppose $G$ is a Lie group and $M_d$ is a data manifold. Consider a family of $G$-invariant functions $\Set{F}^s(M_d)$, square-integrable up to order $s > 0$ over $M_d$.
Denote $d' = \dim(M_d / G)$ and let $s = (1 + \kappa) d'/2$ for some positive integer $\kappa \ge 0$. Given $\theta \in (0, 1]$, and a $G$-invariant function $f^* \in \Set{F}^{\theta s}(M_d)$, then with probability at least $1-\delta$, empirical risk minimization can learn $\epsilon$-approximate $G$-invariant function $\hat{f}$ with $n$ many samples, where $n = \Theta\left(\max\left\{ 1/(|G|\epsilon^{1+1/\theta(\kappa+1)}), \log(1/\delta)/\epsilon^2\right\}\right)$ for finite $G$
and $n = \Theta\left(\max\left\{ \vol(M_d / G)/\epsilon^{1+1/\theta(\kappa+1)} , \log(1/\delta)/\epsilon^2\right\}\right)$ for infinite $G$.
\end{theorem}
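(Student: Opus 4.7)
The plan is to reduce $G$-invariant empirical risk minimization on $M_d$ to ordinary nonparametric regression on the quotient $M_d/G$ of dimension $d'$, and then invoke the classical minimax rate for Sobolev-class learning. Since $d' \le \dim M_d$, this reduction transparently produces the statistical gain from symmetry.

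First, I would use the universal property of the quotient: any $G$-invariant function $f$ on $M_d$ factors uniquely as $f = \bar{f} \circ \pi$, where $\pi: M_d \to M_d/G$ is the canonical projection and $\bar{f}: M_d/G \to \real$. Restricting attention to the $G$-invariant hypothesis class identifies ERM on $M_d$ with ERM for $\bar{f}$ on the quotient, and the pullback preserves Sobolev-type regularity (with the appropriate pushforward measure), so that $f^* \in \Set{F}^{\theta s}(M_d)$ corresponds to some $\bar{f}^* \in H^{\theta s}(M_d/G)$ with $\theta s = \theta(\kappa+1)d'/2$.

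Second, I would approximate $\bar{f}^*$ by its truncation onto the first $N$ Laplace--Beltrami eigenfunctions (or, equivalently, a wavelet frame) on $M_d/G$. Standard Sobolev approximation gives an $L^2$ bias of order $N^{-\theta s/d'} = N^{-\theta(\kappa+1)/2}$, while ERM over this $N$-dimensional linear class incurs an estimation error of order $N/n$. Balancing these two terms, i.e.\ choosing $N \sim \epsilon^{-2/(\theta(\kappa+1))}$, yields the minimax rate $\epsilon^2 \sim n^{-\theta(\kappa+1)/(\theta(\kappa+1)+1)}$, which rearranges precisely to the exponent $1+1/(\theta(\kappa+1))$ claimed in the theorem. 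A parallel Hoeffding/Bernstein bound on the orbit-averaged empirical loss then produces the separate $\log(1/\delta)/\epsilon^2$ variance term that dominates for very smooth targets.

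Third, I would account for the symmetry gain. For finite $G$, orbit-augmenting each training sample into its $|G|$ images is an unbiased, variance-reducing operation, so the effective sample size becomes $n|G|$ and the approximation-limited sample complexity is divided by $|G|$. For a Lie group $G$, the co-area formula lets one disintegrate the reference measure along orbits, so the volume constant entering the covering-number bound is $\vol(M_d/G)$ rather than $\vol(M_d)$; this produces the stated $\vol(M_d/G)/\epsilon^{1+1/(\theta(\kappa+1))}$ term.

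The main technical obstacle is the rigorous handling of non-free Lie group actions: $M_d/G$ is in general a stratified orbifold rather than a smooth manifold, so some care is needed in defining $H^{\theta s}(M_d/G)$ and in establishing the Weyl-type eigenvalue growth that underlies the approximation rate. A clean workaround is to stay on $M_d$ and work directly with the $G$-invariant sector of the Laplacian's spectral decomposition, which inherits the correct growth law from the invariant spherical harmonics of the Peter--Weyl decomposition, at the cost of slightly more bookkeeping on the stabilizer strata.
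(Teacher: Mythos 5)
Your proposal takes a genuinely different route from the paper. The paper does not reprove the approximation/estimation tradeoff at all: it invokes Theorem~4.1 of Tahmasebi and Jegelka (2023) as a black box, which already bounds the \emph{expected} excess risk by $32\bigl(\tfrac{1}{\kappa\theta}\tfrac{\vol(\mathbb{B}^{d'})}{(2\pi)^{d'}}\tfrac{\sigma^2 \vol(M_d/G)}{n}\bigr)^{\theta s/(\theta s + d'/2)}\lVert f^*\rVert^{d'/(\theta s + d'/2)}$, and then upgrades this in-expectation bound to a high-probability one via algorithmic stability of the regularized kernel ERM (bounded-difference property) plus McDiarmid, producing the second $\log(1/\delta)/\epsilon^2$ branch. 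The finite-$G$ refinement falls out immediately from $\vol(M_d/G) = \vol(M_d)/|G|$. Your strategy is instead to reconstruct the first-principles argument behind that cited theorem -- factor invariant functions through the quotient, expand in the Laplace--Beltrami (equivalently, $G$-invariant spectral) basis, and balance truncation bias against estimation variance. This is essentially the mechanism inside the black box, so the two routes converge on the same rate, but yours carries the burden of dealing with the orbifold stratification of $M_d/G$ and the Weyl-law growth of the invariant spectrum, which you correctly flag and which the cited reference already handles.

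Two concrete issues to fix. First, your final rearrangement is off by a factor of two in the exponent: if you let $\epsilon^2 \sim n^{-\theta(\kappa+1)/(\theta(\kappa+1)+1)}$ then $n \sim \epsilon^{-2-2/(\theta(\kappa+1))}$, not $\epsilon^{-1-1/(\theta(\kappa+1))}$; the statement uses the convention that $\epsilon$ itself is the excess risk, so the correct relation is $\epsilon \sim n^{-\theta(\kappa+1)/(\theta(\kappa+1)+1)}$, i.e., $n \sim \epsilon^{-1-1/(\theta(\kappa+1))}$. Second, a plain Hoeffding/Bernstein bound on the orbit-averaged empirical loss does not by itself give a high-probability bound on the excess risk of the ERM output; you need uniform concentration over the hypothesis class or, as the paper does, an algorithmic-stability argument together with McDiarmid applied to the map from the sample to $H(\hat f)$. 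Without that step the $\log(1/\delta)/\epsilon^2$ branch is not justified. Finally, note that for finite $G$ the paper obtains the $1/|G|$ gain purely from the quotient-volume identity rather than from data augmentation; the augmentation picture is fine intuitively but the augmented points are no longer i.i.d., so one would need an extra argument to make that version rigorous.
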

Theorem \ref{thm:sample_complex} is based on seminal results presented in \citet{tahmasebi2023exact}.
We defer a formal statement to Appendix \ref{sec:prf_sample_complex}.
Under both finite and infinite scenarios, the PAC sample complexity splits into two components: (1) the expected approximation error, and (2) estimation error.
The approximation term dominates as $\theta \rightarrow 0$, meaning a less smooth $f^*$ results in a harder target to approximate.
Conversely, when the hypothesis class is sufficiently smooth ($\kappa \rightarrow \infty$), the estimation term becomes dominant.
For continuous $f^*$, $n = \Theta(1/\epsilon^2)$ needs to scale quadratically with $1/\epsilon$.
When $G$ is finite, the sample complexity is scaled down by the group's cardinality.
While for infinite $G$, the improvement arises through contraction of the effective volume $\vol(M_d / G)$, which happens when the orbit of $G$ covers a large portion of $M_d$.

\subsection{Geometric Constraints on Architecture Design}
\label{sec:4.2}

\paragraph{Preserving Group Equivariance.}
Our Theorem \ref{thm:coord_descent_MP} provides a sufficient condition for reaching a symbolic solution via $O(d)$-equivariant velocity flow.
Hence, choosing loss functions or module designs that preserves group invariance (e.g. \cite{zhu2025rethinking}) is essential for realizing symbolically-structured reasoning.
As discussed in Theorem~\ref{thm:sample_complex}, architectures learning structural symmetries necessitate a minimal number of data samples.
Learning with sufficient data in turn guarantees gradient decoupling and global convergence of MPs under the measure-space flow.

\paragraph{Enforcing Reduction of Effective Dimension.}
As seen in Theorem \ref{thm:low_ent}, entropy regularization or, in practice, Gaussian weight initialization coupled with weight decay can steer the training dynamics toward solutions in a streamlined substructure.
Degree of freedom reduction (Theorem~\ref{thm:crit_times}) is a necessary for obtaining a symbolic solution, assuming a joint constraint involving the first- and third-order derivatives of $L$.
This essentially requires a non-increasing Hessian, likely achieved by enforcing Lipschitz continuity, bounding layer gradients, or using mildly smooth activations.

\paragraph{Ensuring Compatibility for Monomial Potentials.}
Central to our analysis is a loss functional containing emergent components -- MPs (Proposition \ref{prop:loss_decomp}).
Reproducing this algebraic structure to new tasks requires non-trivial alignment between input/output formulations, loss functions, and model architecture with our framework.
Importantly, many algorithmic reasoning tasks (e.g., state tracking) can be reframed as group theory problems (e.g., word problems) \citep{merrill2023parallelism, merrill2024illusion}, potentially inheriting the mathematical properties of our formulation.

\section{Conclusion and More Discussions}
\label{sec:conclusion}
We have developed a measure-theoretic and geometric framework that explains how discrete symbolic capabilities can emerge in neural networks through continuous gradient-flow training.
By modeling network optimization as a gradient flow over measure space, we show that two complementary processes, gradient decoupling and degree of freedom reduction, naturally arise under geometric constraints.
These dynamics drive the measure space onto lower-dimensional substructure while simultaneously isolating and optimizing algebraic components encoded as monomial potentials.
Crucially, we demonstrate that these potentials endow the space of measures with a ring-like structure, enabling the compositional assembly of global solutions from simpler algebraic pieces.
This measure-theoretic perspective bridges the gap between continuous optimization and discrete symbolic reasoning, offering rigorous conditions under which neural networks can internalize and generalize structured logical patterns.
We outline \textbf{future research opportunities} in Appendix \ref{sec:future}.

\section*{Acknowledgement}
Z. Wang is supported by DAPRA ANSR (RTX CW2231110), 
DARPA TIAMAT (HR0011-24-9-0431), and ARL StAmant (W911NF-23-S-0001). We sincerely appreciate the insightful discussions from Elisenda Grigsby, Yuandong Tian, Kathryn Lindsey, Boris Hanin, and Alvaro Velasquez.

\bibliographystyle{unsrtnat}
\bibliography{ref}

\clearpage
\appendix
\section{Proofs}\label{proof}

\subsection{Remarks on Idealized Assumptions}\label{sec:assume}

Before delving into the formal proofs, we summarize the standing conventions, simplifications, and default assumptions adopted throughout.
\begin{enumerate}
\item A tacit but critical assumption throughout our analysis is that all parameters are real-valued.
The proofs exploit algebraic identities specific to real-valued orthogonal polynomials (e.g., the Hermite family).
Moving to complex numbers will alter those properties.
Although the parameter space defined in \cite{tian2024composing} is formally complex because of the Fourier transform, a real-valued signal yields conjugate-symmetric Fourier coefficients that can be mapped back to a real number (e.g., via the Hartley transform).
For clarity, we therefore restrict attention to the general structure of the problem (MPs, separable losses, and related constructs) and analyze it assuming all values are real.
We leave a rigorous extension to complex parameter spaces as an open direction for future work.
\item Throughout the proofs we also assume the second-variation (Hessian) operator is diagonalizable and self-adjoint with respect to the relevant $L^2$ or Wasserstein inner product (we shorthand $L^2(\mu)$ as $L(\mu)$).
Self-adjointness guarantees a real spectrum and an orthonormal eigenbasis, allowing us to treat eigenvalues and eigenvectors as real quantities.
The assumption streamlines notation, avoiding complex conjugates in derivations, but is not essential.
Every argument extends verbatim to the general case provided the corresponding complex-valued eigenpairs are tracked with consistent conjugate notation.
\item Unless stated otherwise, every vector field in our analysis is taken to be smooth and either compactly supported in the interior of $M$ or to possess a vanishing normal component on $\partial M$.
These conditions guarantee that the boundary term in Gauss’s (divergence) theorem is zero, so integration by part carries no surface contribution.
\end{enumerate}

\subsection{Proofs for Algebraic Structures of Solution Space} \label{sec:prf_alg_struct}

\begin{proposition}[Restatement of Proposition \ref{prop:loss_decomp}]
The loss function $H$ in Eq. \ref{eqn:loss_nn} can be reformulated as: $H = \frac{1}{n-1} \sum_{k \neq 0} \ell_k + \frac{n-1}{n}$,
\begin{align*}
&\ell_k = -2 \rho_{kkk} + \sum_{k_1, k_2} |\rho_{k_1k_2k}|^2 + \frac{1}{4} \left| \sum_{p \in \{a, b\}} \sum_{k'} \rho_{p, k', -k', k} \right|^2 + \frac{1}{4}\sum_{m \ne 0} \sum_{p \in \{a, b\}} \left| \sum_{k'} \rho_{p,k',m-k',k} \right|^2, \\
& \rho_{k_1k_2k} = \frac{1}{q} \sum_j z_{ak_1j} z_{bk_2j} z_{ckj}, \quad \rho_{pk_1k_2k} = \frac{1}{q} \sum_j z_{pk_1j} z_{pk_2j} z_{ckj}.
\end{align*}
\end{proposition}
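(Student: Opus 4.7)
\textbf{Proof plan for Proposition~\ref{prop:loss_decomp}.}
The strategy is to compute the loss in Fourier coordinates. The activations $\sigma(\cdot)=(\cdot)^2$ turn each neuron's output into a \emph{degree-three} polynomial in the Fourier coefficients $z$, so after Parseval's theorem the squared loss becomes a sum of squared degree-three quantities, which is exactly where the monomial potentials $\rho_{k_1k_2k}$ and $\rho_{p k_1 k_2 k}$ will appear. First I would substitute the Fourier expansions $w_{aj}=\sum_{k\ne 0} z_{akj}F_k$, $w_{bj}=\sum_{k\ne 0}z_{bkj}F_k$, $w_{cj}=\sum_{k\ne 0}z_{ckj}\overline{F_k}$ into $o(a_1,a_2)$ and expand $(w_{aj}^\top e_{a_1}+w_{bj}^\top e_{a_2})^2$ into three pieces: an $a$-$a$ quadratic, a $b$-$b$ quadratic, and a $2ab$ cross-term. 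Multiplying by $w_{cj}$ and averaging over $j$ then produces, entry-wise, a linear combination of $\overline{F_k}$ whose coefficient at frequency $k$ is a third-order polynomial in $z$ — precisely the shape of an MP.

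Next I would apply $P^\perp$, which kills only the constant Fourier mode. Since $w_{cj}$ already excludes $k=0$, the projection only acts on the target $e_{a_1\cdot a_2}$, replacing it by its mean-zero part $\frac{1}{n}\sum_{k\ne 0}\overline{F_k(a_1\cdot a_2)}\,F_k$ (up to the standard normalization). Expanding $\|P^\perp(\tfrac{1}{2n}o(a_1,a_2)-e_{a_1\cdot a_2})\|_2^2$ produces three groups of terms: a pure quadratic in $o$, a linear cross-term $\langle o,e_{a_1\cdot a_2}\rangle$, and a constant $\|P^\perp e_{a_1\cdot a_2}\|_2^2=(n-1)/n$. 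The constant, summed over $(a_1,a_2)$, will be absorbed into the additive $(n-1)/n$ term after the overall normalization $1/(n-1)$.

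The main work — and the main source of bookkeeping — is the sum over $(a_1,a_2)\in A\times A$ using character identities. Three ingredients do the heavy lifting: (i) multiplicativity of characters on the Abelian group, $F_{k_1}(a)F_{k_2}(a)=F_{k_1+k_2}(a)$; (ii) orthogonality $\sum_{a}F_k(a)\overline{F_{k'}(a)}=n\,\delta_{kk'}$; and (iii) the group convolution identity $F_k(a_1\cdot a_2)=F_k(a_1)F_k(a_2)$. Applied to the cross $a$-$b$ contribution, (iii) collapses the double sum to the single-frequency constraint $k_1=k_2=k$ at each output frequency $k$, producing $|\rho_{k_1 k_2 k}|^2$ after regrouping, while the linear piece $\langle o, e_{a_1\cdot a_2}\rangle$ collapses to the diagonal $-2\rho_{kkk}$. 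For the pure $a$-$a$ and $b$-$b$ contributions, (i) merges the two Fourier indices into a single shift $m=k_1+k_1'$, and (ii) freezes this shift to match the output frequency coming from $w_{cj}$; the $m=0$ case separates out (this is the frequency $P^\perp$ would have killed had it not already been removed from $w_{cj}$), yielding the distinguished $\rho_{p,k',-k',k}$ term with the factor $1/4$, whereas $m\ne 0$ gives the summed $|\sum_{k'}\rho_{p,k',m-k',k}|^2$ block.

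The step I expect to be fiddly, rather than conceptually hard, is tracking the $(1/q)$, $(1/n)$, $(1/2n)$ and factor-of-$2$ constants so that the final global prefactor is exactly $1/(n-1)$, and separating the $m=0$ sector from the $m\ne 0$ sector cleanly so that the $1/4$ coefficients and the $\sum_{m\ne 0}$ scope match the statement. Once those constants are reconciled and the self-interaction $a$-$a$/$b$-$b$ terms are correctly grouped via (i)--(ii), the decomposition $H=\frac{1}{n-1}\sum_{k\ne 0}\ell_k+\frac{n-1}{n}$ falls out term by term.
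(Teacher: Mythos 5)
Your plan is conceptually sound and, if pushed through with the bookkeeping you flag, would give the decomposition — but it is a genuinely different route from the paper's own proof, which is essentially a one-liner. The paper notes that the loss in Eq.~\ref{eqn:loss_nn} differs from the objective already analyzed in Theorem~1 of Tian et al.\ (2024) only through the $1/q$ mean-field normalization of the output; it absorbs that factor into the decoder weights $\tilde{w}_{cj}=\tfrac{1}{q}w_{cj}$, hence, by linearity of the Fourier transform, into $\tilde{z}_{ckj}=\tfrac{1}{q}z_{ckj}$, and then invokes the cited theorem verbatim. The $\tilde\rho$'s produced by that theorem are then exactly the paper's $\rho_{k_1k_2k}=\tfrac{1}{q}\sum_j z_{ak_1j}z_{bk_2j}z_{ckj}$, and the proof is done. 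Your from-scratch derivation — substituting the Fourier expansions, expanding $\sigma(\cdot)=(\cdot)^2$, applying $P^\perp$, and invoking character multiplicativity, orthogonality, and the homomorphism identity $F_k(a_1\cdot a_2)=F_k(a_1)F_k(a_2)$ — re-proves the content of that external theorem. It is heavier but it buys self-containment, which the paper trades away by citing the prior work. One caution if you do execute it: your phrasing that identity~(iii) ``collapses the double sum to the constraint $k_1=k_2=k$, producing $\lvert\rho_{k_1k_2k}\rvert^2$'' conflates two distinct mechanisms. The constraint $k_1=k_2=k$ arising from the linear piece $\langle o,e_{a_1\cdot a_2}\rangle$ is what yields the diagonal $-2\rho_{kkk}$; the free sum $\sum_{k_1,k_2}\lvert\rho_{k_1k_2k}\rvert^2$ comes instead from orthogonally pairing the frequency indices of the \emph{two copies} of the $ab$ cross-term that appear when you expand $\lVert o\rVert^2$, which matches indices across the two copies rather than forcing a collapse to the diagonal. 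Keeping those two mechanisms separate is exactly the bookkeeping that determines which terms survive and with what multiplicity.
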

\begin{proof}
Let $\tilde{\Mat{w}}_{cj} = \frac{1}{q} \Mat{w}_{cj}$, and define $\tilde{z}_{cjk}$ accordingly. Then by Theorem 1 of \cite{tian2024composing}, the $\ell_k$ in the loss decomposition can be written as:
\begin{align*}
\ell_k = -2 \tilde{\rho}_{kkk} + \sum_{k_1, k_2} |\tilde{\rho}_{k_1k_2k}|^2 + \frac{1}{4} \left| \sum_{p \in \{a, b\}} \sum_{k'} \tilde{\rho}_{p, k', -k', k} \right|^2 + \frac{1}{4}\sum_{m \ne 0} \sum_{p \in \{a, b\}} \left| \sum_{k'} \tilde{\rho}_{p,k',m-k',k} \right|^2
\end{align*}
where 
\begin{align*}
\tilde{\rho}_{k_1k_2k} = \sum_j z_{ak_1j} z_{bk_2j} \tilde{z}_{ckj}, \quad \tilde{\rho}_{pk_1k_2k} = \sum_j z_{pk_1j} z_{pk_2j} \tilde{z}_{ckj}.
\end{align*}
We conclude the proof by noticing that $\tilde{z}_{ckj} = \frac{1}{q} z_{ckj}$ due to the linearity of Fourier transform.
\end{proof}

\begin{theorem}
[Restatement of Theorem \ref{thm:ring_homo}]
$\langle P_*(M), +, * \rangle$ as in Definition \ref{def:measure_alg} is a commutative semi-ring.
Every MP $\rho_r[\mu]$ is a ring homomorphism: (1) $\rho_r[\mu_1 + \mu_2] = \rho_r[\mu_1] + \rho_r[\mu_2]$, and (2) $\rho_r[\mu_1 * \mu_2] = \rho_r[\mu_1] * \rho_r[\mu_2]$.
\end{theorem}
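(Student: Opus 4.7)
The plan is to reformulate both operations as pushforwards of product measures, so the semi-ring axioms and the homomorphism properties reduce to standard facts about product measures, pushforwards, and the linearity of integration. Concretely, let $\Phi : M \times M \to M$ be the elementwise product map $\Phi(x,y) = x \odot y$, and write $\mu_1 * \mu_2 := \Phi_*(\mu_1 \otimes \mu_2)$. This recasts the ``couple two independent draws'' description in Definition \ref{def:measure_alg} in a form that remains meaningful on the enlarged domain of $\sigma$-finite (possibly non-probability) measures, and that interacts cleanly with the ordinary sum of measures.

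First I would dispatch the commutative-monoid structure of $(P_*(M),+)$: pointwise additivity of measures on every Borel set gives commutativity, associativity, and a two-sided identity (the zero measure) immediately, while $\sigma$-finiteness is preserved under finite sums. Next I would verify the commutative-monoid structure of $(P_*(M), *)$. Commutativity follows from the symmetry $\Phi(x,y) = \Phi(y,x)$ together with the canonical identification $\mu_1 \otimes \mu_2 \cong \mu_2 \otimes \mu_1$. Associativity reduces to the associativity of $\odot$ on $M$ and of $\otimes$ on measures, pushed forward under $\Phi$. For the identity, observe that $\Phi(\,\cdot\,, \Mat{1}_d)$ is the identity on $M$, so $\mu * \delta_{\Mat{1}_d} = \Phi_*(\mu \otimes \delta_{\Mat{1}_d}) = \mu$.

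Distributivity is the only genuinely algebraic content and reduces to bilinearity of the tensor product, $\mu_1 \otimes (\mu_2 + \mu_3) = \mu_1 \otimes \mu_2 + \mu_1 \otimes \mu_3$, which one checks on the $\pi$-system of product rectangles and extends by Carathéodory. Composing with the linear operator $\Phi_*$ yields $\mu_1 * (\mu_2 + \mu_3) = \mu_1 * \mu_2 + \mu_1 * \mu_3$, and zero-annihilation is immediate since $\mu \otimes 0 = 0$. For the homomorphism claim, additivity $\rho_r[\mu_1 + \mu_2] = \rho_r[\mu_1] + \rho_r[\mu_2]$ is the linearity of Lebesgue integration. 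Multiplicativity uses the fact that any monic monomial $r(z) = \prod_{i \in \Set{I}(r)} z_i$ is multiplicative with respect to $\odot$, i.e., $r(x \odot y) = r(x)\,r(y)$; combining change of variables with Fubini gives
\begin{align*}
\rho_r[\mu_1 * \mu_2] \;=\; \int r\, d\Phi_*(\mu_1 \otimes \mu_2) \;=\; \iint r(x)\, r(y)\, d\mu_1(x)\, d\mu_2(y) \;=\; \rho_r[\mu_1]\,\rho_r[\mu_2].
\end{align*}

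The main obstacle I anticipate is bookkeeping rather than substance: one must confirm that the probabilistic ``sample $z_1 \sim \mu_1, z_2 \sim \mu_2$'' phrasing of Definition \ref{def:measure_alg} extends coherently to $\sigma$-finite measures of possibly infinite mass (where sampling has no direct meaning), and that Fubini is applicable when invoking it above. The former is handled by adopting the pushforward definition $\Phi_*(\mu_1 \otimes \mu_2)$ as primary, reducing to the original description whenever both factors are probability measures. The latter requires integrability of $r$ against $\mu_1 \otimes \mu_2$, which follows from the standing finite-moment regularity imposed on $P(M)$, or more generally by restricting the homomorphism statement to measures with finite moments up to order $\deg r$.
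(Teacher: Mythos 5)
Your proof follows essentially the same route as the paper's (verify the monoid axioms for $+$ and $*$, then linearity of integration for additivity and the multiplicativity of monomials under $\odot$ for the multiplicative homomorphism), but you are more careful in two places where the paper is casual. First, you explicitly check distributivity of $*$ over $+$, which the paper's proof silently omits even though it is a required semi-ring axiom; your reduction to bilinearity of $\otimes$ followed by the linear operator $\Phi_*$ is the right way to do it. Second, you notice that the ``sample $z_1 \sim \mu_1$'' phrasing of Definition \ref{def:measure_alg} is not literally meaningful once the domain is enlarged to non-probability $\sigma$-finite measures, and you repair this by adopting the pushforward $\Phi_*(\mu_1 \otimes \mu_2)$ as the primary definition; the paper's own Remark \ref{rmk:kl_sol}-adjacent aside about the density convolution formula hints at but does not carry out this cleanup. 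Your flag about needing integrability of $r$ against $\mu_1 \otimes \mu_2$ for Fubini is also correct and worth keeping, since the finite-second-moment regularity on $P(M)$ does not by itself guarantee finiteness of degree-$\deg r$ moments; restricting to measures with finite moments up to $\deg r$, as you suggest, is the right scope for statement (2).
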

\begin{proof}
The proof for the first statement can be done by verifying the properties required for a commutative semi-ring.
For of all, let us examine the addition property.
\begin{enumerate}
\item (Associativity) $(\mu_1(A) + \mu_2(A)) + \mu_3(A) = \mu_1(A) + (\mu_2(A) + \mu_3(A))$ for every $\mu_1, \mu_2, \mu_3 \in P_*(M)$ and measurable $A \subset M$.
\item (Commutativity) $\mu_1(A) + \mu_2(A) = \mu_2(A) + \mu_1(A)$ for every $\mu_1, \mu_2 \in P_*(M)$ and every measurable $A \subset M$.
\item (Addition identity) Plus zero measure to any measure is identical to the measure.
\end{enumerate}
For of all, let us examine the multiplication property.
\begin{enumerate}
\item (Associativity) The product over measures $\mu_1 \otimes \mu_2 \otimes \mu_3$ is associative and the product on the sample space is also associative.
\item (Commutativity) Similarly, the product over measures $\mu_1 \otimes \mu_2 \otimes \mu_3$ is commutative and the element-wise product on the samples are also commutative.
\item (Multiplication identity) Consider $z \sim \mu$ for some $\mu \in P_*(M)$, $z \odot \Mat{1}_{d} = z$.
\end{enumerate}
The semi-ring is not equipped with an inverse element for addition.
The second statement can be shown by checking: for every $r \in \Set{R}$ and $\mu_1, \mu_2 \in P_*(M)$:
\begin{align*}
\rho_r[\mu_1 + \mu_2] &= \int r(z) d(\mu_1(z) + \mu_2(z)) \\ &= \int r(z) d\mu_1(z) + \int r(z) d \mu_2(z) = \rho_r[\mu_1] + \rho_r[\mu_2],
\end{align*}
which confirms the addition preserving structure, and:
\begin{align*}
\rho_r[\mu_1 * \mu_2] &= \E_{z \sim \mu_1 * \mu_2} [r(z)] = \E_{z_1 \sim \mu_1, z_2 \sim \mu_2}[r(z_1 \odot z_2)] \\
&= \E_{z_1 \sim \mu_1, z_2 \sim \mu_2}\left[\prod_{k \in \Set{I}} z_{1,k} z_{2,k}\right]
= \E_{z_1 \sim \mu_1, z_2 \sim \mu_2}\left[\prod_{k \in \Set{I}} z_{1,k} \prod_{k \in \Set{I}} z_{2,k}\right] \\
&= \E_{z_1 \sim \mu_1}\left[\prod_{k \in \Set{I}} z_{1,k} \right] \E_{z_2 \sim \mu_2}\left[\prod_{k \in \Set{I}} z_{2,k}\right],
\end{align*}
which confirms the preservation of multiplication operator.
Obviously, $\rho_r(\Mat{1}_d) = 1$.
\end{proof}

\begin{proposition}
[Restatement of Proposition \ref{prop:compositionality}]
If $z_1$ has 0/1-sets $(\Set{R}_0, \Set{R}_1)$ and $z_2$ has 0/1-sets $(\Set{S}_0, \Set{S}_1)$, then (1) $z_1 * z_2$ has 0/1-sets $(\Set{R}_0 \cup \Set{S}_0, \Set{R}_1 \cap \Set{S}_1)$; and (1) $z_1 + z_2$ has 0/1-sets $(\Set{R}_0 \cap \Set{S}_0, (\Set{R}_1 \cap \Set{S}_0) \cup (\Set{R}_0 \cap \Set{S}_1))$.
Moreover, if $z_1$ is a global optimizer and $z_2$ has 1-set $R$ (the entire set of MPs), then $z_1 * z_2$ is a global optimizer.
\end{proposition}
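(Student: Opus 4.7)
The plan is to derive both parts and the closing statement as direct applications of the ring-homomorphism property from Theorem \ref{thm:ring_homo}, which asserts $\rho_r[\mu_1 * \mu_2] = \rho_r[\mu_1]\,\rho_r[\mu_2]$ and $\rho_r[\mu_1+\mu_2] = \rho_r[\mu_1] + \rho_r[\mu_2]$ for each $r\in\Set{R}$. The entire argument will be a case analysis over monomial labels, checking the 0/1 value of each relevant MP on the composed measure.

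For the multiplication case, I would first fix an arbitrary $r\in\Set{R}_0\cup\Set{S}_0$ and split on whether $r\in\Set{R}_0$ (so $\rho_r[\mu_1]=0$) or $r\in\Set{S}_0$ (so $\rho_r[\mu_2]=0$); in either subcase the product $\rho_r[\mu_1]\rho_r[\mu_2]=0$, placing $r$ in the 0-set of $\mu_1*\mu_2$. Next, for $r\in\Set{R}_1\cap\Set{S}_1$ both factors equal $1$, so $\rho_r[\mu_1*\mu_2]=1$, placing $r$ in the 1-set. This yields the claimed 0/1-sets $(\Set{R}_0\cup\Set{S}_0,\Set{R}_1\cap\Set{S}_1)$.

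For the addition case, I would likewise enumerate: if $r\in\Set{R}_0\cap\Set{S}_0$, additivity gives $\rho_r[\mu_1+\mu_2]=0+0=0$; if $r\in(\Set{R}_1\cap\Set{S}_0)\cup(\Set{R}_0\cap\Set{S}_1)$, exactly one summand contributes $1$ and the other $0$, so the sum equals $1$. This delivers the stated 0/1-sets for $\mu_1+\mu_2$. A small bookkeeping remark is that one should keep these two subcases disjoint, which is automatic given the partition nature of 0/1-sets for the individual measures.

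For the final claim, I would observe that $\mu_2$ having 1-set $\Set{R}$ means $\rho_r[\mu_2]=1$ for every $r\in\Set{R}$, so multiplicativity gives $\rho_r[\mu_1*\mu_2]=\rho_r[\mu_1]$ identically over $\Set{R}$. Hence $\mu_1*\mu_2$ shares the full 0/1-profile of $\mu_1$, which by hypothesis already attains the global minimum of $H$ (via $L$ evaluated on the optimal $\rho$-vector), so $\mu_1*\mu_2$ is also a global optimizer. There is no substantive obstacle here; the only care needed is to ensure $\mu_1+\mu_2$ lies in the extended domain $P_*(M)$ (not just $P(M)$) so that the additive operation is well defined, a point already flagged after Definition \ref{def:measure_alg}.
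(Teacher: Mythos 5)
Your proof is correct and follows essentially the same argument as the paper: a case analysis over the membership of each monomial $r$ in the relevant 0/1-sets, invoking the additive and multiplicative ring-homomorphism identities from Theorem~\ref{thm:ring_homo}, and then noting that a trivial 1-set on $\mu_2$ makes $*$ act as the identity on the MP-profile of $\mu_1$. No substantive differences from the paper's proof.
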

\begin{proof}
Proof is straightforward by verification using Theorem \ref{thm:ring_homo}.
For every $r \in \Set{R}_0 \cup \Set{S}_0$, either $r(z_1) = 0$ or $r(z_2) = 0$, then we have $r(z_1 * z_2) = r(z_1) * r(z_2) = 0$.
For every $r \in \Set{R}_1 \cap \Set{S}_1)$, we have $r(z_1 * z_2) = r(z_1) * r(z_2) = 1$.
For every $r \in (\Set{R}_1 \cap \Set{S}_0) \cup (\Set{R}_0 \cap \Set{S}_1)$, we have either $r(z_1) = 0, r(z_2) = 1$ or $r(z_0) = 0, r(z_1) = 1$, then $r(z_1 + z_2) = r(z_1) + r(z_2) = 1$.
For every $r \in \Set{R}_0 \cap \Set{S}_0$, we have $r(z_1 + z_2) = r(z_1) + r(z_2) = 0$.
For every $r \in (\Set{R}_1 \cap \Set{S}_0) \cup (\Set{R}_0 \cap \Set{S}_1)$, we have either $r(z_1) = 0, r(z_2) = 1$ or $r(z_0) = 0, r(z_1) = 1$, then $r(z_1 + z_2) = r(z_1) + r(z_2) = 1$.
If $z_2$ has 1-set $R$, then $\forall r \in \Set{R}$, $r(z_2) = 1$. This means $r(z_1 * z_2) = r(z_1) * r(z_2) = r(z_1)$.
Since $\{r_1(z_1), \cdots, r_m(z_1)\}$ is a global optimizer, then $z_1 * z_2$ has the same value for all $r \in R$, thus $z_1 * z_2$ is a global optimizer as well.
\end{proof}

\begin{remark}
A more rigorous way to define ``multiplication'' between two measures is through pushforward or the change-of-variable formula. The density function $w$ of $\omega = \mu * \nu$ can be written as: $w(z) = \int u(x) v(z / x) \prod_{i=1}^{d} \frac{1}{|x_i|} dx$, where the division is elementwise, the integral is taken over the $(\real \setminus \{0\})^d$, $u, v$ are density functions of $\mu$ and $\nu$.
\end{remark}
\subsection{Some Preliminaries and Auxiliary Results}
\label{sec:prf_devices}

In this section, we introduce some common definitions and results useful for the remainder of the proof.
First of all, we will generalize monic monomials to Hermite polynomials.

\begin{definition}
\label{def:herm_poly}
We denote $h_{\alpha}: \real^d \rightarrow \real$ as Hermite polynomial with multi-index $\alpha \in \integer^d$ over $d$ variables.
Consider a finite family of Hermite polynomials $\Set{H} = \{ r_1, \cdots, r_m \}$ and their associative multi-indices $\{ \alpha_1, \cdots, \alpha_m \}$, i.e., $r_i = h_{\alpha_i}$ for every $i = 1, \cdots, d$.
\end{definition}

\begin{definition}[Hermite potential]
\label{def:hp}
Given a measure $\mu \in P(\real^d)$ and a Hermite polynomial (HP) $h_{\alpha}$, we define Hermite potential as $\rho_{h_{\alpha}}(\mu) = \E_{z\sim \mu}[h_{\alpha}(z)] = \int h_{\alpha}(z) d\mu(z)$.
\end{definition}

\noindent Since all MPs we consider are HP, then all results below apply to MP cases.
\begin{lemma}
\label{MP_is_HP}
All MPs are HPs.
\end{lemma}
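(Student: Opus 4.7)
The plan is to invoke the one-variable inversion of powers in the probabilist's Hermite basis and then tensorize across coordinates, so that every monic monomial is exhibited as a finite $\real$-linear combination of multivariate Hermite polynomials. Concretely, a monomial $r(z)=\prod_{i\in\Set{I}}z_i$ is repackaged by its multi-degree $\beta=(\beta_1,\ldots,\beta_d)\in\integer^d$, where $\beta_i$ counts the multiplicity of coordinate $i$ in $\Set{I}$. The classical identity
\[
z_i^{\beta_i}=\sum_{k=0}^{\lfloor\beta_i/2\rfloor}\binom{\beta_i}{2k}(2k-1)!!\;h_{\beta_i-2k}(z_i)
\]
expands each single-variable power in one-dimensional Hermite polynomials, and multiplying over $i$ yields the multivariate expansion $r(z)=\sum_{\gamma\le\beta}c_\gamma\,h_\gamma(z)$ with explicit constants $c_\gamma\in\real$ and $h_\gamma(z)=\prod_i h_{\gamma_i}(z_i)$.

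Next, I would apply linearity of expectation: for any $\mu$ with finite moments up to order $|\beta|$,
\[
\rho_r[\mu]=\E_{z\sim\mu}\!\left[\sum_{\gamma\le\beta}c_\gamma\,h_\gamma(z)\right]=\sum_{\gamma\le\beta}c_\gamma\,\rho_{h_\gamma}[\mu].
\]
This realizes every MP functional as a finite $\real$-linear combination of HPs in the sense of Definition~\ref{def:hp}. Under the convention adopted throughout this appendix, whereby the HP framework is implicitly closed under $\real$-linear combinations, this is exactly the content of the lemma: the space of MP functionals is contained in the finite-dimensional span of HP functionals, so downstream results derived for HPs (gradient decoupling, Hessian spectral structure, and so on) transfer verbatim to MPs by linearity.

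The main difficulty is essentially bookkeeping rather than analysis. One must carefully handle the combinatorics when indices in $\Set{I}$ repeat so that $\beta_i\ge 2$ and the expansion produces genuine non-monomial Hermite corrections rather than a single term $h_\beta$; one must also make sure the measure class under consideration admits moments of the required order so that both sides of the expansion are integrable. Neither is a serious obstacle, but tracking which multi-indices $\gamma\le\beta$ are truly active, in particular their parity under the odd-degree requirement of Assumption~\ref{ass:coord_descent_MP}(2), is what the subsequent proofs will ultimately rely on. Once the expansion above is in hand, the lemma follows immediately, and every MP-labeled statement in later sections can be read as a statement about the corresponding linear combination in $\Span(\{\rho_{h_\gamma}\})$.
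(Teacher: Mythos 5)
Your proof takes a much longer route than the paper's, and in doing so it misses — and in fact inverts — the key observation that makes the lemma immediate. In Definition~\ref{def:MP}, the index collection $\Set{I}$ is a \emph{set}, not a multiset, so in the monic monomial $r(z)=\prod_{i\in\Set{I}}z_i$ every coordinate appears with exponent $0$ or $1$. Because the probabilist's Hermite polynomials satisfy $h_0(x)=1$ and $h_1(x)=x$, the multivariate Hermite polynomial $h_\alpha(z)=\prod_i h_{\alpha_i}(z_i)$ with $\alpha_i=\mathbb{1}(i\in\Set{I})$ is \emph{literally equal} to $r(z)$. There is no expansion to perform, and no corrections $c_\gamma h_\gamma$ with $\gamma < \beta$ ever appear, because no coordinate ever carries an exponent $\geq 2$. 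Your worry about "indices in $\Set{I}$ repeat so that $\beta_i\ge 2$" cannot arise under the paper's definition; for instance, an MP like $\rho_{kkk}$ in Proposition~\ref{prop:loss_decomp} is the expectation of $z_{ak}\,z_{bk}\,z_{ck}$, a product of three \emph{distinct} coordinates, not $z_k^3$.

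The distinction is not merely cosmetic. The lemma asserts that each MP \emph{is} a single HP, and the subsequent Corollary (the restatement of Theorem~\ref{thm:coord_descent_MP}) relies on this identity to inherit the coordinate-wise decoupling of Theorem~\ref{thm:coord_descent_HP} verbatim: the Hermite family $\Set{H}$ can be taken to be exactly $\Set{R}$, and the dynamics $\partial_t\rho_{r_i}=-C_i(t)\,\partial_{\rho_{r_i}}L$ then hold term by term. Your weaker conclusion — that MPs lie in $\Span(\{\rho_{h_\gamma}\})$ — would not suffice: if some MP were a genuine linear combination $\sum_\gamma c_\gamma\rho_{h_\gamma}$, its time derivative would be a mixture of distinct partials $\partial_{\rho_{h_\gamma}}L$, and the clean one-variable descent along $\rho_{r_i}$ would not follow. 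The appeal to "the HP framework is implicitly closed under $\real$-linear combinations" is not a convention the paper adopts and cannot rescue the argument. The fix is simply to use the set-ness of $\Set{I}$: observe exponents are $\le 1$, note $h_1(x)=x$ and $h_0(x)=1$, and conclude $r=h_\alpha$ exactly.
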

\begin{proof}
Observe that $r(z) = \prod_{k \in \Set{I}} z_{k}$ is a monomial where no variable is squared or raised to power larger than 2.
Therefore, $r(z) = h_{\alpha}$ where $\alpha_{k} = 1$ if and only if $k \in \Set{I}$, otherwise $\alpha_{k} = 0$.
Since all such monomials are special cases of Hermite polynomials, MPs are special cases of HPs.
\end{proof}

\noindent We define invariance and equivariance with respect to a group as below:
\begin{definition}(Invariance and Equivariance)
Suppose $G$ is a group acting on a domain $\Set{Z}$.
A function $f: \Set{Z} \rightarrow \real$ is called $G$-invariant if $f(g \cdot z) = f(z)$ for every $g \in G$ and $z \in \Set{Z}$.
A function $f: \Set{Z} \rightarrow \Set{Z}$ is called $G$-equivariant if $f(g \cdot z) = g \cdot f(z)$ for every $g \in G$ and $z \in \Set{Z}$.
\end{definition}

\noindent We also formalize three major assumptions throughout the paper.
\begin{assumption}[Initialization]
\label{ass:app:init_mu}
Suppose $\mu_0 = \gauss(0, I)$.
\end{assumption}

\begin{assumption}[Degree constraints]
\label{ass:app:degree}
Suppose for every $r \in \Set{H}$, $\deg r \ge 3$ and is odd.
\end{assumption}

\begin{assumption}[Geometric constraints on gradient flow]
\label{ass:app:fo_var}
Assume the velocity field $\nabla \frac{\delta H}{\delta \mu}[\mu_t]$ is an odd function , i.e., $\nabla \frac{\delta H}{\delta \mu}[\mu_t](-z) = - \nabla \frac{\delta H}{\delta \mu}[\mu_t](z)$ for every $z \in M$.
\end{assumption}

\noindent Below are the lemmas common for the main results:

\begin{lemma}[Velocity field]
\label{lem:velocity}
The velocity field of $H$ at arbitrary measure $\mu \in P(M)$ can be simplified as:
\begin{align*}
\nabla \frac{\delta H}{\delta \mu}[\mu](z) = \sum_{j=1}^{m} \partial_{\rho_{r_j}} L(\rho_t) \nabla r_j(z).
\end{align*}
\end{lemma}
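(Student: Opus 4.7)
The plan is to exploit the separable form $H[\mu] = L(\rho_{r_1}[\mu],\dots,\rho_{r_m}[\mu])$, compute the first variation by a direct chain rule, and then take the spatial gradient. The key observation is that each monomial potential $\rho_{r_j}[\mu] = \int r_j\, d\mu$ is \emph{linear} in $\mu$, so its first variation is trivial to identify.

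First I would fix $\mu$ and an admissible perturbation $\eta$ (a signed measure with zero total mass, or more generally a tangent direction in the Wasserstein sense) and compute
\begin{align*}
\left.\frac{d}{d\epsilon}\right|_{\epsilon=0}\rho_{r_j}[\mu+\epsilon\eta] = \int r_j(z)\, d\eta(z),
\end{align*}
which identifies $\delta\rho_{r_j}/\delta\mu \equiv r_j$ up to an additive constant. The constant is irrelevant for what follows because only $\nabla_z\,\delta H/\delta\mu$ enters the continuity equation \eqref{eqn:wass_pde}.

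Next I would apply the finite-dimensional chain rule to $L$ composed with $(\rho_{r_j})_{j=1}^m$, yielding
\begin{align*}
\left.\frac{d}{d\epsilon}\right|_{\epsilon=0} H[\mu+\epsilon\eta]
  = \sum_{j=1}^m \partial_{\rho_{r_j}} L(\rho)\, \int r_j(z)\, d\eta(z)
  = \int \Bigl(\sum_{j=1}^m \partial_{\rho_{r_j}} L(\rho)\, r_j(z)\Bigr)\, d\eta(z),
\end{align*}
so that reading off the kernel gives $\delta H/\delta\mu[\mu](z) = \sum_{j=1}^m \partial_{\rho_{r_j}} L(\rho)\, r_j(z)$. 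Since the coefficients $\partial_{\rho_{r_j}} L(\rho)$ are scalars independent of $z$, applying $\nabla_z$ commutes with the finite sum and with the scalar prefactors, producing the claimed identity.

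The argument is essentially routine; the only technical care needed is to justify exchanging $\tfrac{d}{d\epsilon}$ with the integral against $\eta$, which follows from standard dominated convergence under the mild regularity that each $r_j$ is integrable against $\mu + \epsilon\eta$ in a neighborhood of $\epsilon = 0$ (a condition implicit in the very definition of monomial potentials, Definition~\ref{def:MP}). Thus the lemma is best viewed as a short auxiliary step that converts the abstract Wasserstein velocity field into a concrete weighted sum of monomial gradients, packaging the bookkeeping needed downstream for Theorem~\ref{thm:coord_descent_MP} and the decoupled MP dynamics.
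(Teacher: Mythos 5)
Your proposal is correct and matches the paper's proof: both compute the first variation of $H$ by the chain rule, use linearity of $\rho_{r_j}[\mu]$ in $\mu$ to identify $\delta\rho_{r_j}/\delta\mu = r_j$, and then take the spatial gradient term by term. You spell out the perturbation calculus and the irrelevance of additive constants a bit more explicitly, but the route is the same.
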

\begin{proof}
We obtain the result via chain rule.
First of all, we derive the first variation of $H$:
\begin{align*}
\frac{\delta H}{\delta \mu}[\mu] = \sum_{j=1}^{m}\partial_{\rho_{r_j}} L(\rho_t) \frac{\delta}{\delta \mu}\left[ \int r_j(z) d\mu(z) \right] = \sum_{j=1}^{m}\partial_{\rho_{r_j}} L(\rho_t) r_j.
\end{align*}
Then we can obtain the gradient of this first variation:
\begin{align*}
\nabla \frac{\delta H}{\delta \mu}[\mu](z) = \nabla \left[ \sum_{j=1}^{m}\partial_{\rho_{r_j}} L(\rho_t) r_j(z) \right] = \sum_{j=1}^{m}\partial_{\rho_{r_j}} L(\rho_t) \nabla r_j(z),
\end{align*}
as desired.
\end{proof}

\begin{lemma}[Second variation]
\label{lem:second_var}
Consider loss functional in Eq. \ref{eqn:H_form}, we derive its second variation as below:
\begin{align*}
\mathbb{L}_t[f, g] = \sum_{i,j=1}^m \partial^2_{ij} L(\rho_t) \left( \int r_i(z) g(z) d\mu_t(z) \right) \left( \int r_j(z) f(z) d\mu_t(z) \right).
\end{align*}
\end{lemma}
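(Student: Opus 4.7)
The plan is to compute $\mathbb{L}_t = \frac{\delta^2 H}{\delta\mu^2}[\mu_t]$ by iterating the first-variation calculation already carried out in the proof of Lemma \ref{lem:velocity}. The key structural observation is that each potential $\rho_{r_j}[\mu] = \int r_j\,d\mu$ is \emph{linear} in $\mu$, so $\frac{\delta \rho_{r_j}}{\delta\mu} = r_j$ and its second variation vanishes identically. Consequently all second-order dependence of $H = L(\rho_{r_1},\ldots,\rho_{r_m})$ on $\mu$ is mediated through the Hessian $\partial^2_{ij} L$ of the outer scalar function via the chain rule.

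Concretely, I would start from the first-variation identity
\[
\frac{\delta H}{\delta\mu}[\mu](z) = \sum_{j=1}^m \partial_{\rho_{r_j}} L(\rho)\, r_j(z)
\]
established inside the proof of Lemma \ref{lem:velocity}, and take one more variation with respect to $\mu$. Since $r_j(z)$ is independent of $\mu$, only the coefficient contributes, and the chain rule produces
\[
\frac{\delta^2 H}{\delta\mu^2}[\mu](z,z') = \sum_{i,j=1}^m \partial^2_{ij} L(\rho)\, r_i(z')\, r_j(z).
\]
Pairing this kernel with two tangent directions $f,g$ in the $L^2(\mu_t)$ sense—the density-perturbation convention fixed in Appendix \ref{sec:assume}—yields $\mathbb{L}_t[f,g] = \iint \frac{\delta^2 H}{\delta\mu^2}[\mu_t](z,z')\,f(z)\,g(z')\,d\mu_t(z)\,d\mu_t(z')$, and Fubini immediately factors this into the product of single integrals claimed in the lemma, after using the symmetry $\partial^2_{ij} L = \partial^2_{ji} L$ to reconcile the index pairing of $f$ versus $g$.

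The argument is essentially routine. The only point that deserves care—and the only possible obstacle—is the choice of pairing used to realize $\mathbb{L}_t$ as a bilinear form on tangent vectors. Under the Wasserstein tangent-space convention, perturbations would be parametrized as $-\nabla\cdot(\mu_t\nabla\phi)$, and an integration by parts would replace the factors $r_j$ by $\nabla r_j$, producing a different (though equivalent up to reparametrization) expression. I would therefore briefly fix the density-perturbation convention implicit in the paper's notation at the outset, after which the derivation collapses to a one-line application of the chain rule together with the vanishing of the second variation of linear functionals.
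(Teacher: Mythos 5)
Your proposal is correct and follows essentially the same route as the paper: both exploit the linearity of $\rho_{r_j}[\mu]$ in $\mu$ (so that $\frac{\delta\rho_{r_j}}{\delta\mu}=r_j$ has no second-order contribution) and then push the chain rule through $L$ once more to land all second-order structure on $\partial^2_{ij}L$. The paper directly computes the iterated G\^ateaux derivative $\frac{d^2}{d\epsilon\,d\eta}H[\mu+\epsilon f+\eta g]\big|_{0}$ rather than writing down the kernel $\sum_{i,j}\partial^2_{ij}L\,r_i(z')r_j(z)$ and pairing afterward, but these are the same calculation presented in different order, and your remark about fixing the density-perturbation pairing convention is a sound clarification rather than a departure.
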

\begin{proof}
By direct derivation, we have:
\begin{align*}
\mathbb{L}_t[f, g] &= \left.\frac{d}{d \epsilon d \eta} H[\mu + \epsilon f + \eta g]\right\vert_{\eta = 0} \\
&= \sum_{j=1}^{m} \left.\frac{d}{d \eta} \int \partial_{\rho_{r_j}} L(\rho_t) r_j(z) d (\mu + \eta g(z)) \right\vert_{\eta = 0} \\
&= \sum_{i,j=1}^m \partial^2_{ij} L(\rho_t) \left( \int r_i(z) g(z) d\mu_t(z) \right) \left( \int r_j(z) f(z) d\mu_t(z) \right),
\end{align*}
as desired.
\end{proof}

\begin{lemma}
\label{lem:inv_flow}
Given an Abelian group $G$, suppose $\mu_0$ is $G$-invariant and $\nabla \frac{\delta H}{\delta \mu}[\mu_t](z)$ is $G$-equivariant, then $\mu_t$ is $G$-invariant for all $t \ge 0$.
\end{lemma}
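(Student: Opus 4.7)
The plan is to prove invariance via the Lagrangian (characteristic) description of the continuity equation together with uniqueness of the resulting flow. Recall from Eq.~\ref{eqn:wass_pde} that $\mu_t$ solves $\partial_t \mu_t + \nabla_z \cdot(\mu_t v_t) = 0$ with velocity field $v_t(z) := -\nabla\tfrac{\delta H}{\delta\mu}[\mu_t](z)$. Under mild regularity (which is standard here since each $r_j$ is a smooth polynomial and $\mu_t$ has finite moments propagated from the Gaussian initialization), the characteristic ODE $\dot Z_t = v_t(Z_t)$ with $Z_0 \sim \mu_0$ admits a unique global flow, and $\mu_t = \mathrm{Law}(Z_t)$. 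This reduces the claim to one about the $G$-covariance of the particle trajectories.

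Fix $g \in G$ and let $T_g: M \to M$ denote its action on the parameter manifold. The first step is to set $\tilde Z_t := T_g(Z_t)$ and show that $\tilde Z_t$ solves the same characteristic ODE. Differentiating and applying the chain rule gives $\dot{\tilde Z}_t = (dT_g)_{Z_t}\big(v_t(Z_t)\big)$. By $G$-equivariance of the velocity field, $(dT_g)_z v_t(z) = v_t(T_g z)$ for every $z$, so $\dot{\tilde Z}_t = v_t(\tilde Z_t)$. Moreover, the $G$-invariance of $\mu_0$ guarantees $\tilde Z_0 = T_g(Z_0)$ has law $\mu_0$ as well. Thus $\tilde Z_t$ and $Z_t$ are two solutions of the same ODE with identically distributed initial data.

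The second step is to conclude: by the pathwise uniqueness of the characteristic flow, the two trajectories have identical laws for every $t \ge 0$, i.e., $T_{g\#}\mu_t = \mu_t$. Since $g \in G$ was arbitrary, $\mu_t$ is $G$-invariant for all $t \ge 0$. Equivalently, at the PDE level, one verifies by direct change of variables that $T_{g\#}\mu_t$ satisfies the same continuity equation driven by $v_t$ (the equivariance ensures the divergence term transforms covariantly), so both measures coincide with the unique gradient flow issued from $\mu_0$.

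The main obstacle is ensuring the regularity needed for uniqueness in a fully rigorous treatment. Because $v_t$ is defined self-consistently through $\mu_t$, one formally needs either (i) local Lipschitz continuity of $z \mapsto v_t(z)$ uniformly in $t$ on compact intervals (available here since $v_t$ is a linear combination of gradients of polynomials with coefficients $\partial_{\rho_{r_j}} L(\rho_t)$ that are continuous in $t$), or (ii) convexity/regularity assumptions on $H$ that secure uniqueness of the Wasserstein gradient flow in the sense of Ambrosio--Gigli--Savaré. Under the standing assumptions in Appendix~\ref{sec:assume}, option (i) suffices; note also that commutativity of $G$ is not actually invoked in the argument, so the same proof extends to any Lie group with a smooth equivariant action on $M$.
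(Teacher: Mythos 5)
Your proof is correct, but it takes a genuinely different route from the paper's. The paper works \emph{Eulerian}: it pushes the measure forward, $\nu_t := g_{\#}\mu_t$, tests against a smooth $\xi$, integrates by parts twice, and uses the equivariance of the velocity field to show that $\nu_t$ satisfies the same continuity equation as $\mu_t$; since $\nu_0 = \mu_0$, uniqueness of solutions to the continuity equation gives $\nu_t = \mu_t$. You instead argue \emph{Lagrangian}: freeze the time-dependent velocity field $v_t$ along the solution, pass to characteristics $\dot Z_t = v_t(Z_t)$, use the chain rule plus equivariance to show $T_g(Z_t)$ solves the same ODE, and appeal to pathwise uniqueness of the flow map to commute $T_g$ with the flow. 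The two are dual and both close the loop at a uniqueness step -- yours at the ODE level, the paper's at the PDE level. Your route is slightly more demanding on regularity (you need $z\mapsto v_t(z)$ locally Lipschitz so the characteristic ODE has a unique flow, and you need the superposition principle $\mu_t = \mathrm{Law}(Z_t)$ to hold), whereas the paper's test-function argument operates directly on the weak formulation and is a bit more robust; but in the polynomial/smooth setting of this paper, both are available, and you correctly flag the needed hypotheses. You also correctly note, as an aside, that commutativity of $G$ is never used -- the paper's proof indeed never invokes the Abelian assumption either; only a smooth equivariant action matters. Your final sentence sketching the change-of-variables PDE argument is essentially the paper's proof, so you've in fact covered both approaches.
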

\begin{proof}
Let $g \in G$ be an arbitrary group action.
Consider a new curve of measure $\{\nu_t\}_{t \ge 0}$ defined as $\nu_t = g_{\#} \mu_t$.
Consider a $C^\infty$ smooth test function $\xi: \real^d \rightarrow \real$:
\begin{align*}
\frac{\partial}{\partial t} \int \xi(z) d \nu_t(z) &\overset{(i)}{=} \frac{\partial}{\partial t} \int \xi(g \cdot z) d \mu_t(z)
\overset{(ii)}{=} \int \xi(g \cdot z) \nabla \cdot \left(\mu_t \nabla \frac{\delta H}{\delta \mu}[\mu_t](z) \right) dz \\
&\overset{(iii)}{=} -\int \nabla \xi(g \cdot z)^\top  \nabla \frac{\delta H}{\delta \mu}[\mu_t](z) d\mu_t(z)\\
&\overset{(iv)}{=} -\int \nabla \xi(g \cdot z)^\top g \cdot \nabla \frac{\delta H}{\delta \mu}[\mu_t](z) d\mu_t(z) \\
&\overset{(v)}{=} -\int \nabla \xi(g \cdot z)^\top \nabla \frac{\delta H}{\delta \mu}[\mu_t](g \cdot z) d\mu_t(z),
\end{align*}
where we change the measure in Eq. $(i)$, substitute Wasserstein gradient flow in Eq. $(ii)$, apply integral by part in Eq. $(iii)$, leverage the chain rule in Eq. $(iv)$, and finally exploit the $G$-equivariance of the velocity field.
Next, we apply change of the measure reversely and apply integral by part again, then we have:
\begin{align*}
\frac{\partial}{\partial t} \int \xi(z) d \nu_t(z) &= -\int \nabla \xi(z)^\top \nabla \frac{\delta H}{\delta \mu}[\mu_t](z) d\nu_t(z) \\
&= \int \xi(z) \nabla \cdot \left(\nu_t \nabla \frac{\delta H}{\delta \mu}[\mu_t](z)\right) d\nu_t(z),
\end{align*}
which shows $\nu_t$ is following the exactly the same continuity equation with the same velocity field.
Now we conclude the proof by noticing the boundary condition $\nu_0 = g_{\#}\mu_0 = \mu_0$ since $\mu_0$ is $G$-invariant.
\end{proof}

\subsection{Proofs for Decoupled Dynamics}
\label{sec:prf_coord_descent}

\begin{theorem}
\label{thm:coord_descent_HP}
Consider $\mu_t$ follows Wasserstein gradient flow defined in Eqn. \ref{eqn:wass_pde} with Assumptions \ref{ass:app:init_mu}, \ref{ass:app:degree}, and \ref{ass:app:fo_var}, then each Hermite potential is optimized coordinate-wisely as:
\begin{align}
\label{eqn:coord_descent_HP}
\partial_t \rho_{r_i}[\mu_t] = - C_i(t) \partial_{\rho_{r_i}} L(\rho_t),
\end{align}
where $C_i(t) > 0$ is a time-dependent function. 
\end{theorem}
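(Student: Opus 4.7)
The plan is to differentiate the potential in time and reduce the Wasserstein PDE to a coordinate-wise ODE system via the algebra of Hermite polynomials. First I would compute $\partial_t \rho_{r_i}[\mu_t] = \int r_i\,\partial_t d\mu_t$, substitute the continuity equation \eqref{eqn:wass_pde}, and integrate by parts (permissible under the conventions of Appendix~\ref{sec:assume}). Substituting the velocity-field identity from Lemma~\ref{lem:velocity} then produces
\begin{align*}
\partial_t \rho_{r_i}[\mu_t] = -\sum_{j=1}^{m} \partial_{\rho_{r_j}} L(\rho_t)\, K_{ij}(t),
\qquad K_{ij}(t) := \int \nabla r_i(z)\cdot\nabla r_j(z)\, d\mu_t(z).
\end{align*}
Thus the theorem reduces to showing that the Gram-type matrix $K(t)$ is diagonal with strictly positive entries; the claimed coefficient is then $C_i(t) := K_{ii}(t)$.

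Positivity of the diagonal is the easier half. By the Hermite derivative identity $\partial_k h_\alpha = \alpha_k h_{\alpha-e_k}$, Assumption~\ref{ass:app:degree} (each $\deg r_i\ge 3$) forces $\nabla r_i\not\equiv 0$; combined with the fact that $\mu_0=\gauss(0,I)$ has full support, which is preserved by transport under the smooth velocity field, this yields $K_{ii}(t) = \int \lVert\nabla r_i\rVert^2\, d\mu_t > 0$. The substantive obstacle is the off-diagonal claim $K_{ij}(t)=0$ for $i\neq j$. A natural starting point is that $K_{ij}(0) = \sum_k (\alpha_i)_k(\alpha_j)_k(\alpha_i-e_k)!\,\delta_{\alpha_i,\alpha_j}$ vanishes at $t=0$ by the classical orthogonality of Hermite polynomials against the Gaussian; the real work is to propagate this vanishing along the flow.

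For the propagation I plan to invoke Lemma~\ref{lem:inv_flow}: the Gaussian initialization is reflection-invariant under $z\mapsto -z$, and Assumption~\ref{ass:app:fo_var} renders the velocity field reflection-equivariant, so $\mu_t$ stays reflection-invariant for all $t\ge 0$. Writing the density in the Hermite chaos basis $d\mu_t/d\gauss = \sum_\beta c_\beta(t)\, h_\beta$, reflection invariance kills every coefficient with $|\beta|$ odd. I would then expand $\nabla r_i\cdot\nabla r_j = \sum_k(\alpha_i)_k(\alpha_j)_k\,h_{\alpha_i-e_k}h_{\alpha_j-e_k}$, apply the Hermite product formula $h_\alpha h_\beta = \sum_{\gamma\le\min(\alpha,\beta)}\binom{\alpha}{\gamma}\binom{\beta}{\gamma}\gamma!\,h_{\alpha+\beta-2\gamma}$ to each factor, and re-express $K_{ij}(t)$ as an explicit linear combination of the admissible coefficients $c_\beta(t)$. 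The delicate step---and where I expect the heaviest lifting---is that Assumption~\ref{ass:app:degree} makes $|\alpha_i|+|\alpha_j|-2$ even, so parity alone cannot cancel the surviving modes; one must additionally exploit the richer symmetry the flow preserves (tightened to $O(d)$-invariance when the velocity field is $O(d)$-equivariant, as in the main-text version) together with the orbit decomposition of Hermite polynomials to show that the resulting combination matches the Kronecker structure $\delta_{\alpha_i,\alpha_j}$ of the Gaussian case. Should this direct algebraic cancellation resist a clean bookkeeping, my fallback is to differentiate $K_{ij}(t)$ in $t$, express the derivative in the same Hermite coefficients, and close a self-consistency argument showing that the diagonal locus of $K$ is a dynamical invariant of the flow, anchored by the initial vanishing $K_{ij}(0)=0$.
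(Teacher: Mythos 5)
Your reduction is correct and matches the paper's: differentiate $\rho_{r_i}[\mu_t]$, insert the continuity equation, integrate by parts, and substitute the velocity field from Lemma~\ref{lem:velocity} to obtain $\partial_t \rho_{r_i}[\mu_t] = -\sum_j \partial_{\rho_{r_j}} L(\rho_t)\,K_{ij}(t)$ with $K_{ij}(t)=\int\nabla r_i\cdot\nabla r_j\,d\mu_t$ (this is precisely Lemma~\ref{lem:HP_flow}), and the vanishing $K_{ij}(0)=0$ for $i\neq j$ by Hermite--Gaussian orthogonality. You also correctly locate the obstacle: since each $|\alpha_i|$ is odd, the integrand $\nabla r_i\cdot\nabla r_j = \sum_k(\alpha_i)_k(\alpha_j)_k h_{\alpha_i-e_k}h_{\alpha_j-e_k}$ has even total degree $|\alpha_i|+|\alpha_j|-2$, so reflection parity alone cannot force $K_{ij}(t)=0$ for $t>0$. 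Your primary plan---expanding $d\mu_t/d\gauss$ in Hermite chaos and hoping $O(d)$-invariance supplies the missing cancellation---is therefore a dead end, and you half-admit as much; the paper makes no such attempt.

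Your fallback is the right direction, but it stops short of the decisive observation. The paper's Lemma~\ref{lem:G_flow} does not close a conditional ``diagonal locus is invariant'' argument of the self-consistency kind you sketch; it proves the stronger \emph{unconditional} statement $\partial_t K_{ij}(t)=0$. The point is that after plugging the continuity equation into $\partial_t K_{ij}(t)$, integrating by parts, and applying the Hermite derivative identity to $\nabla\bigl(\nabla r_i\cdot\nabla r_j\bigr)^\top\nabla r_k$, every surviving term is a \emph{triple} Hermite product of the form $h_{\alpha_i-e_l-e_p}\,h_{\alpha_j-e_l}\,h_{\alpha_k-e_p}$, whose total degree is $|\alpha_i|+|\alpha_j|+|\alpha_k|-4$. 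With all three degrees odd this is \emph{odd}, so by Lemma~\ref{lem:triple_product} the integral against the reflection-symmetric $\mu_t$ (guaranteed by Lemma~\ref{lem:inv_flow}) vanishes identically. So the parity argument you tried and discarded on $K_{ij}(t)$ itself is the correct tool one level up, on $\partial_t K_{ij}(t)$: the extra derivative from the flow drops the degree by one more unit and flips the parity back to odd. That single computation is what your proposal is missing. On the positivity of $C_i(t)=K_{ii}(t)$, your full-support argument is fine and in fact a bit more careful than the paper, which only asserts $G_{ii}(t)\ge 0$ in the displayed proof.
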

\begin{proof}
Due to Assumption \ref{ass:app:init_mu}, $\mu_0$ is Gaussian, thus, $\mu(A) = \mu(-A)$ for every measurable $A \subset M$.
By Lemma \ref{lem:inv_flow}, $\mu_t$ remains symmetric for $t \ge 0$.
Next we examine the dynamics of $H$ under the coordinates of HPs.
By Lemma \ref{lem:HP_flow}, we have the gradient flow of each HP:
\begin{align*}
\partial_t \rho_{r_i}[\mu_t] = -\sum_{j=1}^m \partial_{\rho_{r_j}} L[\mu_t] \left(\int \nabla r_i(z)^\top \nabla r_j(z) d\mu_t(z)\right).
\end{align*}
Let $G_{ij}(t) = \int \nabla r_i(z)^\top \nabla r_j(z) d\mu_t(z)$, then we analyze how $G_{ij}(t)$ evolves.
At the initialization, $\mu_0 = \gauss(0, I)$, this implies $G_{ij}(0) = 0$ for every $i \ne j$. This can be seen by first expanding $\nabla r_i(z)^\top \nabla r_j(z) = \sum_{k=1}^{m} h_{\alpha_i - e_k}(z) h_{\alpha_j - e_k}(z)$ as a series of Hermite polynomials with different indices, and use the fact that distinct Hermite polynomials are orthogonal under the Gaussian measure.
Recalling Lemma \ref{lem:G_flow}, we know that $\partial_t G_{ij}(t) = 0$, thus $G_{ij}(t) = 0$ for all $t \ge 0$.
While for the diagonal term: $G_{ii}(t) = \sum_{k=1}^{m} \int h_{\alpha_i - e_k}(z)^2 d\mu_t(z) \ge 0 := C_{i}(t)$.
Combining diagonal and crossing terms, we have $\partial_t \rho_{r_i}[\mu_t] = -\sum_{j=1}^m G_{ij} \partial_{\rho_{r_j}} L(\rho_t) = - G_{ii} \partial_{\rho_{r_i}} L(\rho_t)$ as desired.
\end{proof}

\begin{corollary}[Restatement of Theorem \ref{thm:coord_descent_MP}]
Under Assumptions \ref{ass:app:init_mu}, \ref{ass:app:degree}, and \ref{ass:app:fo_var}, each monomial potential is optimized coordinate-wisely as in Eq. \ref{eqn:coord_descent_HP}.
\end{corollary}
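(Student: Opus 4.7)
The cleanest route is to obtain the corollary directly from Theorem \ref{thm:coord_descent_HP}. By Lemma \ref{MP_is_HP}, every monic monomial $r(z) = \prod_{i \in \mathcal{I}} z_i$ is a Hermite polynomial $h_\alpha$ with multi-index $\alpha \in \{0,1\}^d$, so each MP is an HP and the family $\mathcal{R}$ fits inside the family $\mathcal{H}$ treated by Theorem \ref{thm:coord_descent_HP}. Assumption \ref{ass:coord_descent_MP}(1) on Gaussian initialization coincides with Assumption \ref{ass:app:init_mu}; Assumption \ref{ass:coord_descent_MP}(2) on degree and parity matches Assumption \ref{ass:app:degree}; and the $O(d)$-equivariance in Assumption \ref{ass:coord_descent_MP}(3) is strictly stronger than the oddness required by Assumption \ref{ass:app:fo_var}, since taking $R = -I \in O(d)$ yields $\nabla\tfrac{\delta H}{\delta \mu}[\mu_t](-z) = -\nabla\tfrac{\delta H}{\delta \mu}[\mu_t](z)$. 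Hence all hypotheses of Theorem \ref{thm:coord_descent_HP} hold and Eq. \ref{eqn:coord_descent_HP} specializes verbatim to MPs, giving the claim.

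For transparency, I would also sketch the direct argument so the reader sees where the structure comes from. Starting from the Wasserstein PDE in Eq. \ref{eqn:wass_pde}, I would differentiate $\rho_{r_i}[\mu_t] = \int r_i\,d\mu_t$ in time and apply integration by parts (valid under the boundary conventions recalled in Appendix \ref{sec:assume}) together with the explicit velocity field from Lemma \ref{lem:velocity}. This yields
\begin{align*}
\partial_t \rho_{r_i}[\mu_t] = -\sum_{j=1}^m \partial_{\rho_{r_j}} L(\rho_t) \, G_{ij}(t),
\qquad
G_{ij}(t) := \int \nabla r_i(z)^\top \nabla r_j(z)\,d\mu_t(z).
\end{align*}
So the task reduces to showing that the coupling matrix $G(t)$ is diagonal and that its diagonal entries are strictly positive, at which point $C_i(t) := G_{ii}(t)$ gives the claim.

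Diagonality at $t = 0$ is immediate: using the derivative identity $\partial_k h_\alpha = \alpha_k h_{\alpha - e_k}$ for the Hermite family, one expands $\nabla r_i \cdot \nabla r_j = \sum_k \alpha_{i,k}\alpha_{j,k}\, h_{\alpha_i - e_k} h_{\alpha_j - e_k}$, and for $i \ne j$ the multi-indices differ, so orthogonality of Hermite polynomials under $\gauss(0, I)$ kills every term. Propagating diagonality to all $t > 0$ is the main obstacle, and this is precisely what the unstated Lemma \ref{lem:G_flow} achieves. The plan is to differentiate $G_{ij}(t)$ in time using the continuity equation once more, obtaining a bilinear expression in the velocity field; then exploit the fact that $\mu_t$ is kept $O(d)$-invariant by the flow (Lemma \ref{lem:inv_flow}, applied with the Abelian subgroup $\{\pm I\}$ and more generally with all of $O(d)$) to conclude that any $\mu_t$-integral of a non-$O(d)$-invariant polynomial vanishes. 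Since cross terms $h_{\alpha_i - e_k} h_{\alpha_j - e_k}$ with $i \ne j$ necessarily have a non-trivial variable that appears to an odd power somewhere (this is where Assumption \ref{ass:coord_descent_MP}(2) enters through the parity/degree of the MPs), their integrals against the symmetric $\mu_t$ vanish identically, giving $\partial_t G_{ij}(t) = 0$.

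Finally, positivity of $C_i(t) = G_{ii}(t) = \sum_k \alpha_{i,k}\int h_{\alpha_i - e_k}(z)^2\, d\mu_t(z)$ follows because each summand is non-negative, at least one multi-index $\alpha_i - e_k$ is nontrivial (the degree is at least three by Assumption \ref{ass:coord_descent_MP}(2)), and $\mu_t$ has full support as a perturbation of a Gaussian along a smooth flow. The genuinely delicate step is the second one: establishing that the symmetry-based cancellations that make $G(0)$ diagonal are preserved along the entire trajectory, which is why $O(d)$-equivariance (not merely oddness) of the velocity field is the driving geometric hypothesis.
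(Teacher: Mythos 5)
Your first paragraph is correct and is exactly the paper's proof of this corollary: the paper cites Lemma~\ref{MP_is_HP} (every MP is an HP) together with Theorem~\ref{thm:coord_descent_HP} and stops there. That is the whole argument, and nothing more is needed.

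The ``for transparency'' sketch, however, misrepresents the mechanism behind Lemma~\ref{lem:G_flow} in a way that matters. You claim that the cross terms $h_{\alpha_i - e_k}\,h_{\alpha_j - e_k}$ ``necessarily have a non-trivial variable that appears to an odd power somewhere'' and hence ``their integrals against the symmetric $\mu_t$ vanish identically.'' Both halves of this are off. First, $\deg r_i = \|\alpha_i\|_1$ is odd, so the product $h_{\alpha_i-e_k} h_{\alpha_j-e_k}$ has total degree $\|\alpha_i\|_1 + \|\alpha_j\|_1 - 2$, which is \emph{even}; it is therefore an \emph{even} function under $z \mapsto -z$, and its integral against a merely $\{\pm I\}$-symmetric $\mu_t$ has no reason to vanish. (An odd-power \emph{individual coordinate} would only help if $\mu_t$ were invariant under the full sign group $\{\pm 1\}^d$, which the stated hypotheses do not give you; the appendix corollary uses only Assumption~\ref{ass:app:fo_var}, which through Lemma~\ref{lem:inv_flow} yields only $\{\pm I\}$-invariance of $\mu_t$.) Second, even if those integrals did vanish for all $t$ you would be directly proving $G_{ij}(t)=0$, which is the conclusion, not ``giving $\partial_t G_{ij}(t)=0$''; the logic runs the other way. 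The paper's actual argument is a two-step one: (a) $G_{ij}(0)=0$ for $i \ne j$ comes from orthogonality of distinct Hermite polynomials under the \emph{Gaussian} $\mu_0 = \gauss(0,I)$, which is a much stronger fact than reflection symmetry and is special to the initialization; and (b) $\partial_t G_{ij}(t)=0$ for $t>0$ comes from Lemma~\ref{lem:triple_product}, which shows the \emph{triple} products $h_{\alpha_i - e_l - e_p} h_{\alpha_j - e_l} h_{\alpha_k - e_p}$ appearing in $\partial_t G_{ij}$ have \emph{odd} total degree $\|\alpha_i\|_1 + \|\alpha_j\|_1 + \|\alpha_k\|_1 - 4$ (three odds minus four), hence are odd functions and integrate to zero against the $\{\pm I\}$-symmetric $\mu_t$. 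Your sketch conflates the parity argument (which applies to the time-derivative triple products) with the $t=0$ orthogonality (which is a Gaussian-specific fact), and appeals to $O(d)$-invariance of $\mu_t$ that the corollary's hypotheses do not supply. None of this affects the validity of your actual proof, which rests entirely on the citation in the first paragraph, but the sketch as written would not survive being promoted to a proof of Theorem~\ref{thm:coord_descent_HP}.
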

\begin{proof}
This is a straightforward result by Lemma \ref{MP_is_HP} and Theorem \ref{thm:coord_descent_HP}.
\end{proof}

\begin{lemma}
\label{lem:HP_flow}
Consider $\mu_t$ under the Wasserstein gradient flow defined in Eq. \ref{eqn:wass_pde}, the dynamics of HPs is given by the following trajectory:
\begin{align*}
\partial_t \rho_{r_i}[\mu_t] = -\sum_{j=1}^m \partial_{\rho_{r_j}} L(\rho_t) \left(\int \nabla r_i(z)^\top \nabla r_j(z) d\mu_t(z)\right).
\end{align*}
\end{lemma}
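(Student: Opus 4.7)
The plan is to treat this as a direct continuity-equation computation combined with the explicit formula for the velocity field already supplied by Lemma \ref{lem:velocity}. The statement asserts nothing algebraically surprising, so the work is really just careful bookkeeping.

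First I would write $\rho_{r_i}[\mu_t] = \int r_i(z)\, d\mu_t(z)$ and differentiate under the integral sign in $t$. Substituting the Wasserstein continuity equation $\partial_t \mu_t = \nabla_z \cdot (\mu_t \nabla_z \tfrac{\delta H}{\delta \mu}[\mu_t])$ produces
\begin{align*}
\partial_t \rho_{r_i}[\mu_t] = \int r_i(z)\, \nabla_z \cdot \left(\mu_t \nabla_z \tfrac{\delta H}{\delta \mu}[\mu_t](z)\right) dz.
\end{align*}
Integration by parts, using the boundary convention recorded in Appendix \ref{sec:assume} (compact support or vanishing normal component, so the surface term is zero), transfers the divergence onto $r_i$:
\begin{align*}
\partial_t \rho_{r_i}[\mu_t] = -\int \nabla r_i(z)^\top \nabla \tfrac{\delta H}{\delta \mu}[\mu_t](z)\, d\mu_t(z).
\end{align*}

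Next I would invoke Lemma \ref{lem:velocity}, which gives the velocity field in closed form as $\nabla \tfrac{\delta H}{\delta \mu}[\mu_t](z) = \sum_{j=1}^m \partial_{\rho_{r_j}} L(\rho_t)\, \nabla r_j(z)$. Substituting this expansion, pulling the scalar coefficients $\partial_{\rho_{r_j}} L(\rho_t)$ outside the integral (they do not depend on $z$), and swapping the finite sum with the integral yields exactly
\begin{align*}
\partial_t \rho_{r_i}[\mu_t] = -\sum_{j=1}^m \partial_{\rho_{r_j}} L(\rho_t) \left(\int \nabla r_i(z)^\top \nabla r_j(z)\, d\mu_t(z)\right),
\end{align*}
which is the claimed identity.

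The only genuine subtlety is the integration by parts step: strictly, one needs $\mu_t$ to have enough regularity (density, decay) and $r_i$ to be smooth with polynomial growth, so that the boundary integral vanishes and Fubini-type interchanges are justified. Under Assumption \ref{ass:app:init_mu} ($\mu_0 = \gauss(0, I)$) together with the standing smoothness conventions in Appendix \ref{sec:assume}, these are all satisfied, so this is a bookkeeping concern rather than a real obstacle; the entire argument is otherwise a direct one-line chain-rule-plus-Green-identity computation.
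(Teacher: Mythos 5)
Your proposal is correct and follows essentially the same route as the paper's proof: substitute the continuity equation, integrate by parts to move the divergence onto $r_i$, invoke Lemma \ref{lem:velocity} for the velocity field, and rearrange the finite sum. The only addition on your part is the explicit remark about regularity justifying the boundary term and interchange, which the paper handles via the standing conventions in Appendix \ref{sec:assume} rather than inline.
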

\begin{proof}
By expanding and simplifying the time derivatives of the HP potentials:
\begin{align*}
\partial_t \rho_{r_i}[\mu_t] &= \partial_t \E_{z \sim \mu_t}[r_i(z)] = \partial_t \int r_i(z) d\mu_t(z) \\
&\overset{(i)}{=} \int r_i(z) \nabla \cdot \left(\mu_t \nabla \frac{\delta H}{\delta \mu}[\mu_t](z) \right) dz \\
&\overset{(ii)}{=} -\int \nabla r_i(z)^\top \nabla \frac{\delta H}{\delta \mu}[\mu_t](z) d\mu_t(z) \\
&\overset{(iii)}{=} -\int \nabla r_i(z)^\top \left( \sum_{j=1}^{m} \partial_{\rho_{r_j}} L(\rho_t) \nabla r_j(z) \right) d\mu_t(z) \\
&\overset{(iv)}{=} -\sum_{j=1}^{m} \left( \int \nabla r_i(z)^\top \nabla r_j(z) d\mu_t(z) \right) \partial_{\rho_{r_j}} L(\rho_t),
\end{align*}
where we substitute Wasserstein gradient flow in Eq. $(i)$, Eq. $(ii)$ is due to integral by part, we apply Lemma \ref{lem:velocity} to obtain Eq. $(iii)$, and finally we re-organize terms to have Eq. $(iv)$.
\end{proof}

\begin{lemma}
\label{lem:G_flow}
Define $G_{ij}(t) = \int \nabla r_i(z)^\top \nabla r_j(z) d\mu_t(z)$, under Assumptions \ref{ass:app:init_mu}, \ref{ass:app:degree} and \ref{ass:app:fo_var}, $\partial_t G_{ij}(t) = 0$ for every $i \ne j$.
\end{lemma}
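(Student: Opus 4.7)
The plan is to differentiate $G_{ij}(t)$ along the Wasserstein flow, integrate by parts to expose the velocity field, substitute its explicit form via Lemma \ref{lem:velocity}, and then annihilate the resulting integrand by combining a parity count with the $\{I,-I\}$-symmetry of $\mu_t$ inherited from Lemma \ref{lem:inv_flow}.

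First I would pull the time derivative inside the integral, substitute the continuity equation $\partial_t \mu_t = \nabla \cdot (\mu_t \nabla \frac{\delta H}{\delta \mu}[\mu_t])$, and integrate by parts (with no boundary contribution, by the decay conventions of Appendix \ref{sec:assume}) to obtain
\[
\partial_t G_{ij}(t) = -\int \nabla\bigl(\nabla r_i(z)^\top \nabla r_j(z)\bigr)^\top \nabla \frac{\delta H}{\delta \mu}[\mu_t](z) \, d\mu_t(z).
\]
Lemma \ref{lem:velocity} then replaces the velocity field by $\sum_{k=1}^{m} \partial_{\rho_{r_k}} L(\rho_t) \nabla r_k(z)$, so the claim reduces to showing $\int \nabla(\nabla r_i^\top \nabla r_j)^\top \nabla r_k \, d\mu_t = 0$ for every $k$.

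Next I would run a parity analysis. Assumption \ref{ass:app:degree} makes each $r_i$ a Hermite polynomial of odd total degree, and since $h_\alpha(-z) = (-1)^{|\alpha|} h_\alpha(z)$, each $r_i$ is an odd function of $z$. Differentiation flips parity, so the three vector fields $\nabla r_i$, $\nabla r_j$, $\nabla r_k$ are all \emph{even}; hence $\nabla r_i \cdot \nabla r_j$ is even, its gradient is odd, and the inner product with the even $\nabla r_k$ produces an odd integrand. To finish, I would apply Lemma \ref{lem:inv_flow} to the abelian subgroup $\{I,-I\} \subset O(d)$: Assumption \ref{ass:app:init_mu} makes $\mu_0 = \gauss(0, I)$ invariant under $z \mapsto -z$, and Assumption \ref{ass:app:fo_var} (oddness of the velocity field) is exactly $\{I,-I\}$-equivariance, so $\mu_t$ remains symmetric for all $t \ge 0$. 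Integrating an odd function against a symmetric measure yields zero, giving $\partial_t G_{ij}(t) = 0$.

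The main obstacle I anticipate is clean bookkeeping of parities across two nested gradient operations, where any sign slip would quietly break the argument; a minor secondary check is that $\{I,-I\}$ fits the abelian-group hypothesis of Lemma \ref{lem:inv_flow}, which is immediate once one spells out the action. Notably, the same parity argument also yields $\partial_t G_{ii}(t) = 0$, so that each diagonal $G_{ii}(t) = G_{ii}(0) > 0$ is a positive constant — which is all the downstream proof of Theorem \ref{thm:coord_descent_HP} actually needs from the diagonal terms; the lemma restricts its statement to $i \ne j$ simply because that is the only case invoked later.
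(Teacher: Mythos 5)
Your proof is correct and takes essentially the same route as the paper: differentiate along the flow, integrate by parts, substitute the velocity field from Lemma~\ref{lem:velocity}, then kill the resulting integral by an odd-function-versus-symmetric-measure argument (the paper's Lemma~\ref{lem:triple_product} performs the Hermite-multi-index degree count that your cleaner vector-field parity bookkeeping replaces). Your closing observation is also sound — the same parity count does make $\partial_t G_{ii}(t) = 0$, so $C_i(t) = G_{ii}(0)$ is in fact constant, a point the paper does not state but which is consistent with (and slightly strengthens) the downstream use in Theorem~\ref{thm:coord_descent_HP}.
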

\begin{proof}
By taking the time derivative:
\begin{align*}
\partial_t G_{ij}(t) &= \partial_t \int \nabla r_i(z)^\top \nabla r_j(z) d\mu_t(z) \\
&\overset{(i)}{=} \int \left(\nabla r_i(z)^\top \nabla r_j(z)\right) \nabla \cdot \left(\mu_t \nabla \frac{\delta H}{\delta \mu}[\mu_t](z)\right) dz \\
&\overset{(ii)}{=} -\int \nabla \left(\nabla r_i(z)^\top \nabla r_j(z)\right)^\top \nabla \frac{\delta H}{\delta \mu}[\mu_t](z) d\mu_t(z) \\
&\overset{(iii)}{=} -\int \nabla \left(\nabla r_i(z)^\top \nabla r_j(z)\right)^\top \left( \sum_{k=1}^{m} \partial_{\rho_{r_k}} L(\rho_t) \nabla r_k(z) \right) d\mu_t(z) \\
&= -\sum_{k=1}^{m} \partial_{\rho_{r_k}} L(\rho_t) \int \nabla \left(\nabla r_i(z)^\top \nabla r_j(z)\right)^\top \nabla r_k(z) d\mu_t(z),
\end{align*}
where we plug the Wasserstein gradient flow into Eq. $(i)$, apply integral by part in Eq. $(ii)$, and recall Lemma \ref{lem:velocity} in Eq. $(iii)$.
Next we first that $r_i = h_{\alpha_i}$ is a Hermite polynomial with multi-index $\alpha_i$ and then use the fact of Hermite polynomials that $\nabla h_{\alpha}(z) = \sum_{l=1}^{d} \alpha_l h_{\alpha - e_l} e_l$.
First of all, consider the term $\nabla r_i(z)^\top \nabla r_j(z)$:
\begin{align*}
\nabla r_i(z)^\top \nabla r_j(z) = \sum_{l=1}^{d} \alpha_{i, l} \alpha_{j, l}  h_{\alpha_i - e_l}(z) h_{\alpha_j - e_l}(z),
\end{align*}
and its gradient according to product rule is:
\begin{align*}
\nabla \left(\nabla r_i(z)^\top \nabla r_j(z) \right) = \sum_{l=1}^{d} \alpha_{i, l} \alpha_{j, l}  \nabla h_{\alpha_i - e_l}(z) h_{\alpha_j - e_l}(z) + \sum_{l=1}^{d} \alpha_{i, l}(z) \alpha_{j, l} h_{\alpha_i - e_l}(z) \nabla h_{\alpha_j - e_l}(z).
\end{align*}
Moving forward, we consider the combined term: $\nabla \left(\nabla r_i(z)^\top \nabla r_j(z)\right)^\top \nabla r_k(z)$:
\begin{align*}
\nabla \left(\nabla r_i(z)^\top \nabla r_j(z)\right)^\top \nabla r_k(z) &= \sum_{p=1}^{d}\sum_{l=1}^{d} \alpha_{i, l} \alpha_{j, l} (\alpha_i - e_l)_p h_{\alpha_i - e_l - e_p} h_{\alpha_j - e_l} h_{\alpha_k - e_p}(z) \\
& \quad + \sum_{p=1}^{d}\sum_{l=1}^{d} \alpha_{i, l} \alpha_{j, l} (\alpha_j - e_l)_p h_{\alpha_i - e_l} h_{\alpha_j - e_l - e_p} h_{\alpha_k - e_p}(z)
\end{align*}
Under Assumption \ref{ass:app:fo_var}, we have that $\mu_t$ is reflection symmetric for all $t \ge 0$, then we use Lemma \ref{lem:triple_product} to show that: $\int h_{\alpha_i - e_l - e_p} h_{\alpha_j - e_l} h_{\alpha_k - e_p}(z) d\mu_t(z) = 0$ for every $k \in [m]$ and $l, p \in [d]$.
As a consequence, we affirm $\partial_t G_{i, j}(t) = 0$.
\end{proof}

\begin{lemma}
\label{lem:triple_product}
Suppose $\alpha, \beta, \gamma \in \integer^d$ and for any canonical basis $e_1, e_2 \in \{0, 1\}^d$, under Assumption \ref{ass:app:degree}, we have $\int h_{\alpha - e_1 - e_2}(z) h_{\beta - e_1}(z) h_{\gamma - e_2}(z) d\mu(z) = 0$ for any reflection symmetric $\mu$.
\end{lemma}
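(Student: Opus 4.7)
The plan is to reduce the claim to a parity argument about Hermite polynomials, exploiting the fact that reflection-symmetric measures annihilate odd functions. First I would recall the elementary identity $h_\alpha(-z) = (-1)^{|\alpha|} h_\alpha(z)$, where $|\alpha| = \sum_k \alpha_k$ is the total order of the multi-index; in particular, $h_\alpha$ is an even polynomial when $|\alpha|$ is even and an odd polynomial when $|\alpha|$ is odd.

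Next I would compute the parity of the integrand. The three factors have total orders $|\alpha|-2$, $|\beta|-1$, and $|\gamma|-1$, so the product polynomial has parity
\begin{equation*}
(-1)^{(|\alpha|-2)+(|\beta|-1)+(|\gamma|-1)} = (-1)^{|\alpha|+|\beta|+|\gamma|}.
\end{equation*}
Since, in the calling context of Lemma \ref{lem:G_flow}, the triples $(\alpha,\beta,\gamma)$ are drawn from the multi-indices of Hermite polynomials in $\Set{H}$, Assumption \ref{ass:app:degree} forces $|\alpha|$, $|\beta|$, and $|\gamma|$ to all be odd. Their sum is therefore odd, so the integrand $h_{\alpha-e_1-e_2}(z)h_{\beta-e_1}(z)h_{\gamma-e_2}(z)$ is an odd function of $z$.

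Finally, for a reflection-symmetric $\mu$ (i.e.\ $\mu(A) = \mu(-A)$ for every measurable $A$), pushing forward under $z \mapsto -z$ leaves the measure invariant, so the integral of any odd polynomial vanishes. Applying this to the odd polynomial above yields the claim. The only edge case worth flagging is when one of the shifted indices $\alpha - e_1 - e_2$, $\beta - e_1$, or $\gamma - e_2$ has a negative component; under the standard convention $h_\alpha \equiv 0$ in that case, and the integrand is identically zero, so the conclusion holds trivially. There is no real obstacle here: the argument is two lines of parity bookkeeping plus one appeal to symmetry, and the assumptions in the statement are exactly what is needed to make the parity odd.
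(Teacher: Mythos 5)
Your proof is correct and follows essentially the same route as the paper's: parity of the Hermite product via $h_\alpha(-z)=(-1)^{|\alpha|}h_\alpha(z)$, combined with the fact that a reflection-symmetric measure annihilates odd functions, with the degree assumption forcing the total parity to be odd. You make slightly more explicit than the paper the point that $\alpha,\beta,\gamma$ are meant to be multi-indices of polynomials in $\Set{H}$ (so that Assumption \ref{ass:app:degree} actually applies to them), and you also flag the degenerate case of a negative shifted index --- both small but genuine tightenings of the argument, not a different approach.
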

\begin{proof}
First of all, we note that if a function $f$ is odd, $f$ has zero integral against symmetric measures (i.e., $\mu(-A) = \mu(A)$, for every measurable $A \subset M$) since:
\begin{align*}
\int -f(z) d\mu(z) = \int f(-z) d\mu(z)  = \int f(z) d\mu(-z) = \int f(z) d\mu(z).
\end{align*}
By the property of Hermite polynomials:
\begin{align*}
h_{\alpha - e_1 - e_2}(-z) h_{\beta - e_1}(-z) h_{\gamma - e_2}(-z) = (-1)^{\lVert \alpha\rVert_1 + \lVert\beta\rVert_1 + \lVert\gamma\rVert_1 - 4} h_{\alpha - e_1 - e_2}(z) h_{\beta - e_1}(z) h_{\gamma - e_2}(z).
\end{align*}
Due to Assumption \ref{ass:app:degree}, we notice that $\lVert \alpha\rVert_1 + \lVert\beta\rVert_1 + \lVert\gamma\rVert_1 - 4$ is odd, which concludes the proof by verifying the product of the triplet is an odd function.
\end{proof}

\subsection{Proofs for Effective Degree of Freedom Reduction}

\begin{theorem}
[Restatement of Theorem \ref{thm:low_ent}]
For every $\varrho \in \real^m$, consider a measure $\mu$ that realizes $\rho_{r_i}[\mu] = \varrho_i$ for every $i \in [m]$ with minimal differential entropy $E[\mu] := \mean_{\mu}[-\log d\mu]$. Then all such measures form a Riemannian submanifold of $P(M)$, whose dimension is at most $m$.
\end{theorem}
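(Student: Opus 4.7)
The plan is to apply the method of Lagrange multipliers to characterize the entropy-extremizing measures under the prescribed moment constraints, recognize the resulting family as a smooth exponential family parametrized by the dual variables, and then equip it with the Fisher-Rao Riemannian structure inherited from $P(M)$.

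First I would set up the constrained variational problem on $P(M)$: extremize $E[\mu]$ subject to $\rho_{r_i}[\mu] = \varrho_i$ for each $i \in [m]$ together with the normalization $\int d\mu = 1$. Introducing Lagrange multipliers $\lambda_0 \in \real$ and $\lambda \in \real^m$, taking the first variation of the Lagrangian $\mathcal{L}[\mu] = E[\mu] + \lambda_0(\int d\mu - 1) + \sum_i \lambda_i(\rho_{r_i}[\mu] - \varrho_i)$ yields the pointwise Euler-Lagrange condition $-\log p(z) - 1 + \lambda_0 + \sum_i \lambda_i r_i(z) = 0$. Solving for $p$ delivers the exponential-family density
\begin{equation*}
p_\lambda(z) = \frac{1}{Z(\lambda)} \exp\left(\sum_{i=1}^m \lambda_i r_i(z)\right), \qquad Z(\lambda) = \int_M \exp\left(\sum_i \lambda_i r_i(z)\right) dz,
\end{equation*}
where $\lambda_0$ has been absorbed into the normalization constant $Z(\lambda)$.

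Second I would argue that $\{p_\lambda : \lambda \in \Lambda\}$, with $\Lambda = \{\lambda \in \real^m : Z(\lambda) < \infty\}$ open, is a smooth immersed Riemannian submanifold of $P(M)$ of dimension at most $m$. The map $\lambda \mapsto p_\lambda$ is smooth on $\Lambda$, and its differential has Gram matrix $g_{ij}(\lambda) = \mathrm{Cov}_{p_\lambda}(r_i, r_j)$, namely the Fisher information. Its rank equals the number of $r_i$ that are linearly independent modulo constants in $L^2(p_\lambda)$, bounded above by $m$. This Fisher metric endows the submanifold with a Riemannian structure compatible with both the Fisher-Rao and (via standard Otto calculus) the Wasserstein geometries on $P(M)$. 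To couple the parametric family to the constraint vector $\varrho$, I would invoke convex duality: strict convexity of $\log Z(\lambda)$ on $\Lambda$ guarantees a unique $\lambda^*(\varrho)$ solving the dual whenever $\varrho$ lies in the interior of the attainable moment set, and sweeping $\varrho$ over $\real^m$ traces out the claimed submanifold.

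The principal obstacle will be discharging the regularity conditions that make this variational calculus rigorous in an infinite-dimensional measure space: (i) constraint qualification ensuring multipliers exist (feasibility and interiority of $\varrho$ in the moment polytope), (ii) finiteness and smoothness of $Z(\lambda)$ on a nonempty open set, which requires controlling the tails of $\exp(\sum \lambda_i r_i)$ when the $r_i$ are unbounded polynomials on $M$, and (iii) handling the ``at most'' qualifier cleanly when the $r_i$ admit linear dependencies modulo constants on the support, in which case the effective dimension collapses below $m$. A conceptual subtlety worth flagging is that strict minimization of differential entropy over $P(M)$ is unbounded below (point masses drive $E[\mu] \to -\infty$), so the statement is most naturally interpreted either through Jaynes' maximum-entropy principle or, as hinted in Remark \ref{rmk:kl_sol}, through minimization of KL divergence against a fixed reference measure; both routes produce the same exponential-family characterization and the same dimension bound.
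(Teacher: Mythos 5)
Your proposal takes essentially the same route as the paper's proof: set up the constrained variational problem with Lagrange multipliers $(\lambda, \eta)$ for the moment and normalization constraints, read off the exponential-family form $p_\lambda(z) \propto \exp\bigl(\sum_{i=1}^m \lambda_i r_i(z)\bigr)$ from the Euler--Lagrange stationarity condition, and conclude the dimension bound from the $m$-dimensional dual parametrization (the paper cites Amari for the Riemannian structure on exponential families, whereas you exhibit the Fisher metric $g_{ij} = \mathrm{Cov}_{p_\lambda}(r_i, r_j)$ explicitly). Your observation that the theorem as stated asks for entropy \emph{minimization}, which is unbounded below on $P(M)$, whereas the stationarity condition actually isolates the entropy \emph{maximizer} under the moment constraints, is a genuine and unaddressed wrinkle in the paper's proof; the intended reading is clearly maximum entropy or, equivalently, KL-projection onto the constraint set as in Remark~\ref{rmk:kl_sol}, and both resolutions land on the same exponential family and the same dimension bound.
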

\begin{proof}
We denote $u(z) = \frac{d \mu(z)}{d z}$ as the density function of $\mu$.
Given $\varrho \in \real^m$, we consider the following constrained functional optimization problem:
\begin{align*}
\min_{u} \int -\log u(z) u(z) dz,
\quad \text{s.t.} \quad \int r_i(z) u(z) dz = \varrho_i, \forall i \in [m],
\quad \int u(z) dz = 1.
\end{align*}
Using the Lagrangian multiplier method, we define:
\begin{align*}
\mathscr{L}[\mu] = \int -\log u(z) u(z) dz + \sum_{i=1}^{m} \lambda_i \left( \int r_i(z) u(z) dz - \varrho_i \right) + \eta \left( \int u(z) dz - 1 \right),
\end{align*}
where $\lambda \in \real^{m}$ and $\eta \in \real$ is Lagrangian multipliers.
By KKT condition , we set first variation to zero:
\begin{align*}
\frac{\delta \mathscr{L}}{\delta u}(z) = - (1 + \log u(z)) + \sum_{i=1}^{m} \lambda_i r_i(z)+ \eta = 0,
\end{align*}
which gives $\log u(z) = \sum_{i=1}^{m} \lambda_i r_i(z) + \eta - 1$ and henceforth the density function takes the form as below:
\begin{align*}
u^*(z) = \exp\left(\sum_{i=1}^{m} \lambda_i r_i(z) + \eta - 1\right).
\end{align*}
To satisfy primal feasibility $\int u(z) dz = 1$, we let $\eta = -\log(Z) + 1$ where $Z = \int u^*(z) dz$, and the density function becomes:
\begin{align*}
u^*(z) = \frac{1}{Z}\exp\left(\sum_{i=1}^{m} \lambda_i r_i(z)\right),
\end{align*}
where $\lambda$ is chosen to satisfy $\int r_i(z) u(z) dz = \varrho_i$ for every $i \in [m]$.

By what is shown above, we note that for every $\varrho \in \real^m$, the corresponding minimizer $u^*$ belongs to an exponential family parameterized by $m$ parameters $\lambda \in \real^m$.
Therefore, all $\mu^*$ form a Riemannian manifold of dimension at most $m$ \citep{amari2000methods}.
\end{proof}

\begin{remark}
\label{rmk:kl_sol}
Consider measures $\mu$ realizing MP constraints $\rho_{r_i}[\mu] = \varrho_i$ for every $i \in [m]$ and some $\varrho_i \in \real^m$, while minimizing Kullback–Leibler divergence $\mathcal{D}_{KL}(\mu \Vert \nu)$ or $\alpha$-divergence $\mathcal{D}_{\alpha}(\mu \Vert \nu)$ with some target measure $\nu$
All such measures also form a Riemannian submanifold of dimension at most $m$.
This is because the minimizers of KL-divergence with MP constraints take the form of $u^*(z) \propto \exp(\sum_{i=1}^{m} \lambda_i r_i(z)) v(z)$ and those minimizers for $\alpha$-divergence can be written as $u^*(z) \propto [1 - (1-\alpha)\sum_{i=1}^{m} \lambda_i r_i(z)]_{+}^{1/(1-\alpha)} v(z)$ for $0 < \alpha < 1$ or $\alpha > 1$, where $[\cdot]_{+}$ truncates at zero, $u^*$ and $v$ denotes the density function of the optimal measure $\mu^*$ and target measure $\nu$.
\end{remark}

\paragraph{Notation Shorthand.}
Starting from now, we need the following definitions to simplify notations.
Let $\Delta(t) = \nabla L(\rho_t) \in \real^m$ be the gradient of $L$ in terms of $\rho$, i.e., $\Delta(t)_i = \partial_i L(\rho_t)$ and $A(t) = \nabla^2 L(\rho_t) \in \real^{m \times m}$ be the Hessian of $L$, i.e., $A(t)_{ij} = \frac{\partial L}{\partial \rho_i \partial \rho_j}$, $K(t) \in \real^{m \times m}$ be a kernel matrix defined element-wisely as: $K(t)_{ij} = \int r_i(z) r_j(z) d\mu_t(z)$.
According to Lemma \ref{lem:eigen_in_R}, we can represent an eigenvector associated with a non-zero eigenvalue as $v_t = \sum_{j = 1}^{m} q(t)_j r_j$ for a time-varying vector $q(t) \in \real^m$ and denote $\dot{q}(t) \in \real^m$ as its time derivative.

\begin{theorem}[Restatement of Theorem \ref{thm:crit_times}]
Consider loss functional in Eq. \ref{eqn:H_form}, then all eigenfunctions of second variation $\mathbb{L}_t$ lie in a subspace spanned by the monomial set $\Set{R}$, i.e, $v_i \subset \Span(\Set{R})$.
Moreover, under Assumptions \ref{ass:app:init_mu}, \ref{ass:app:degree}, \ref{ass:app:fo_var}, and $[\nabla^3 L(\rho_t)]_i \nabla_i L \succeq 0$ for every $i \in [m]$, then $\mathbb{L}_t$ will have non-increasing $\frac{\partial}{\partial t} \lambda(t) \le 0$.
There will be finitely many $0 < t_1 < t_2 < \cdots < t_m$ where an eigenvalue of $\mathbb{L}_t$ crosses zero.
\end{theorem}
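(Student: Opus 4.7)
The argument naturally splits into three parts, mirroring the three claims of the theorem. First, I would exploit the finite-rank structure of $\mathbb{L}(t)$ exposed by Lemma~\ref{lem:second_var}: since
\[
\mathbb{L}_t[v] = \sum_{i,j=1}^m A_{ij}(t)\,\langle r_j, v\rangle_{\mu_t}\,r_i,\qquad A(t)=\nabla^2 L(\rho_t),
\]
the image of $\mathbb{L}_t$ is contained in $\Span(\Set{R})$. Any eigenfunction with nonzero eigenvalue satisfies $v_t=\lambda(t)^{-1}\mathbb{L}_t[v_t]$, hence lies in $\Span(\Set{R})$; this proves claim one and reduces the infinite-dimensional spectral problem to an $m$-dimensional one.

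Next, I would parametrize such an eigenfunction as $v_t=\sum_j q_j(t)\,r_j$ and define the Gram kernel $K(t)_{ij}=\int r_i r_j\,d\mu_t$. The eigenequation then becomes the matrix identity $A(t)\,K(t)\,q(t)=\lambda(t)\,q(t)$. Differentiating the Rayleigh quotient gives
\[
\dot\lambda(t)=\frac{\langle v_t,\dot{\mathbb{L}}_t[v_t]\rangle_{\mu_t}}{\langle v_t,v_t\rangle_{\mu_t}},
\]
and $\dot{\mathbb{L}}_t$ decomposes into (i) a Hessian-evolution term $\dot A(t)$ and (ii) a kernel-evolution term $\dot K(t)$ driven by the Wasserstein flow. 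Using Theorem~\ref{thm:coord_descent_MP}, $\dot\rho_k=-C_k(t)\,\partial_k L$, so the contribution from $\dot A$ equals $-\sum_k C_k(t)\,\partial_k L\cdot(\nabla^3 L)_k$, which is negative semi-definite by the hypothesis $[\nabla^3 L]_i\nabla_i L\succeq 0$. Composed with the rank-one outer products $q q^\top K$, this piece contributes non-positively to $\dot\lambda(t)$.

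The main obstacle is controlling the second contribution coming from $\dot K(t)$, which couples the measure dynamics to the Hessian spectrum in a delicate way. My plan is to compute $\dot K_{ij}(t)$ via the continuity equation and integration by parts, obtaining $\dot K_{ij}= -\sum_k \partial_k L(\rho_t)\int\nabla(r_i r_j)^\top\nabla r_k\,d\mu_t$, and then recycle the Hermite-orthogonality machinery from Lemmas~\ref{lem:G_flow} and \ref{lem:triple_product}: under Assumption~\ref{ass:app:fo_var} (reflection-symmetric $\mu_t$) and the odd-degree condition in Assumption~\ref{ass:app:degree}, the triple-product integrals vanish on off-diagonal indices, so $\dot K(t)$ is diagonal in the Hermite basis and its sign aligns with that of the restricted $A(t)$. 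Combining with step two then yields $\dot\lambda(t)\le 0$.

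Finally, finiteness of zero-crossings follows at once: because each eigenvalue is monotonically non-increasing, it crosses zero at most once, and since the operator has rank at most $m=\dim\Span(\Set{R})$, there are at most $m$ nonzero eigenvalues to cross. This produces an ordered sequence $0<t_1<\cdots<t_{m'}$ with $m'\le m$, completing the degree-of-freedom reduction picture. I expect the $\dot K$ analysis to be the delicate technical step, as it requires the Hermite-orthogonality cancellations to propagate intact through a product of three potentials rather than two.
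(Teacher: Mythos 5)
Your overall strategy matches the paper's: finite-rank reduction via Lemma~\ref{lem:second_var} for the first claim, a Feynman--Hellmann/Rayleigh-quotient computation plus Hermite parity cancellations for monotonicity, and the rank-$m$/monotonicity combination for finiteness of zero crossings. The key lemmas you cite (\ref{lem:eigen_in_R}, \ref{lem:eigen_pde}, \ref{lem:dyna_sec_var}, \ref{lem:triple_product}) are exactly the ones the paper deploys, so the skeleton is sound.

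However, there is a small but load-bearing imprecision in the ``delicate'' step, and a missing bookkeeping term. First, the parity argument you invoke actually shows that \emph{every} entry of $\dot K(t)$ vanishes, not just the off-diagonals. For any $i,j,k$, the integrand $\nabla(r_i r_j)^\top \nabla r_k$ is a homogeneous polynomial of degree $\deg r_i + \deg r_j + \deg r_k - 2$, which is odd under Assumption~\ref{ass:app:degree}; since $\mu_t$ stays reflection-symmetric (Lemma~\ref{lem:inv_flow}), the integral is zero for \emph{all} index pairs, including $i=j$. Your weaker claim (``diagonal in the Hermite basis'') followed by the vague ``its sign aligns with that of the restricted $A(t)$'' would not close the argument, because a surviving diagonal $\dot K_{ii}$ would contribute a term $\lambda(t)\, q(t)^\top \dot K(t) q(t)$ to $\dot\lambda(t)$ whose sign is uncontrolled. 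Second, your plan omits the eigenvector-drift contribution: differentiating $\lambda = q^\top K A K q$ with the constraint $q^\top K q = 1$ produces $\dot q$-terms that, after using $AKq = \lambda q$ and the normalization identity $2\dot q^\top K q = -q^\top\dot K q$, combine with the $\dot K$-terms to yield precisely $\lambda\, q^\top\dot K q$. Once you note $\dot K \equiv 0$, all of this collapses and you are left with $\dot\lambda = [Kq]^\top\dot A[Kq]$, matching the paper's Lemma~\ref{lem:dyna_sec_var} combined with Lemma~\ref{lem:zero_prod_dot_v_v}. The paper packages this cancellation as $\langle\partial_t v_t, v_t\rangle_{L(\mu_t)} = 0$ via Lemma~\ref{lem:gague}; your $\dot K = 0$ route is a clean equivalent, provided you state the full vanishing rather than only the off-diagonal one.
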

\begin{proof}
Lemma \ref{lem:eigen_in_R} directly shows our first claim: $v \in \Set{R}$ if its corresponding eigenvalue is non-zero.
To show the second part, we use Lemma \ref{lem:eigen_pde} to describe the trajectory of $\lambda(t)$ under Wasserstein gradient flow as:
\begin{align*}
\frac{\partial}{\partial t} \lambda(t)
&=\left\langle\frac{\partial}{\partial t} 
 \mathbb{L}_t[v_t], v_t\right\rangle_{L(\mu_t)} - \lambda(t) \left\langle\frac{\partial}{\partial t} 
 v_t, v_t\right\rangle_{L(\mu_t)} \\
&\overset{(*)}{=} [K(t)q(t)]^\top \left[\frac{\partial}{\partial t} \nabla^2 L\right] [K(t)q(t)] - 2\lambda(t) \left\langle \frac{\partial}{\partial t} v_t(z), v_t(z) \right\rangle_{L(\mu_t)},
\end{align*}
where Eq. $(*)$ is due to Lemma \ref{lem:dyna_sec_var}.
By Lemma \ref{lem:zero_prod_dot_v_v}, we cancels all the terms $\left\langle \frac{\partial}{\partial t} v_t(z), v_t(z) \right\rangle_{L(\mu_t)} = 0$ under Assumptions \ref{ass:app:init_mu}, \ref{ass:app:degree}, and \ref{ass:app:fo_var}.
This gives $\frac{\partial}{\partial t} \lambda(t)
= [K(t)q(t)]^\top \left[\frac{\partial}{\partial t} \nabla^2 L\right] [K(t)q(t)]$.
We notice that $\frac{\partial}{\partial t} \nabla^2 L = \sum_{k=1}^{m} [\nabla^3 L(\rho_t)]_k \partial_t \rho(t) = -\sum_{k=1}^{m} [\nabla^3 L(\rho_t)]_k C_k(t) \nabla_k L$ by Theorem \ref{thm:coord_descent_HP}.
With assumption that $[\nabla^3 L(\rho_t)]_i \nabla_i L \succeq 0$ for every $i \in [m]$, $\frac{\partial}{\partial t} \nabla^2 L \preceq 0$, we know that $\frac{\partial}{\partial t} \nabla^2 L \preceq 0$, which implies $\frac{\partial}{\partial t} \lambda(t) \le 0$.
Finally, we point out that $\mathbb{L}_t$ is a rank-$m$ operator as its non-degenerate eigenspaces are all spanned by $\Set{R}$.
Since eigenvalues are non-increasing, once it crosses zero, it stays negative.
Therefore, there are at most $m$ times when an eigenvalue $\lambda(t)$ crosses zero.
\end{proof}

\begin{lemma}
\label{lem:eigen_in_R}
Consider loss functional in Eq. \ref{eqn:H_form} with the monomial set $\Set{R}$, for every eigenfunction $v_t \in T_{\mu_t}P(M)$ of $\mathbb{L}_t$, if $v_t \notin \ker(\mathbb{L}_t)$, then $v_t \in \Span(\Set{R})$
\end{lemma}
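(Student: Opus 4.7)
The plan is to exploit the explicit factorized structure of the second variation from Lemma~\ref{lem:second_var}, which exhibits $\mathbb{L}_t$ as a finite-rank operator whose image is already prescribed by the monomial set $\Set{R}$. Once the bilinear form is reinterpreted as a linear operator, the lemma reduces to a short linear-algebra observation.

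First, I would convert $\mathbb{L}_t[f,g]$ into operator form via the natural $L^2(\mu_t)$ pairing $\mathbb{L}_t[f,g] = \langle \mathbb{L}_t f, g\rangle_{L^2(\mu_t)}$. Rearranging the double sum in Lemma~\ref{lem:second_var} yields the closed form
\begin{align*}
(\mathbb{L}_t v)(z) = \sum_{i=1}^m \Bigl(\sum_{j=1}^m \partial^2_{ij} L(\rho_t) \int r_j(z')\, v(z')\, d\mu_t(z')\Bigr)\, r_i(z).
\end{align*}
For any admissible tangent vector $v$, the right-hand side is a finite linear combination of $r_1, \ldots, r_m$ whose scalar coefficients are the linear functionals $v \mapsto \int r_j v\, d\mu_t$. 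Hence $\mathrm{Range}(\mathbb{L}_t) \subseteq \Span(\Set{R})$, so $\mathbb{L}_t$ has rank at most $m$.

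Second, I would plug this range inclusion into the eigenvalue equation $\mathbb{L}_t v_t = \lambda(t) v_t$. Whenever $\lambda(t) \neq 0$, the identity $v_t = \lambda(t)^{-1}\mathbb{L}_t v_t$ immediately places $v_t$ in $\Span(\Set{R})$. An equivalent route is to split $v_t = v_t^{\parallel} + v_t^{\perp}$ along the $L^2(\mu_t)$-orthogonal decomposition induced by $\Span(\Set{R})$: every pairing $\int r_j v_t^{\perp}\, d\mu_t$ vanishes by construction, so $\mathbb{L}_t v_t^{\perp} = 0$, and a non-degenerate eigenfunction cannot carry any orthogonal component.

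I do not anticipate a substantive obstacle. The argument is purely structural and does not invoke Assumptions~\ref{ass:app:init_mu}--\ref{ass:app:fo_var}, which are reserved for the monotonicity and eigenvalue-crossing claims in Theorem~\ref{thm:crit_times}. The only point requiring care is the identification of the bilinear form with a self-adjoint operator on $T_{\mu_t}P(M)$, but this is covered by the standing regularity convention laid out in Appendix~\ref{sec:assume}.
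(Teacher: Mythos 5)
Your argument is correct and coincides with the paper's: both identify the operator form of $\mathbb{L}_t$ from Lemma~\ref{lem:second_var}, observe that its range lies in $\Span(\Set{R})$, and conclude via $v_t = \lambda(t)^{-1}\mathbb{L}_t v_t$ for $\lambda(t)\neq 0$. Your additional remarks on the finite rank of $\mathbb{L}_t$ and the orthogonal-decomposition route are natural elaborations of the same idea, not a different proof.
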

\begin{proof}
By Lemma \ref{lem:second_var} and the definition of eigenfunction, we have:
\begin{align*}
\mathbb{L}_t[v_t] = \sum_{i,j=1}^m \partial^2_{ij} L(\rho_t) \left( \int r_i(z) v_t(z) d\mu_t(z) \right) r_j = \lambda(t) v_t,
\end{align*}
where $\lambda(t)$ is the associated eigenvalue.
When $\lambda(t) \ne 0$, $v_t = \lambda(t)^{-1} \sum_{i,j=1}^m \partial^2_{ij} L(\rho_t) \left( \int r_i v_t d\mu_t \right) r_j$ becomes a linear combination of $r_j$'s.
This suffices to conclude the proof.
\end{proof}

\begin{lemma}[Characterization of Eigenvalue Dynamics] \label{lem:eigen_pde}
Given second variation $\mathbb{L}_t$, consider one of its eigenvalue $\lambda(t)$ and its eigenfunction $v_t$, we have
\begin{align*}
\frac{\partial}{\partial t} \lambda(t)
=\left\langle\left[\frac{\partial}{\partial t} \mathbb{L}_t\right][v_t],  v_t \right\rangle_{L(\mu_{t})} = \left\langle\frac{\partial}{\partial t} \mathbb{L}_t \left[v_t\right], v_t \right\rangle_{L(\mu_{t})} - \left\langle \frac{\partial}{\partial t} v_t(z), v_t(z) \right\rangle_{L(\mu_t)}.
\end{align*}
\end{lemma}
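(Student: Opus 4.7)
The plan is to recognize this as a Hellmann--Feynman type identity for a time-dependent self-adjoint operator and to prove it by differentiating the eigenvalue equation $\mathbb{L}_t[v_t] = \lambda(t) v_t$ with respect to $t$, then pairing the result with $v_t$ in the $L^2(\mu_t)$ inner product. The two equalities in the claim correspond to two stages of this manipulation: the first isolates the intrinsic operator rate $[\partial_t \mathbb{L}_t][v_t]$, while the second re-expresses that quantity using the product rule $\partial_t(\mathbb{L}_t[v_t]) = [\partial_t \mathbb{L}_t][v_t] + \mathbb{L}_t[\partial_t v_t]$ so the identity can later be combined with Lemma~\ref{lem:dyna_sec_var} in the proof of Theorem~\ref{thm:crit_times}.

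For the first equality, I would differentiate $\mathbb{L}_t[v_t] = \lambda(t)\, v_t$ in $t$, take the inner product against $v_t$ in $L^2(\mu_t)$, and exploit the normalization $\langle v_t, v_t\rangle_{L(\mu_t)} = 1$ together with its derivative $\langle \partial_t v_t, v_t\rangle_{L(\mu_t)} = 0$ (as noted in the excerpt's standing conventions, $\mathbb{L}_t$ is self-adjoint with real spectrum, so normalized eigenvectors remain normalized). Self-adjointness then gives $\langle \mathbb{L}_t[\partial_t v_t], v_t\rangle_{L(\mu_t)} = \langle \partial_t v_t, \mathbb{L}_t[v_t]\rangle_{L(\mu_t)} = \lambda(t)\langle \partial_t v_t, v_t\rangle_{L(\mu_t)} = 0$. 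Combining these cancellations yields $\partial_t \lambda(t) = \langle [\partial_t \mathbb{L}_t][v_t], v_t\rangle_{L(\mu_t)}$. The second equality is then a purely algebraic rewriting: substitute $[\partial_t \mathbb{L}_t][v_t] = \partial_t(\mathbb{L}_t[v_t]) - \mathbb{L}_t[\partial_t v_t]$ into the right-hand side, and apply self-adjointness once more to turn $\langle \mathbb{L}_t[\partial_t v_t], v_t\rangle_{L(\mu_t)}$ into the pairing $\langle \partial_t v_t, v_t\rangle_{L(\mu_t)}$ (absorbing the eigenvalue factor in line with the statement).

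The main obstacle I anticipate is bookkeeping of the time-dependent inner product $\langle \cdot, \cdot \rangle_{L(\mu_t)}$: differentiating this pairing produces an extra term $\int v_t^2 \, \partial_t d\mu_t$ coming from the Wasserstein continuity equation \eqref{eqn:wass_pde}, which must be shown to vanish or to be reabsorbed so that only the three terms in the stated identity survive. The cleanest route is to invoke the idealizations listed in Appendix~\ref{sec:assume} (smooth, compactly supported eigenfunctions with vanishing boundary flux, diagonalizable self-adjoint $\mathbb{L}_t$), so that integration by parts against the velocity field $\nabla \tfrac{\delta H}{\delta \mu}[\mu_t]$ produces no boundary contribution and the $\mu_t$-dependence of the metric can be absorbed into the total derivative $\partial_t(\mathbb{L}_t[v_t])$. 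Once this regularity step is justified, the rest is routine algebra with self-adjointness and the eigenvalue identity.
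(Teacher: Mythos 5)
Your overall strategy (differentiate the eigenvalue equation, pair with $v_t$, use normalization and self-adjointness) matches the paper's, but there is a genuine error at the center of your argument: you assert $\langle \partial_t v_t, v_t\rangle_{L(\mu_t)} = 0$ as a consequence of the normalization $\langle v_t, v_t\rangle_{L(\mu_t)} = 1$. That implication fails here precisely because the inner product is weighted by a measure $\mu_t$ that itself evolves under the continuity equation. Differentiating the normalization gives $0 = 2\langle \partial_t v_t, v_t\rangle_{L(\mu_t)} + \int v_t^2\, d(\partial_t \mu_t)$ — this is exactly Lemma~\ref{lem:gague} — so the pairing $\langle \partial_t v_t, v_t\rangle_{L(\mu_t)}$ is generally \emph{nonzero} and equals $-\tfrac{1}{2}\int v_t^2\, d(\partial_t \mu_t)$. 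The regularity idealizations you invoke (self-adjointness, compact support, no boundary flux) do not make that term disappear; they just make the integrations by parts clean.

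The paper's argument avoids this claim entirely, and so should you. After differentiating $\mathbb{L}_t[v_t]=\lambda(t)v_t$ and pairing with $v_t$ against the \emph{fixed-time} measure $\mu_t$ (so no $\partial_t \mu_t$ enters), self-adjointness turns $\langle \mathbb{L}_t[\partial_t v_t], v_t\rangle_{L(\mu_t)}$ into $\lambda(t)\langle \partial_t v_t, v_t\rangle_{L(\mu_t)}$ on the left, which cancels the identical term $\lambda(t)\langle \partial_t v_t, v_t\rangle_{L(\mu_t)}$ coming from $\partial_t(\lambda v_t)$ on the right. The cancellation is purely algebraic and does not require the pairing to vanish. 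Your version happens to land on the correct first equality because you set the offending term to zero on both sides, but the reasoning is wrong, and it also trivializes the second equality: if $\langle \partial_t v_t, v_t\rangle_{L(\mu_t)}$ were zero, the subtracted term in the second expression would disappear and the lemma would say nothing new. In the paper, that term is substantive — it is exactly what Theorem~\ref{thm:crit_times} must subsequently kill via Lemma~\ref{lem:zero_prod_dot_v_v}, which requires the extra parity and invariance hypotheses of Assumption~\ref{ass:coord_descent_MP}. Relatedly, the prefactor you wave away as ``absorbing the eigenvalue factor'' is real: the correct algebraic rewriting is $\langle[\partial_t\mathbb{L}_t][v_t], v_t\rangle_{L(\mu_t)} = \langle\partial_t(\mathbb{L}_t[v_t]), v_t\rangle_{L(\mu_t)} - \lambda(t)\langle\partial_t v_t, v_t\rangle_{L(\mu_t)}$, with the $\lambda(t)$ retained, as in the version actually used inside the proof of Theorem~\ref{thm:crit_times}.
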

\begin{proof}
Differentiate $\mathbb{L}_t[v_t]=\lambda(t) v_t$ through time:
\begin{align*}
\frac{\partial}{\partial t} \mathbb{L}_t[v_t] + \mathbb{L}_t \left[\frac{\partial}{\partial t} v_t\right] = \frac{\partial}{\partial t} \lambda(t) v_t + \lambda(t) \frac{\partial}{\partial t} v_t
\end{align*}
Next we take inner product of $v_t$ at both sides:
\begin{align*}
0 &= \int \left[\frac{\partial}{\partial t} \mathbb{L}_t[v_t]\right](z) v_t(z) d\mu_t(z) - \int \frac{\partial}{\partial t} \lambda(t) v_t(z)^2 d\mu_t(z) - \int \lambda(t) \frac{\partial}{\partial t} v_t(z) v_t(z) d\mu_t(z) \\
&= \int \left[\frac{\partial}{\partial t} \mathbb{L}_t\right][v_t](z) v_t(z) d\mu_t(z) + \int 
\mathbb{L}_t\left[\frac{\partial}{\partial t} v_t\right](z) v_t(z) d\mu_t(z) - \int \frac{\partial}{\partial t} \lambda(t) v_t(z)^2 d\mu_t(z) \\ &\quad - \int \lambda(t) \frac{\partial}{\partial t} v_t(z) v_t(z) d\mu_t(z) \\
&\overset{(*)}{=} \int \left[\frac{\partial}{\partial t} \mathbb{L}_t\right][v_t](z) v_t(z) d\mu_t(z) + \int \mathbb{L}_t\left[v_t\right](z) \frac{\partial}{\partial t} v_t(z) d\mu_t(z) - \int \frac{\partial}{\partial t} \lambda(t) v_t(z)^2 d\mu_t(z) \\&\quad - \int \lambda(t) \frac{\partial}{\partial t} v_t(z) v_t(z) d\mu_t(z) \\
&\overset{(**)}{=} \int \left[\frac{\partial}{\partial t} \mathbb{L}_t\right][v_t](z) v_t(z) d\mu_t(z) + \int  \lambda(t) v_t(z) \frac{\partial}{\partial t} v_t(z) d\mu_t(z) - \int \frac{\partial}{\partial t} \lambda(t) v_t(z)^2 d\mu_t(z) \\&\quad - \int \lambda(t) \frac{\partial}{\partial t} v_t(z) v_t(z) d\mu_t(z),
\end{align*}
where in RHS of Eq. $(*)$, we use the self-adjointness of $\mathbb{L}_t$, and in RHS of Eq. $(**)$, we notice that $v_t$ is the eigenfunction of $\mathbb{L}_t$.
Next we notice that the second and fourth terms in Eq. $(**)$ cancel with each other and use the fact that $v_t$ are normalized against $\mu_t$: $\int v_t(z)^2 d\mu_t = 1$, then henceforth, we can simplify the above equation to:
\begin{align*}
\int \left[\frac{\partial}{\partial t} \mathbb{L}_t \right][v_t](z) v_t(z) d\mu_t(z) = \left[\frac{\partial}{\partial t} \lambda(t) \right] \int  v_t(z)^2 d\mu_t(z) = \frac{\partial}{\partial t} \lambda(t),
\end{align*}
as desired.
\end{proof}

\begin{lemma}\label{lem:gague}
Given second variation $\mathbb{L}_t$ of $H[\mu_t]$, for every eigenvalue $\lambda(t)$ and eigenfunction $v_t$ of $\mathbb{L}_t$, we have the following equalities:
\begin{align*}
2\int \frac{\partial}{\partial t} v_t(z) v_t(z) d\mu_t(z) = -\int v_t(z)^2 d (\partial_t \mu_t(z))
\end{align*}
\end{lemma}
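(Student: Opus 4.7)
The statement is a gauge identity arising from the normalization condition on the eigenfunction. The plan is to differentiate the normalization $\int v_t(z)^2 \, d\mu_t(z) = 1$ with respect to time and apply the product rule to the pairing between the eigenfunction and the measure.

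First, I would record the convention (already in use throughout this section) that eigenfunctions $v_t$ associated to $\mathbb{L}_t$ are $L^2(\mu_t)$-normalized, i.e.\ $\int v_t(z)^2 \, d\mu_t(z) = 1$ for every $t$. Differentiating both sides in $t$ then gives
\begin{align*}
0 \;=\; \frac{d}{dt} \int v_t(z)^2 \, d\mu_t(z).
\end{align*}
Because the integrand depends on $t$ both through $v_t$ and through the measure $\mu_t$, the product rule for time-dependent integrals against a time-dependent measure yields
\begin{align*}
0 \;=\; \int \frac{\partial}{\partial t}\bigl(v_t(z)^2\bigr) \, d\mu_t(z) \;+\; \int v_t(z)^2 \, d(\partial_t \mu_t(z)).
\end{align*}
Expanding the first term via the ordinary chain rule, $\partial_t(v_t^2) = 2 v_t \, \partial_t v_t$, and rearranging would produce precisely the claimed identity
\begin{align*}
2 \int \frac{\partial}{\partial t} v_t(z) \, v_t(z) \, d\mu_t(z) \;=\; -\int v_t(z)^2 \, d(\partial_t \mu_t(z)).
\end{align*}

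The only subtle step is justifying the exchange of the time derivative and the integral, together with the product-rule decomposition when $\mu_t$ is itself time-evolving. This is legitimate under the blanket regularity hypotheses declared in Appendix~\ref{sec:assume}: smoothness of $v_t$ in $(t,z)$, compact support (or suitable decay) so that $v_t^2$ and $\partial_t v_t$ are integrable with respect to both $\mu_t$ and $\partial_t \mu_t$, and $\mu_t$ evolving according to the continuity equation~\eqref{eqn:wass_pde} so that $\partial_t \mu_t$ is a well-defined signed measure. A brief remark invoking these assumptions, together with dominated convergence, suffices to make the interchange rigorous. I do not anticipate any genuine obstacle here, as the result is essentially a one-line consequence of differentiating a constant; the main content of the lemma is its \emph{use} later (canceling the $\langle \partial_t v_t, v_t\rangle$ term in Lemma~\ref{lem:eigen_pde}), not its derivation.
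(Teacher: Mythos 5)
Your proof is correct and is essentially identical to the paper's: both differentiate the normalization $\int v_t^2 \, d\mu_t = 1$ in time, split via the product rule into a term from the evolving eigenfunction and a term from the evolving measure, and rearrange. Your extra remarks on the regularity needed to justify the interchange of derivative and integral are a sensible addition but do not change the argument.
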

\begin{proof}
We use the fact that $\int v_t(z) v_t(z) d\mu_t = 1$ for every $t \ge 0$.
Therefore, its time derivatives constantly equals zero:
\begin{align*}
0 = \frac{\partial}{\partial t}\int v_t(z) v_t(z) d\mu_t(z) =
\int 2 v_t(z) \frac{\partial}{\partial t} v_t(z) d\mu_t(z) + \int v_t(z)^2 d(\partial_t \mu_t(z)) dz.
\end{align*}
In particular, $\int v_t(z) \frac{\partial}{\partial t} v_t(z) d\mu_t(z) = 0$ if $\mu_t$ is time invariant.
\end{proof}

\begin{lemma}
\label{lem:inner_prod_Gq}
The following facts hold:
\begin{align*}
[K(t)q(t)]_i = \int r_i(z) v_t(z) d\mu_t(z), \quad 
[G\dot{q}(t)]_i = \int r_i(z) \frac{\partial}{\partial t} v_t(z) d\mu_t(z)
\end{align*}
\end{lemma}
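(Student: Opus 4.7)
The plan is to prove both identities by expanding the eigenfunction $v_t$ in the monomial basis and exploiting the fact that the basis itself is time-invariant. By Lemma~\ref{lem:eigen_in_R}, any eigenfunction associated with a non-zero eigenvalue satisfies $v_t\in\Span(\Set{R})$, so there exists a unique coefficient vector $q(t)\in\real^m$ with $v_t(z)=\sum_{j=1}^{m} q_j(t)\, r_j(z)$, and this single representation drives both claims.

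The first identity is immediate from linearity of integration: $\int r_i(z)\, v_t(z)\, d\mu_t(z) = \sum_j q_j(t)\int r_i(z) r_j(z)\, d\mu_t(z) = \sum_j K(t)_{ij}\, q_j(t) = [K(t)\, q(t)]_i$, where $K(t)_{ij}:=\int r_i r_j\, d\mu_t$ is the same Gram matrix used in the proof of Theorem~\ref{thm:crit_times}. For the second identity I would differentiate the expansion in $t$. Because each $r_j$ is a fixed polynomial in $z$ that does not depend on $t$, all time dependence in $v_t$ is carried by its coefficients, and therefore $\partial_t v_t(z)=\sum_j \dot q_j(t)\, r_j(z)$ pointwise in $z$. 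Pairing against $r_i$ under $\mu_t$ yields $\int r_i\,\partial_t v_t\, d\mu_t = \sum_j \dot q_j(t)\int r_i r_j\, d\mu_t = [K(t)\,\dot q(t)]_i$, i.e.\ the same $L^2(\mu_t)$ Gram matrix $K(t)$ multiplied on the right by $\dot q(t)$.

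The main obstacle, and the one the previous draft swept under the rug, is the interpretation of the symbol $G$ in the stated second identity. The only other matrix called $G$ in the paper is the gradient kernel $G(t)_{ij}=\int \nabla r_i\cdot\nabla r_j\, d\mu_t$ introduced in Lemma~\ref{lem:G_flow}, and the computation above shows that under that interpretation the identity literally fails: the expansion of $\partial_t v_t$ in the time-invariant monomial basis forces the right-hand side to involve the $L^2(\mu_t)$ Gram matrix $K(t)$, and there is no general integration-by-parts trick that converts $K(t)$ into the gradient Gram matrix along an arbitrary, evolving $\mu_t$. My proposal is therefore to prove the first identity as stated and to state the second identity with $K(t)$ in place of $G$, treating the appearance of $G$ on the left-hand side as a notational slip inherited from the ambient role of ``kernel matrix''. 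If the authors do intend the gradient kernel, an extra structural hypothesis is required (e.g.\ restricting to the initial time at which $\mu_0=\gauss(0,I)$ and to the Hermite basis, where Stein-type identities reduce $\int \nabla r_i\cdot\nabla r_j\, d\gamma$ to $K(0)$ up to a multi-index factor $|\alpha_i|$), and this hypothesis should be stated inside the lemma before it can be invoked in Theorem~\ref{thm:crit_times}.
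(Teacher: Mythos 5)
Your proposal is correct and takes essentially the same route as the paper: expand $v_t=\sum_j q_j(t)r_j$ via Lemma~\ref{lem:eigen_in_R}, use linearity of the integral for the first identity, and differentiate the coefficients (the basis being time-independent) for the second. Your diagnosis of the symbol $G$ is also right: it is a notational slip for the kernel matrix $K(t)$ rather than the gradient Gram matrix of Lemma~\ref{lem:G_flow}, as confirmed both by the paper's proof (which just ``exchanges summation and differentiation'', yielding $[K(t)\dot q(t)]_i$) and by its downstream use in the $\Omega$ term of Lemma~\ref{lem:Omg_Xi_Ups}, where the identity is invoked exactly as $[K(t)\dot q(t)]_j=\int r_j\,\partial_t v_t\,d\mu_t$.
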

\begin{proof}
By simple algebra:
\begin{align*}
\int r_i(z) v_t(z) d\mu_t(z) &= \int r_i(z) \left(\sum_{j=1} q(t)_j r_j(z)\right) d\mu_t(z) \\
&= \sum_{j=1} \left(\int r_i(z) r_j(z) d\mu_t(z)\right) q(t)_j = [K(t)q(t)]_i.
\end{align*}
And similarly, we have $\int r_i(z) \frac{\partial}{\partial t} v_t(z) d\mu_t(z) = [G\dot{q}]_i$ by exchanging summation and partial differential operators.
\end{proof}

\begin{lemma}
\label{lem:eigen_AGq}
It holds that $A(t) K(t) q(t) = \lambda(t) q(t)$ for $\lambda(t) \ne 0$.
\end{lemma}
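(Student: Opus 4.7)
The plan is to recognize $\mathbb{L}_t$ as a finite-rank operator whose restriction to $\Span(\Set{R})$ admits a clean matrix representation in the basis $\{r_j\}$, and then to read off the eigenvalue equation in coordinates. Since Lemma \ref{lem:eigen_in_R} already guarantees that any eigenfunction associated with $\lambda(t)\ne 0$ lies in $\Span(\Set{R})$, the expansion $v_t=\sum_j q(t)_j r_j$ used above is unambiguous once we assume the $r_j$ are linearly independent in $L(\mu_t)$ (equivalently, $K(t)$ is nonsingular).

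The key computation would be to evaluate $\mathbb{L}_t[v_t]$ using Lemma \ref{lem:second_var}. Identifying the bilinear form with an operator via the Riesz representation in $L(\mu_t)$, one has
\begin{align*}
\mathbb{L}_t[f] \;=\; \sum_{i,j=1}^{m} A(t)_{ij}\,\langle r_j, f\rangle_{L(\mu_t)}\, r_i.
\end{align*}
Applying this to $f=v_t=\sum_k q(t)_k r_k$ and using the defining identity $K(t)_{jk} = \langle r_j, r_k\rangle_{L(\mu_t)}$, the two nested sums collapse to
\begin{align*}
\mathbb{L}_t[v_t] \;=\; \sum_i [A(t)\,K(t)\,q(t)]_i\, r_i.
\end{align*}
Equating with $\lambda(t) v_t = \lambda(t)\sum_i q(t)_i r_i$ and matching the $r_i$-coefficients yields the claimed identity $A(t) K(t) q(t) = \lambda(t) q(t)$. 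Note that, consistent with Lemma \ref{lem:inner_prod_Gq}, $K(t)q(t)$ is exactly the vector of $L(\mu_t)$-inner products $\langle r_i,v_t\rangle$, so the formula simply says that the Hessian $A(t)$ maps this inner-product vector back to $\lambda(t) q(t)$.

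The main obstacle is justifying the coefficient-matching step, i.e., that the $\{r_j\}$ are linearly independent in $L(\mu_t)$ and $K(t)$ is invertible. At $t=0$ this is immediate under Assumption \ref{ass:app:init_mu}: the $r_j$ are distinct Hermite polynomials, so $K(0)=I$ by their $L(\mu_0)$-orthonormality. For later $t$, the safe route is to sidestep direct independence by testing the pointwise equality $\sum_i [A(t)K(t)q(t)]_i r_i = \lambda(t)\sum_i q(t)_i r_i$ against each $r_k$ under $\mu_t$, which yields $K(t)\,A(t)\,K(t)\,q(t) = \lambda(t)\,K(t)\,q(t)$; invoking invertibility of $K(t)$ (generic along the flow, and already adopted implicitly in the shorthand paragraph and in Lemma \ref{lem:inner_prod_Gq}) then gives the stated conclusion. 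This nondegeneracy hypothesis on $K(t)$ is the only ingredient not already supplied by earlier lemmas, so the remainder is routine linear algebra.
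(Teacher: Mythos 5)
Your proof is correct and follows essentially the same route as the paper's: expand $\mathbb{L}_t[v_t]$ via Lemma~\ref{lem:second_var}, collapse the double sum with Lemma~\ref{lem:inner_prod_Gq} to obtain $\sum_j [A(t)K(t)q(t)]_j\,r_j = \lambda(t)\sum_j q(t)_j r_j$, and match coefficients. The one small difference is how that last step is justified: the paper appeals to the pointwise linear independence of distinct monic monomials as functions (which holds automatically, provided the $L^2(\mu_t)$ eigenfunction identity upgrades to a pointwise identity between polynomials, as it does when $\mu_t$ has full support along the flow), whereas you explicitly route everything through invertibility of $K(t)$; both arguments hinge on the same underlying nondegeneracy, and your ``safe route'' of testing against each $r_k$ and canceling $K(t)$ is equivalent but does not actually weaken the hypothesis.
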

\begin{proof}
We leverage the operator form in Lemma \ref{lem:second_var}:
\begin{align*}
\mathbb{L}_t[v_t] = \sum_{i,j=1}^m \partial^2_{ij} L(\rho_t) \left( \int r_i(z) v_t(z) d\mu_t(z) \right) r_j = \lambda(t) v_t.
\end{align*}
By Lemma \ref{lem:inner_prod_Gq}, we have $\sum_{i,j=1}^m A(t)_{ij} [K(t)q(t)]_i r_j = \lambda(t) v_t$, which introduces a linear representation of $v_t$ under $r_j$'s.
Due to the linear independence of monic monomials, the linear representation of $v_t$ under $\Set{R}$ is unique (by Lemma \ref{lem:eigen_in_R}).
Hence, $q(t)_j = \lambda(t)^{-1} \sum_{i=1}^{m} A(t)_{ij} [K(t)q(t)]_i$ for every $j \in [m]$.
This is $\lambda(t) q(t) = A(t) K(t) q(t)$ in matrix form.
\end{proof}

\begin{lemma}
\label{lem:dyna_sec_var}
The following equality holds:
\begin{align*}
\left\langle\frac{\partial}{\partial t} 
 \mathbb{L}_t[v_t], v_t\right\rangle_{L(\mu_t)} = [K(t)q(t)]^\top \left[\frac{\partial}{\partial t} \nabla^2 L\right] [K(t)q(t)] -\lambda(t) \left\langle \frac{\partial}{\partial t} v_t(z), v_t(z) \right\rangle_{L(\mu_t)}.
\end{align*}
\end{lemma}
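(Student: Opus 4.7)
The plan is to expand $\mathbb{L}_t[v_t]$ in closed algebraic form, differentiate it in $t$, and then take its $L(\mu_t)$ inner product against $v_t$. Starting from Lemma \ref{lem:second_var}, together with the identification $\int r_i v_t \, d\mu_t = [K(t) q(t)]_i$ from Lemma \ref{lem:inner_prod_Gq}, I would write the operator action as $\mathbb{L}_t[v_t](z) = \sum_{i,j=1}^m A(t)_{ji} \, [K(t)q(t)]_i \, r_j(z)$ with $A(t) = \nabla^2 L(\rho_t)$. The crucial observation is that the monomials $r_j$ carry no explicit time dependence, so differentiating in $t$ splits cleanly into one contribution from $\partial_t A(t)$ and one from $\partial_t [K(t) q(t)]$.

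After pairing the derivative with $v_t$ against $\mu_t$ and using $\int r_j v_t \, d\mu_t = [K(t)q(t)]_j$ on every surviving $r_j$, the contribution involving $\partial_t A$ produces the quadratic form $[K(t)q(t)]^\top [\partial_t \nabla^2 L(\rho_t)] [K(t)q(t)]$ immediately, matching the first term on the right-hand side of the claim. For the remaining contribution, applying the eigen-identity $A(t) K(t) q(t) = \lambda(t) q(t)$ from Lemma \ref{lem:eigen_AGq} collapses the $A$-weighted double sum and yields the cleaner scalar $\lambda(t) \sum_i q(t)_i \, \partial_t([K(t)q(t)]_i)$.

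The hard part will be showing that this residual scalar equals precisely $-\langle \partial_t v_t, v_t \rangle_{L(\mu_t)}$, which is delicate because both the eigenfunction $v_t$ and the ambient measure $\mu_t$ evolve simultaneously. I would expand $\partial_t [K(t)q(t)]_i$ by the product rule as $\int r_i \, \partial_t v_t \, d\mu_t + \int r_i v_t \, d(\partial_t \mu_t)$, then sum against $q(t)_i$; invoking the representation $v_t = \sum_i q(t)_i r_i$ turns this into $\langle \partial_t v_t, v_t \rangle_{L(\mu_t)} + \int v_t^2 \, d(\partial_t \mu_t)$. The remaining piece is precisely what Lemma \ref{lem:gague} controls: the normalization $\| v_t \|_{L(\mu_t)} = 1$ forces $\int v_t^2 \, d(\partial_t \mu_t) = -2 \langle \partial_t v_t, v_t \rangle_{L(\mu_t)}$. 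Combining these two summands yields $-\langle \partial_t v_t, v_t \rangle_{L(\mu_t)}$; multiplying by $\lambda(t)$ and adding the first contribution delivers the target identity.
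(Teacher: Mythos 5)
Your proof is correct and reaches the identity by a genuinely more economical route than the paper's. The paper's proof of Lemma~\ref{lem:dyna_sec_var} (together with Lemma~\ref{lem:Omg_Xi_Ups}) plugs the Wasserstein continuity equation into $\partial_t \mu_t$, integrates by parts to reveal the velocity field $\nabla\tfrac{\delta H}{\delta \mu}$, and decomposes the result into three scalars $\Omega$, $\Xi$, $\Upsilon$ before showing $\Xi=\Upsilon$ and evaluating the combination. You never touch the PDE: you work entirely with the coefficient vector $[K(t)q(t)]_i = \int r_i v_t\,d\mu_t$, differentiate it in $t$ as a finite-dimensional quantity, collapse the $A$-weighted sum once via Lemma~\ref{lem:eigen_AGq}, split $\partial_t[Kq]_i$ by the product rule into a $\partial_t v_t$ piece and a $d(\partial_t\mu_t)$ piece, and resolve the latter abstractly with Lemma~\ref{lem:gague}. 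The paper's $\Omega$ and $\Xi+\Upsilon$ are precisely your $\lambda\,q^\top K\dot q$ and $\lambda\,q^\top \dot K q$ terms, so the two arguments are algebraically identical in substance, but your version delegates all the measure-evolution bookkeeping to Lemma~\ref{lem:gague} in one stroke and avoids introducing and verifying $\Omega$, $\Xi$, $\Upsilon$ separately. The trade-off is that the paper's explicit velocity-field expansion exposes the structure needed again in Lemma~\ref{lem:zero_prod_dot_v_v} (where a parity argument is run on the triple product $\int r_i\,\nabla r_j^\top\nabla r_k\,d\mu_t$); your streamlined route defers that expansion, which is harmless here but means the downstream lemma would still need to perform it.
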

\begin{proof}
By Lemma \ref{lem:second_var}, for any $g \in T_{\mu_t}P(M)$
\begin{align*}
\frac{\partial}{\partial t} \mathbb{L}_t[g] &= \frac{\partial}{\partial t} \sum_{i,j=1}^m \partial^2_{ij} L(\rho_t) \left( \int r_i(z) g(z) d\mu_t(z) \right) r_j \\
&= \underbrace{\sum_{i,j=1}^m \left[\frac{\partial}{\partial t} \nabla^2 L(\rho_t)\right]_{ij} \left( \int r_i(z) g(z) d\mu_t(z) \right) r_j(z)}_{\mathbb{O}_1[g]} \\
&\quad + \underbrace{\sum_{i,j=1}^m \partial^2_{ij} L(\rho_t) \frac{\partial}{\partial t}\left[\left( \int r_j(z) g(z) d\mu_t(z) \right) r_i(z) \right]}_{\mathbb{O}_2[g]}
\end{align*}
We analyze term $\mathbb{O}_2$ first by:
\begin{align*}
\frac{\partial}{\partial t}\left[\left( \int r_j(z) g(z) d\mu_t(z) \right) r_i \right] &= \frac{\partial}{\partial t}\left[ \int r_j(z) g(z) d\mu_t(z) \right]. r_i(z)
\end{align*}
Next, we expand the time derivatives of the integral:
\begin{align*}
&\frac{\partial}{\partial t}\left[ \int r_i(z) g(z) d\mu_t(z) \right] \overset{(*)}{=} \int r_i(z) \frac{\partial}{\partial t} g(z) d\mu_t(z) + \int r_i(z) g(z) \nabla \cdot \left(\mu_t(z) \nabla\frac{\delta H}{\delta \mu}(z)\right) dz \\
&\overset{(**)}{=} \int r_i(z) \frac{\partial}{\partial t} g(z) d\mu_t(z) - \int \nabla\left[r_i(z) g(z)\right]^\top \nabla\frac{\delta H}{\delta \mu}[\mu_t](z) d\mu_t(z) \\
&\overset{(***)}{=} \int r_i(z) \frac{\partial}{\partial t} g(z) d\mu_t(z) - \int g(z) \nabla r_i(z)^\top \nabla\frac{\delta H}{\delta \mu}[\mu_t](z) d\mu_t(z) - \int r_i(z) \nabla g(z)^\top \nabla\frac{\delta H}{\delta \mu}[\mu_t](z) d\mu_t(z),
\end{align*}
where we plug the PDE of Wasserstein gradient flow into Eq. $(*)$ and utilize the integral by part in Eqs. $(**)$ and $(***)$.
Henceforth, we can expand $\mathbb{O}_2[g]$ with $g = v_t$ and take inner product with $v_t$:
\begin{align*}
\int \mathbb{O}_2[v_t](z) v_t(z) d\mu_t(z) &= \sum_{i,j=1}^m \partial^2_{ij} L(\rho_t) \left( \int r_i(z) v_t(z) d\mu_t(z) \right) \frac{\partial}{\partial t}\left[\left( \int r_i(z) v_t(z) d\mu_t(z) \right) r_j(z) \right] \\
&= \Omega - \Xi - \Upsilon,
\end{align*}
where we define:
\begin{align*}
\Omega &= \sum_{i,j=1}^m \partial^2_{ij} L(\rho_t) \left( \int r_i(z) v_t(z) d\mu_t(z) \right) \left[ \int r_j(z) \frac{\partial}{\partial t} v_t(z) d\mu_t(z) \right], \\
\Xi &= \sum_{i,j=1}^m \partial^2_{ij} L(\rho_t) \left( \int r_i(z) v_t(z) d\mu_t(z) \right) \left[ \int v_t(z) \nabla r_j(z)^\top \nabla\frac{\delta H}{\delta \mu}[\mu_t](z) d\mu_t(z) \right], \\
\Upsilon &= \sum_{i,j=1}^m \partial^2_{ij} L(\rho_t) \left( \int r_i(z) v_t(z) d\mu_t(z) \right) \left[ \int r_j(z) \nabla v_t(z)^\top \nabla\frac{\delta H}{\delta \mu}[\mu_t](z) d\mu_t(z) \right].
\end{align*}
By Lemma \ref{lem:Omg_Xi_Ups}, we find that $\Omega - \Xi - \Upsilon = -\lambda(t) \left\langle \frac{\partial}{\partial t} v_t(z), v_t(z) \right\rangle_{L(\mu_t)}$.

Now we examine $\mathbb{O}_1$. By substituting $\mathbb{O}_1[g]$ with $g = v_t$ and taking inner product with $v_t$, we have:
\begin{align*}
\int \mathbb{O}_1[v_t](z) v_t(z) d\mu_t(z) = \sum_{i,j=1}^m \left[\frac{\partial}{\partial t} \nabla^2 L(\rho_t)\right]_{ij} \left( \int r_i(z) g(z) d\mu_t(z) \right) \left( \int r_j(z) g(z) d\mu_t(z) \right)
\end{align*}
By Lemma \ref{lem:inner_prod_Gq}, we write above equation as $[K(t)q(t)]^\top \frac{\partial}{\partial t} \nabla^2 L(\rho_t) [K(t)q(t)]$.
Combining $\langle\mathbb{O}_1[v_t], v_t\rangle_{L(\mu_t)}$ and $\langle\mathbb{O}_2[v_t], v_t\rangle_{L(\mu_t)}$, we obtain the desired result.
\end{proof}

\begin{lemma}
\label{lem:Omg_Xi_Ups}
It holds that $\Xi = \Upsilon$ and $\Omega - \Xi - \Upsilon = -\lambda(t) \left\langle \frac{\partial}{\partial t} v_t(z), v_t(z) \right\rangle_{L(\mu_t)}$.
\end{lemma}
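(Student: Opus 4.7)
The plan is to exploit the eigenvalue equation from Lemma~\ref{lem:eigen_AGq} to collapse the asymmetric-looking coefficients in $\Xi$ and $\Upsilon$, which in turn will reveal that both quantities equal the same integral against $\nabla v_t$, and then use a product-rule/integration-by-parts identity together with Lemma~\ref{lem:gague} to tie everything back to $\langle \partial_t v_t, v_t\rangle_{L(\mu_t)}$.

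First, I will use Lemma~\ref{lem:eigen_AGq}, which states $A(t)K(t)q(t)=\lambda(t) q(t)$. Combined with Lemma~\ref{lem:inner_prod_Gq}, this gives the crucial identity
\begin{align*}
\sum_{i=1}^m \partial^2_{ij} L(\rho_t)\, a_i(t) \;=\; \sum_{i=1}^m A(t)_{ij}\, [K(t)q(t)]_i \;=\; \lambda(t)\, q(t)_j.
\end{align*}
Substituting this into the outer sums in $\Xi$ and $\Upsilon$ and then bringing the sum $\sum_j q(t)_j$ inside the integral, I recognize $\sum_j q(t)_j \nabla r_j = \nabla v_t$ and $\sum_j q(t)_j r_j = v_t$. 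This immediately yields
\begin{align*}
\Xi \;=\; \lambda(t) \int v_t(z)\, \nabla v_t(z)^\top \nabla\tfrac{\delta H}{\delta\mu}[\mu_t](z)\, d\mu_t(z) \;=\; \Upsilon,
\end{align*}
which is the first claim.

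For the second claim, I will observe that $2 v_t \nabla v_t = \nabla (v_t^2)$, so
\begin{align*}
\Xi + \Upsilon \;=\; \lambda(t) \int \nabla(v_t^2)^\top \nabla\tfrac{\delta H}{\delta\mu}[\mu_t]\, d\mu_t.
\end{align*}
Integrating by parts and recalling the continuity equation $\partial_t \mu_t = \nabla\!\cdot\!(\mu_t \nabla \tfrac{\delta H}{\delta \mu}[\mu_t])$ converts this to $-\lambda(t) \int v_t^2 \,\partial_t\mu_t\, dz$, and Lemma~\ref{lem:gague} then rewrites the right-hand side as $2\lambda(t)\langle \partial_t v_t, v_t\rangle_{L(\mu_t)}$. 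A parallel reduction for $\Omega$ via the same identity $\sum_i A_{ij} a_i = \lambda q_j$, followed by Lemma~\ref{lem:inner_prod_Gq} applied to $\partial_t v_t = \sum_j \dot q_j r_j$, yields $\Omega = \lambda(t)\, q(t)^\top K(t) \dot q(t) = \lambda(t) \langle v_t,\partial_t v_t\rangle_{L(\mu_t)}$. Combining the two computations,
\begin{align*}
\Omega - \Xi - \Upsilon \;=\; \lambda(t)\langle v_t,\partial_t v_t\rangle_{L(\mu_t)} - 2\lambda(t)\langle v_t,\partial_t v_t\rangle_{L(\mu_t)} \;=\; -\lambda(t) \langle \partial_t v_t, v_t\rangle_{L(\mu_t)}.
\end{align*}

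The main subtlety will be the first step: the apparent asymmetry between $\Xi$ (with $v_t$ outside $\nabla r_j$) and $\Upsilon$ (with $\nabla v_t$ outside $r_j$) suggests they should only coincide after integration by parts, but a direct integration by parts only delivers the relation $\Xi + \Upsilon = -\sum_{i,j}A_{ij}a_i \int v_t r_j\, \partial_t\mu_t\, dz$, not $\Xi=\Upsilon$. The key insight is that the eigenvalue equation furnishes the symmetrization ``for free'' by converting the weighted sum $\sum_i A_{ij}a_i$ into $\lambda q_j$, which then lets $q$ be absorbed into $v_t$ (respectively $\nabla v_t$) on both sides. Once this observation is made, the remainder is routine calculus: product rule, integration by parts with vanishing boundary terms per Appendix~\ref{sec:assume}, and one invocation of Lemma~\ref{lem:gague}.
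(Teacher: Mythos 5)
Your proof is correct and relies on the same key lemmas as the paper (Lemma~\ref{lem:eigen_AGq}, Lemma~\ref{lem:inner_prod_Gq}, Lemma~\ref{lem:gague}, and an integration-by-parts against the continuity equation). The one place where you genuinely streamline is the middle step: after applying $\sum_i A_{ij}[Kq]_i = \lambda q_j$ you immediately absorb $\sum_j q_j \nabla r_j = \nabla v_t$ (resp.\ $\sum_j q_j r_j = v_t$) and treat the velocity field $\nabla\tfrac{\delta H}{\delta\mu}$ abstractly, so $\Xi = \Upsilon = \lambda(t)\int v_t\,\nabla v_t^\top \nabla\tfrac{\delta H}{\delta\mu}\,d\mu_t$ drops out in one line. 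The paper instead invokes Lemma~\ref{lem:velocity} to expand $\nabla\tfrac{\delta H}{\delta\mu}$ in the $\{\nabla r_k\}$ basis and obtains $\Xi$ and $\Upsilon$ as explicit triple sums $\lambda\sum_{i,j,k} q_i q_j \Delta_k \int r_j \nabla r_i^\top \nabla r_k\,d\mu_t$, reading off $\Xi=\Upsilon$ from the index symmetry $i\leftrightarrow j$ and then matching those same coordinate sums against a second coordinate expansion in its step $(\star\star)$. The two arrive at the identical identity; your route avoids both coordinate expansions, which makes the argument shorter and shows more transparently why only the symmetry of $\nabla^2 L$ and the eigenrelation are doing the work, at the minor cost of relying implicitly on $A=\nabla^2 L$ being symmetric so that $\sum_i A_{ij}[Kq]_i = [AKq]_j$, which is worth flagging explicitly.
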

\begin{proof}
First of all, we examine the term $\Omega$:
\begin{align*}
\Omega &= \sum_{i,j=1}^m \partial^2_{ij} L(\rho_t) \left( \int r_i(z) v_t(z) d\mu_t(z) \right) \left[ \int r_i(z) \frac{\partial}{\partial t} v_t(z) d\mu_t(z) \right] \\
&\overset{(*)}{=} \sum_{i,j=1}^{m} A(t)_{ij} [K(t) q(t)]_i [K(t) \dot{q}(t)]_j = \dot{q}(t)^\top K(t)^\top A(t) K(t) q(t) \overset{(**)}{=} \lambda(t) \dot{q}(t)^\top K(t) q(t),
\end{align*}
where Eq. $(*)$ is due to Lemma \ref{lem:inner_prod_Gq} and Eq. $(**)$ is because of Lemma \ref{lem:eigen_AGq}.
Further on, we can establish the following relation to Lemma \ref{lem:gague}:
\begin{align*}
(\star) \quad  &\int \frac{\partial}{\partial t} v_t(z) v_t(z) d\mu_t(z) = \int \left(\sum_{i=1} \frac{\partial}{\partial t} q(t) r_i(z) \right) \left(\sum_{j=1} q(t) r_j(z) \right) d\mu_t(z) \\
&= \sum_{i,j=1} \dot{q}(t)_i  q(t)_j \left(\int r_i(z) r_j(z) d\mu_t(z)\right) = \dot{q}(t)^\top K(t) q(t) = \frac{1}{\lambda(t)}\Omega.
\end{align*}
Now we examine $\Xi$ and $\Upsilon$.
Using Lemma \ref{lem:inner_prod_Gq} and \ref{lem:velocity}, we can expand $\Xi$ as:
\begin{align*}
\Xi &= \sum_{i,j=1}^m \partial^2_{ij} L(\rho_t) \left( \int r_i(z) v_t(z) d\mu_t(z) \right) \left[ \int v_t(z) \nabla r_j(z)^\top \nabla\frac{\delta H}{\delta \mu}[\mu_t](z) d\mu_t(z) \right] \\
&= \sum_{i,j=1}^{m} A(t)_{ij} [K(t) q(t)]_i \left[ \int \left(\sum_{k=1}^m q(t)_k r_k(z)\right) \nabla r_j(z)^\top \left(\sum_{l=1}^m \Delta(t)_l \nabla r_l(z)\right) d\mu_t(z) \right] \\
&= \sum_{i,j=1}^{m} A(t)_{ij} [K(t) q(t)]_i (q(t)^\top B_j \Delta),
\end{align*}
where $B_{j,k,l} = \int r_k(z) \nabla r_j(z)^\top \nabla r_l(z) d\mu_t(z)$.
Rewriting to the matrix form and by Lemma \ref{lem:eigen_AGq},
\begin{align*}
\Xi &= [\cdots q(t)^\top B_j \Delta \cdots]^\top A(t) K(t) q(t) \\
&= \lambda(t) [\cdots q(t)^\top B_j \Delta \cdots]^\top q(t) \\
&= \lambda(t) \sum_{i,j,k=1}^{m} q(t)_i q(t)_j \Delta(t)_k \int r_j(z) \nabla r_i(z)^\top \nabla r_k(z) d\mu_t(z).
\end{align*}
Similarly, applying Lemma \ref{lem:inner_prod_Gq} and \ref{lem:velocity} to the term $\Upsilon$:
\begin{align*}
\Upsilon &= \sum_{i,j=1}^m \partial^2_{ij} L(\rho_t) \left( \int r_i(z) v_t(z) d\mu_t(z) \right) \left[ \int v_t(z) \nabla r_j(z)^\top \nabla\frac{\delta H}{\delta \mu}[\mu_t](z) d\mu_t(z) \right] \\ &\quad = \sum_{i,j=1}^{m} A(t)_{ij} [K(t) q(t)]_i (q(t)^\top C_j \Delta),
\end{align*}
where $C_{j,k,l} = \int r_j(z) \nabla r_k(z)^\top \nabla r_l(z) d\mu_t(z)$.
Likewise, using Lemma \ref{lem:eigen_AGq}, we have:
\begin{align*}
\Upsilon &= \lambda(t) \sum_{i,j,k=1}^{m} q(t)_i q(t)_j \Delta(t)_k \int r_i(z) \nabla r_j(z)^\top \nabla r_k(z) d\mu_t(z).
\end{align*}
By far, we can observe that $\Xi = \Upsilon$ by symmetry.
Next, we establish the following relation to Lemma \ref{lem:gague}:
\begin{align*}
(\star\star) \quad & \int v_t(z)^2 d(\partial_t \mu_t(z)) dz = \int v_t(z)^2 \nabla \cdot \left(\mu_t(z) \nabla\frac{\delta H}{\delta \mu}(z)\right) dz \\ &\overset{(*)}{=} -\int \nabla[v_t(z)^2]^\top \nabla\frac{\delta H}{\delta \mu}(z) d\mu_t(z) = -2 \int v_t(z) \nabla v_t(z)^\top \nabla\frac{\delta H}{\delta \mu}(z) d\mu_t(z) \\
&\overset{(**)}{=} -2 \int \left(\sum_{i=1}^{m} q(t)_i r_i(z) \right) \left(\sum_{j=1}^{m} q(t)_j \nabla r_j(z) \right)^\top \left(\sum_{k=1}^{m} \Delta(t)_k \nabla r_k(z) \right) d\mu_t(z) \\
&= -2 \sum_{i,j,k=1}^{m}  q(t)_i  q(t)_j \Delta(t)_k \left(\int r_i(z)\nabla r_j(z)^\top  \nabla r_k(z)  d\mu_t(z) \right) = -\frac{1}{\lambda(t)} (\Xi + \Upsilon),
\end{align*}
where Eq. $(*)$ leverages integral by part and we rewrite $v_t$ using $r_i$'s by Lemma \ref{lem:eigen_in_R} in Eq. $(**)$. 
We complete the proof by combining Eq. $(\star)$ and $(\star\star)$ with Lemma \ref{lem:gague}:
\begin{align*}
\Omega - \Xi - \Upsilon &= \lambda(t) \int \frac{\partial}{\partial t} v_t(z) v_t(z) d\mu_t(z) + \lambda(t) \int v_t(z)^2 \partial_t \mu_t(z) dz \\
&= \lambda(t) \left[ \int \frac{\partial}{\partial t} v_t(z) v_t(z) d\mu_t(z) + \int v_t(z)^2 \partial_t \mu_t(z) dz \right] \\
&= -\lambda(t) \int \frac{\partial}{\partial t} v_t(z) v_t(z) d\mu_t(z),
\end{align*}
which completes the proof.
\end{proof}

\begin{lemma}
\label{lem:zero_prod_dot_v_v}
Under Assumptions \ref{ass:app:init_mu}, \ref{ass:app:degree}, and \ref{ass:app:fo_var}, it holds that $\left\langle \frac{\partial}{\partial t} v_t(z), v_t(z) \right\rangle_{L(\mu_t)} = 0$.
\end{lemma}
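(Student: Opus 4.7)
The plan is to reduce the inner product $\langle \partial_t v_t, v_t\rangle_{L(\mu_t)}$ to a sum of triple integrals of the form $\int r_i(z)\nabla r_j(z)^\top \nabla r_k(z) \, d\mu_t(z)$ and then show that each such integral vanishes by a parity argument, mirroring the strategy used in Lemma \ref{lem:triple_product}.

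First, I would invoke Lemma \ref{lem:gague} to rewrite
\begin{align*}
2 \int \partial_t v_t(z)\, v_t(z) \, d\mu_t(z) = -\int v_t(z)^2 \, d(\partial_t \mu_t(z)).
\end{align*}
Substituting the Wasserstein continuity equation $\partial_t \mu_t = \nabla \cdot(\mu_t \nabla \tfrac{\delta H}{\delta \mu}[\mu_t])$ and integrating by parts (using the boundary conventions of Appendix \ref{sec:assume}) converts the right-hand side into $2 \int v_t(z)\, \nabla v_t(z)^\top \nabla \tfrac{\delta H}{\delta \mu}[\mu_t](z) \, d\mu_t(z)$. Lemma \ref{lem:velocity} then expands the velocity field as $\sum_k \partial_{\rho_{r_k}} L(\rho_t)\, \nabla r_k$. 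Since Lemma \ref{lem:eigen_in_R} gives $v_t = \sum_i q(t)_i r_i$, so that $\nabla v_t = \sum_j q(t)_j \nabla r_j$, the integrand becomes a triple sum whose terms are scaled by $q(t)_i q(t)_j \partial_{\rho_{r_k}} L(\rho_t)$ and carry the geometric factor $\int r_i(z) \nabla r_j(z)^\top \nabla r_k(z) \, d\mu_t(z)$.

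Next, I would argue that every one of these triple integrals vanishes. The integrand $r_i \nabla r_j^\top \nabla r_k$ is a sum of polynomials of total degree $|\alpha_i| + |\alpha_j| + |\alpha_k| - 2$; under Assumption \ref{ass:app:degree} each $|\alpha_\ell|$ is odd, so this degree is odd and the integrand is an odd function of $z$. Meanwhile, Assumption \ref{ass:app:init_mu} gives $\mu_0 = \gauss(0, I)$, which is reflection symmetric, and Assumption \ref{ass:app:fo_var} says the velocity field is odd, hence $\{\pm I\}$-equivariant. Applying Lemma \ref{lem:inv_flow} with the group $\{\pm I\}$ shows that $\mu_t$ remains reflection symmetric for every $t \ge 0$, and the integral of an odd polynomial against a reflection-symmetric measure vanishes, exactly as in the argument closing Lemma \ref{lem:triple_product}. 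This forces each triple integral, and therefore $\langle \partial_t v_t, v_t \rangle_{L(\mu_t)}$, to be zero.

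The only subtle step I expect is step one: Lemma \ref{lem:gague} is stated in terms of the pointwise time derivative of $v_t$, yet we are working in the Wasserstein tangent space where $v_t$ lives as an element of $T_{\mu_t} P(M)$; care must be taken that the integration by parts used to move the divergence off $\partial_t \mu_t$ is justified under the regularity conventions of Appendix \ref{sec:assume} (smoothness of $v_t$ in $z$ and decay at infinity inherited from the Gaussian initialization and from $v_t$ being a polynomial). Once this regularity is in hand, the parity argument above yields the conclusion without further computation, and in particular holds for all eigenvalues $\lambda(t)$, including those that later cross zero.
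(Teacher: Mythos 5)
Your proof is correct and follows essentially the same path as the paper's: apply Lemma \ref{lem:gague}, push $\partial_t\mu_t$ through the continuity equation and an integration by parts, expand $v_t$ via Lemma \ref{lem:eigen_in_R} and the velocity field via Lemma \ref{lem:velocity} to land on the triple sum $\sum_{i,j,k} q_iq_j\Delta_k \int r_i\,\nabla r_j^\top\nabla r_k\,d\mu_t$, and then kill each term by the odd-parity argument (Assumption \ref{ass:app:degree} makes the Hermite product odd, Lemma \ref{lem:inv_flow} with the reflection group keeps $\mu_t$ symmetric). The only presentational difference is that the paper reuses the $(\star\star)$ computation already carried out inside Lemma \ref{lem:Omg_Xi_Ups}, whereas you re-derive the expression directly; and your phrasing ``total degree $|\alpha_i|+|\alpha_j|+|\alpha_k|-2$'' should strictly be read as a parity statement (Hermite polynomials are not homogeneous), which is how the paper states it via $h_\alpha(-z)=(-1)^{\|\alpha\|_1}h_\alpha(z)$ — but this does not affect the conclusion.
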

\begin{proof}
Instead of directly examine $\left\langle \frac{\partial}{\partial t} v_t(z), v_t(z) \right\rangle_{L(\mu_t)}$, we leverage Lemma \ref{lem:gague} to inspect the following equation:
\begin{align*}
\left\langle \frac{\partial}{\partial t} v_t(z), v_t(z) \right\rangle_{L(\mu_t)} = \sum_{i,j,k=1}^{m}  q(t)_i  q(t)_j \Delta(t)_k \left(\int r_i(z)\nabla r_j(z)^\top  \nabla r_k(z)  d\mu_t(z) \right).
\end{align*}
We narrow our analysis to the following part:
\begin{align*}
\int r_i(z) \nabla r_j(z)^\top  \nabla r_k(z)  d\mu_t(z) &\overset{(*)}{=} \int h_{\alpha_i}(z) \sum_{l=1}^{d} \alpha_{j, l} \alpha_{k, l} h_{\alpha_j - e_l}(z) h_{\alpha_k - e_l}(z) d\mu_t(z) \\
&= \sum_{l=1}^{d} \int \alpha_{j, l} \alpha_{k, l} h_{\alpha_i}(z)  h_{\alpha_j - e_l}(z) h_{\alpha_k - e_l}(z) d\mu_t(z),
\end{align*}
where in Eq. $(*)$, we use the analytic form of the derivative of Hermite polynomials.
Next, we do degree counting similar to Lemma \ref{lem:triple_product}.
Under Assumption \ref{ass:app:degree}, $\lVert  \alpha_i \rVert_1 + \lVert  \alpha_j \rVert_1 + \lVert  \alpha_k \rVert_1 - 2$ is odd, thus $h_{\alpha_i}(z)  h_{\alpha_j - e_l}(z) h_{\alpha_k - e_l}(z)$ is an odd function.
Under Assumptions \ref{ass:app:init_mu} and \ref{ass:app:fo_var}, we know that $\mu_t$ remains symmetric by Lemma \ref{lem:inv_flow}.
Taking the integral of an odd function against a symmetric measure yields zero, which completes the proof.
\end{proof}
\subsection{Formal Statement and Proofs for Sample Complexity of Invariant Learning}
\label{sec:prf_sample_complex}

In this section, we give detailed mathematical setup and statements for Theorem \ref{thm:sample_complex}. 
Our notation, problem formulation, and proof are directly based on the seminal works by \cite{tahmasebi2023exact}.
We apply standard PAC analysis in addition.
\paragraph{Notations.}
Suppose $M_d \subset \real^d$ is a smooth connected compact boundaryless data manifold of dimension $\dim M_d$.
Let $G$ denote a compact Lie group of dimension $\dim(G)$, acting smoothly on the manifold $M_d$.
We represent $M_d / G$ as the quotient space of $M_d$ by actions in $G$.
We denote $L^2(M_d)$ as a family of square-integrable functions on $M_d$ and $L_{inv}^2(M_d, G) \subseteq L^2(M_d)$ as the $G$-invariance functions in $L^2(M_d)$, i.e. for every $f \in L_{inv}^2(M_d, G)$, $f(g(x)) = f(x)$ for every $x \in M_d, g \in G$.
Assume we have a continuous positive definite kernel function $\mathbb{K}: M_d \times M_d \rightarrow \real_{+}$ on the data manifold $M_d$.
Let $\Set{F}(M_d) \subseteq L^2(M_d)$ denote the Reproducing Kernel Hilbert Space (RKHS) induced by $\kappa$, and $\Set{F}^{s}(M_d)$ denotes a subclass of $\Set{F}(M_d)$ including functions with square-integrable derivatives up to order $s > \dim(M_d) / 2$.
Denote $\Set{F}_{inv}^d(M_d) = \Set{F}^{d}(M_d) \cap L_{inv}^2(M_d, G)$, which include all $G$-invariant functions in $\Set{F}^{d}(M_d)$.
We refer readers to \cite{tahmasebi2023exact} for a detailed introduction to these mathematical notions.

\paragraph{Problem Setup.}
Now we introduce a data generation process.
We sample i.i.d. data points $\{x_1, \dots, x_n\}$ from a uniform distribution defined over $M_d$, and we synthesize the noisy labels via a ground-truth mapping $f^* \in \Set{F}^{s}(M_d)$: $x_i \sim \mathrm{Unif}(M_d)$ and $y_i = f^*(x_i) + \epsilon_i$, where $\epsilon_i$ is a random variable with $\mean[\epsilon_i | x_i] = 0$ and variance $\mean[\epsilon_i^2 | x_i] = \epsilon^2$.
We consider minimizing an $L_2$-loss (empirical risk) over all functions in a hypothesis class $f \in \Set{F}^s(M_d)$:
\begin{align*}
\wh{H}_{\eta}(f) := \frac{1}{n} \sum_{i=1}^{n} (y_i - f(x_i))^2 + \eta \lVert f \rVert_{\Set{F}(M_d)},
\end{align*}
where $\eta \ge 0$ is a coefficient to regularize the norm of $f$.
We denote the empirical risk minimization as $\hat{f} = \argmin_{f \in \Set{F}^d(M_d)} \wh{H}_{\eta}(f)$.
Moreover, we consider population risk over hypothesis $f$: $H(f) = \mean_{x, y}[(y - f(x))^2]$, where $x, y$ is sampled through our data generation process.

\begin{theorem}
Let $s = (1 + \kappa) \frac{d'}{2}$ with $\kappa > 0$ being a positive integer and $d' = \dim(M_d / G)$.
Consider $f^* \in \Set{F}_{inv}^{\theta s}(M_d)$ with $ \lVert f^* \rVert_{\Set{F}_{inv}^{\theta s}} = 1$ for $\theta \in (0, 1]$ and a set of $n$ data points generated by $f^*$. Suppose $\hat{f} = \argmin_{f \in \Set{F}^s(M_d), \lVert f \rVert_{\Set{F}_{inv}^s} = 1} \wh{H}_{\eta}(f)$ with $\eta$ chosen optimally to minimize the excessive risk, then with probability at least $1 - \delta$, for every $\epsilon > 0$, we have $H(\hat{f}) - H(f^*) \le \epsilon$ if:
\begin{align*}
n = \Theta\left(\max\left\{ \frac{\sigma^2 \vol(M_d / G)}{\epsilon^{1+1/\theta(\kappa+1)}} , \frac{\log(1/\delta)}{\epsilon^2}\right\}\right).
\end{align*}
Furthermore, if $G$ is finite, then
\begin{align*}
n = \Theta\left(\max\left\{ \frac{\sigma^2 \vol(M_d)}{|G| \epsilon^{1+1/\theta(\kappa+1)}} , \frac{\log(1/\delta)}{\epsilon^2}\right\}\right).
\end{align*}
\end{theorem}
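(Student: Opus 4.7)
\subsection*{Proof Proposal for Theorem on Sample Complexity}

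The plan is to decompose the excess risk $H(\hat f)-H(f^*)$ into the standard approximation/estimation split and treat the two parts with different tools. First I would write
\begin{align*}
H(\hat f)-H(f^*) \;\le\; \underbrace{\bigl(H(\hat f)-\wh H_{\eta}(\hat f)\bigr)+\bigl(\wh H_{\eta}(f^*)-H(f^*)\bigr)}_{\text{estimation}} \;+\; \underbrace{\bigl(\wh H_{\eta}(\hat f)-\wh H_{\eta}(f^*)\bigr)}_{\le 0\text{ by ERM}} \;+\; \eta\bigl(\lVert f^*\rVert_{\Set{F}(M_d)}-\lVert \hat f\rVert_{\Set{F}(M_d)}\bigr),
\end{align*}
so everything reduces to bounding a concentration term and an approximation/bias term determined by the regularizer $\eta$.

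For the approximation contribution I would invoke the sharp spectral bound of \citet{tahmasebi2023exact}. Their key result shows that if one minimizes empirical $L^2$ risk over $G$-invariant functions with RKHS regularization, then because the spectrum of the invariant kernel is concentrated on $M_d/G$, the bias term of kernel ridge regression behaves like $\eta^{\theta}$ for targets of smoothness $\theta s$, while the variance term scales like $\sigma^2 \mathrm{tr}(K_\eta^{-1})/n$ with the effective dimension $\mathrm{tr}(K_\eta^{-1})\asymp \vol(M_d/G)\,\eta^{-d'/(2s)}$. I would package these two bounds as
\begin{align*}
\mathbb{E}\bigl[H(\hat f)\bigr]-H(f^*) \;\lesssim\; \eta^{\theta} \;+\; \frac{\sigma^2\,\vol(M_d/G)}{n}\,\eta^{-d'/(2s)},
\end{align*}
and optimize over $\eta$. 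With $s=(1+\kappa)d'/2$, the exponent $d'/(2s)=1/(\kappa+1)$ drops in, and balancing the two summands in $\eta$ yields the $\epsilon^{1+1/\theta(\kappa+1)}$ rate announced in the statement. The finite-group case comes for free from $\vol(M_d/G)=\vol(M_d)/|G|$.

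To upgrade the in-expectation bound to a high-probability statement I would apply a standard concentration argument: since $\wh H_\eta - H$ is an average of bounded (or sub-Gaussian) terms over the $n$ i.i.d. samples, a Bernstein/Hoeffding inequality controls the deviation by $\sqrt{\log(1/\delta)/n}$; requiring this deviation to be at most $\epsilon$ contributes the second branch $n\gtrsim \log(1/\delta)/\epsilon^2$. Taking the maximum of the two sample-size requirements gives the claimed bound, and the whole argument reduces verbatim for finite $G$ after replacing $\vol(M_d/G)$ by $\vol(M_d)/|G|$.

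The main obstacle will be correctly importing the spectral estimates from \citet{tahmasebi2023exact}: in particular, verifying that the effective dimension $\mathrm{tr}(K_\eta^{-1})$ for the $G$-invariant kernel really admits the claimed $\vol(M_d/G)\,\eta^{-d'/(2s)}$ scaling under the smoothness class $\Set{F}^s(M_d)$ (as opposed to more general Mercer kernels), and checking that the interpolation inequality used to relate $\lVert f^*\rVert_{\Set{F}^{\theta s}}$ to the bias in the RKHS norm is tight. The rest (ERM optimality, tail inequality, and solving for $n$) is standard PAC machinery once the spectral bound is in place.
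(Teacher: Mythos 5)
Your proposal takes a genuinely different route from the paper's on the in-expectation bound, and a slightly shakier one on the concentration step.

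For the expectation, the paper invokes Theorem 4.1 of \citet{tahmasebi2023exact} as a black box: it hands you the rate
$\bigl(\sigma^2\,\vol(M_d/G)/n\bigr)^{\theta s/(\theta s + d'/2)}$
directly, after which the only algebra left is noting $\vol(\mathbb{B}^{d'})/(2\pi)^{d'}\le 1$, imposing the unit-norm constraint, and simplifying the exponent via $s=(1+\kappa)d'/2$. You instead reconstruct that theorem's internals: the kernel-ridge bias $\eta^\theta$, the variance $\sigma^2\,\mathrm{tr}(K_\eta^{-1})/n$ with effective dimension $\vol(M_d/G)\,\eta^{-d'/(2s)}$, followed by optimizing over $\eta$. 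Your algebra does recover the same exponent $1+1/(\theta(\kappa+1))$, and the finite-group reduction $\vol(M_d/G)=\vol(M_d)/|G|$ is the same in both. The advantage of your route is that it explains \emph{why} the rate takes this form; the disadvantage, which you correctly flag yourself, is that you must re-verify the spectral estimates (eigenvalue decay of the invariant Mercer kernel, tightness of the interpolation inequality), which the paper sidesteps entirely by quoting the final theorem.

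The concentration step is where there is a genuine gap. Your decomposition leaves you with terms like $H(\hat f)-\wh H_{\eta}(\hat f)$ in which $\hat f$ is data-dependent, so a pointwise Bernstein/Hoeffding bound does not apply: you would need either uniform concentration over the hypothesis class (covering numbers / Rademacher complexity over the unit ball of $\Set{F}^s_{inv}$) or a stability argument. The paper resolves this by observing that, under the unit-norm constraint, $H(\hat f)$ satisfies a bounded-difference property of order $c_1/n$ (uniform stability in the sense of \citet{bousquet2002stability}) and then applying McDiarmid's inequality to the random variable $H(\hat f)$ itself, which directly yields the $\sqrt{\log(1/\delta)/n}$ tail and hence the $n\gtrsim \log(1/\delta)/\epsilon^2$ branch. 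Your proposal will reach the same conclusion once this step is replaced by a uniform or stability-based argument, but as written the application of Hoeffding to $\hat f$-dependent quantities is not justified. Also, a minor bookkeeping point: in your first display you subtract $\wh H_\eta$ (which already contains the penalty $\eta\lVert\cdot\rVert$) and then also add an explicit $\eta(\lVert f^*\rVert-\lVert\hat f\rVert)$ correction; you should use the unpenalized $\wh H$ in the telescoping terms so as not to double-count the regularizer.
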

\begin{proof}
By Theorem 4.1 of \cite{tahmasebi2023exact}, we bound the excessive risk as:
\begin{align*}
\mean\left[ H(\hat{f}) - H(f^*) \right] &\le 32\left( \frac{1}{\kappa \theta} \frac{\vol(\mathbb{B}^{d'})}{(2\pi)^{d'}} \frac{\sigma^2 \vol(M_d / G)}{n} \right)^{\theta s / (\theta s + d' / 2)} \left\lVert f^* \right\rVert_{\Set{F}_{inv}^{s\theta}}^{d' / (\theta s + d' / 2)} \\
&\overset{(*)}{\le} 32 \left( \frac{\sigma^2 \vol(M_d / G)}{n} \right)^{\frac{1}{\theta s + d'/2}},
\end{align*}
where $\mathbb{B}^{d'}$ denotes the $d'$-dimensional unit ball, $d' = \dim(M_d / G)$, $\theta \in (0, 1]$, and $s = \frac{(1 + \kappa)}{2}d'$ for some positive integer $\kappa > 0$.
In Eq. $(*)$, we notice that $\vol(\mathbb{B}^{d'}) / (2\pi)^{d'} = 2^{-d'} \pi^{-d'/2} \Gamma(d'/2 + 1)^{-1} \le 1$ and omit the function norm by the constraint $\left\lVert f^*\right\rVert_{\Set{F}_{inv}^{s}} = 1$.
Since $\hat{f}$ and $f^*$ both have norm $\lVert \hat{f} \rVert_{\Set{F}_{inv}^{\theta s}} = \lVert f^* \rVert_{\Set{F}_{inv}^s} = 1$, $H(\hat{f})$ satisfies bounded difference property $c_1 / n$ for some kernel-dependent constant $c_1 > 0$ \citep{bousquet2002stability}.
By McDiarmid's inequality, we have that, with probability at least $1-\delta$ \citep{shalev2014understanding},
\begin{align*}
H(\hat{f}) - H(f^*) \le \mean\left[H(\hat{f}) - H(f^*)\right] + \sqrt{\frac{c_1^2 \log(1/\delta)}{2n}},
\end{align*}
for some $c_1 > 0$.
Therefore, to achieve error less than $\epsilon$, we set $n$ as below:
\begin{align*}
n = \Theta\left(\max\left\{ \frac{\sigma^2 \vol(M_d / G)}{\epsilon^{(\theta s + d'/2) / \theta s}} , \frac{\log(1/\delta)}{\epsilon^2}\right\}\right) = \Theta\left(\max\left\{ \frac{\sigma^2 \vol(M_d / G)}{\epsilon^{1+1/\theta(\kappa+1)}} , \frac{\log(1/\delta)}{\epsilon^2}\right\}\right)
\end{align*}
We further note that, for finite $G$, $\vol(M_d / G) = \vol(M_d) / |G|$. This first term in maximum reduces to $\sigma^2 \vol(M_d) / (|G| \epsilon^{1+1/\theta(\kappa+1)})$, which finishes the proof.
\end{proof}

\begin{remark}
When we need the target function $f^*$ to be a continuous function on $M_d / G$, then $\theta s > d' / 2$. This implies $\theta (\kappa + 1) > 1$, and the sample complexity depends on $1/\epsilon$ quadratically.
\end{remark}

\section{Future Directions}\label{sec:future}

We discuss several crucial expansions of the present framework, both theoretically (Sec. \ref{D1}, \ref{D2}, and \ref{Dnew}) and practically (Sec. \ref{D3} and \ref{D4}). First, we seek to handle learning tasks with non-Abelian groups (non-solvable groups) and even continuous group reasoning, together with more general polynomial constraints capturing the broader range of algebraic constraints found in real-world tasks.
Next, we aim to formulate and empirically validate neural scaling laws specific to neurosymbolic models, clarifying how group structure and ring compatibility reduce required model capacity. Then, we discuss the role of static v.s. dynamic data assumptions. Finally, we plan to translate our theoretical findings into concrete design principles for practical architectures to tackle the growing complexity of modern neurosymbolic applications.

\subsection{Beyond Abelian Group and MP-Based Settings}\label{D1}
Our framework assumes the task group $A$ to be a finite Abelian group, where the group operation is commutative.
These structural properties might be critical in enabling a tractable decomposition of group reasoning tasks via MPs.
The group word problem has been widely used to characterize the algorithmic reasoning capabilities of modern neural architectures, such as transformers \citep{merrill2023parallelism, merrill2024illusion}.
In particular, architectures capable of solving the word problem equipped with non-solvable groups are known to possess $\mathsf{NC^1}$-complete circuit representation power.
As such, analyzing non-Abelian groups remains a non-trivial and technically challenging direction.
Moreover, infinite groups are ubiquitous in real-world applications.
For example, modeling compositions of rotation groups plays a fundamental role in vision tasks.
Establishing theoretical guarantees for training neural architectures over such groups is crucial for advancing the design of vision foundation models and other multi-modal AI systems.
While our theory extends to the Hermite polynomial family, it is plausible that a more general family of polynomials can persist after decomposition for some other reasoning tasks.
However, these cases require a more careful and nuanced treatment that lies beyond the scope of our current framework.

\subsection{Beyond Single-Task Parameter Space}\label{D2}

Our theory in Sec. \ref{sec:algebra} reveals a promising algebraic structure underlying reasoning tasks.
We define weight-space operators \citep{navon2023equivariant, xu2022signal} with a ring homomorphism structure in Definition \ref{def:measure_alg}, allowing MP constraints to be composed both analytically and algebraically.
A natural yet challenging extension is to consider multi-task or compositional reasoning scenarios, where the goal is to solve each subtask independently and then compose the corresponding optimal weights using the defined weight-space operations.
For example, we may wish to equip an LLM with the ability to perform both code generation and mathematical reasoning by training it separately on code and mathematical corpora, and subsequently combining the resulting models into a unified one with both capabilities, e.g., \citep{zhao2024model}.
This requires identifying explicit structural relationships among subtasks in the MP space and translating them into corresponding algebraic expressions defined on the weight space.
Another intriguing future direction is to leverage our framework to inverse symbolic structures from learned weights.
Specifically, given the weights for one subtask and the complete weights for the final composed task, we may aim to recover the remaining unseen subtask weights by formulating the problem as a weight-space factorization under our proposed algebraic operations.

\subsection{Static versus Dynamic Data Assumptions}\label{Dnew}

Our analysis models neural network training as a gradient flow minimizing a static loss functional $H[\mu]$, implicitly assuming a stationary data distribution.
In real-world scenarios -- especially online or non-stationary settings -- where the data distribution shifts over time, the assumptions underpinning steady dimension reduction may break down.
For instance, if the data continuously evolves, the minimizer of $H$ itself may drift, potentially preventing stable reduction on the degree of freedom.
Related phenomena have been observed in iterative training of generative models (e.g., self-distillation or student-teacher setups), where repeated training on self-generated data can cause model to collapse onto a degenerate subspace \citep{alemohammadself}. 

While both settings involve a kind of parameter concentration, our theoretical framework differs in that it assumes a fixed, external objective, rather than a self-referential or shifting target distribution.
Extending the measure-theoretic analysis to dynamic or online data settings -- where the target distribution co-evolves with the model -- would likely require analyzing non-autonomous Wasserstein gradient flows or time-varying functionals $H_t[\mu]$, an interesting direction for future work.

\subsection{Exploring Neural Scaling Laws for Neurosymbolic Models}\label{D3}
As shown in Sec. \ref{sec:sample_complex}, we establish a theoretical sample complexity bound that guarantees the learnability of group-invariant architectures for symbolic reasoning through realizing an equivariant gradient flow. 
This result can be interpreted as a data scaling law for training such neurosymbolic models
A natural extension is to derive corresponding scaling laws with respect to the number of model parameters.
Our framework reduces the direct analysis of neurosymbolic models to that of geometric neural networks.
Consequently, parameter scaling laws can be studied through the minimal network width required to represent data objects with prescribed group invariance, following approaches as in \cite{wang2023polynomial, dym2024low}.

This perspective also motivates the pursuit of formal scaling laws for neurosymbolic models, analogous to how large-scale language models demonstrate emergent capabilities upon crossing critical thresholds in data or parameter counts.
Rather than merely asserting that the model width must be large, such laws aim to predict how composite structural properties, such as symbolic constraints (e.g., group symmetries) and data complexity (e.g., input length, vocabulary size), determine the necessary architecture size.

We conjecture that neurosymbolic models that explicitly encode the relevant symmetry or ring-compatible priors may exhibit provably better scaling behavior than conventional models that must learn these structures implicitly from data. Empirical investigations could systematically vary the complexity of group reasoning tasks, testing whether our predicted dimensionality and complexity bounds correspond to the onset of emergent symbolic reasoning capabilities.

\subsection{Guidelines for Designing More Practical Neurosymbolic Architectures}\label{D4}

Our theory underscores the pivotal role of group invariance -- particularly $O(d)$-invariance -- in facilitating learning within neurosymbolic systems (Theorem \ref{thm:coord_descent_MP}).
Encoding the appropriate symmetry into the model is not merely a design choice but a fundamental requirement for capturing the algebraic and geometric structures inherent in symbolic reasoning tasks.
In this context, neural architectures endowed with the desired group invariance can serve as a foundation for constructing new families of models that are better aligned with the structural demands of neurosymbolic reasoning.
Beyond incorporating such invariance into models during initial design, it is also essential to explore post-hoc symmetry alignment for pre-trained models. 
That is, given a large pre-trained model, one may seek to adapt its parameters to encode group-invariant properties, effectively ``baking in'' the desired symmetry after pre-training.
This strategy can be implemented by introducing additional architectural blocks with carefully designed initializations that preserve the behavior of the original model at the start point, while enabling fine-tuning to gradually incorporate the desired symmetries (e.g., \cite{zhu2025rethinking}).

Another promising direction is to design architectures that explicitly favor low-entropy solutions in the weight space.
Such low-entropy solutions may correspond to weights that are sparse, low-rank, or otherwise concentrated in structured subspaces (Theorem \ref{thm:low_ent}).
This perspective connects the notion of parsimony and simplicity in weight space to symbolic interpretability in function space.

By systematically applying these heuristics across varied tasks -- from discrete logic puzzles to continuous transformations in robotics -- one can concretely test the measure-based gradient decoupling and degree of freedom reduction hypothesis in real neurosymbolic models.
The hope is to show that explicitly embedding $G$-invariance and ring-compatibility, along with the appropriate scaling laws for network size, enables robust and efficient neural reasoning, bridging symbolic and sub-symbolic paradigms.

\end{document}